\documentclass{article}

\PassOptionsToPackage{authoryear}{natbib}

\usepackage[preprint]{neurips_2021}




\usepackage[utf8]{inputenc} 
\usepackage[T1]{fontenc}    
\usepackage[parfill]{parskip}
\usepackage{amsmath,amsthm,amssymb,bm,bbm}
\usepackage{mathtools}
\usepackage{cases}

\usepackage{microtype}

\usepackage{algorithm,algorithmic}
\usepackage{color}
\usepackage{appendix}

\usepackage{url}
\usepackage[colorlinks,citecolor=blue,urlcolor=blue,linkcolor=blue,linktocpage=true]{hyperref}

\usepackage{etoolbox}

\usepackage{cleveref}
\crefformat{equation}{(#2#1#3)}
\crefrangeformat{equation}{(#3#1#4) to~(#5#2#6)}
\crefname{equation}{}{}
\Crefname{equation}{}{}

\crefname{definition}{\textbf{definition}}{definitions}
\Crefname{definition}{Definition}{Definitions}
\crefname{assumption}{\textbf{assumption}}{assumptions}
\Crefname{assumption}{Assumption}{Assumptions}


\definecolor{maroon}{RGB}{192,80,77}

\newcommand{\explain}[2]{\underset{\mathclap{\overset{\uparrow}{#2}}}{#1}}
\newcommand{\explainup}[2]{\overset{\mathclap{\underset{\downarrow}{#2}}}{#1}}

\newtheorem{theorem}{Theorem}
\newtheorem{lemma}[theorem]{Lemma}

\newtheorem{definition}[theorem]{Definition}

\newtheorem{assumption}{Assumption}


\newcommand{\argmax}{\mathop{\mathrm{argmax}}}

\def\E{\mathbb{E}}
\def\P{\mathbb{P}}

\def\R{\mathbb{R}}
\def\cA{\mathcal{A}}

\def\cN{\mathcal{N}}

\usepackage{etoolbox}
\usepackage{comment}
\usepackage{caption}
\usepackage{subcaption}
\usepackage{isomath}
\usepackage{indentfirst}
\usepackage{enumerate}
\usepackage{enumitem}
\setlist{leftmargin=10mm}
\usepackage{mathrsfs}
\usepackage{multirow}
\usepackage{amstext}
\def\ind{\mathbbm{1}}

\newbool{twocol}
\setbool{twocol}{false}

\newbool{compact}  
\setbool{compact}{true}

\usepackage{booktabs}       
\usepackage{amsfonts}       
\usepackage{nicefrac}       
\usepackage{xcolor}         

\title{Logarithmic Regret in Feature-based Dynamic Pricing}

%

\author{%
	Jianyu Xu \\
	Department of Computer Science\\
	University of California, Santa Barbara\\
	Santa Barbara, CA 93106 \\
	\texttt{xu\_jy15@ucsb.edu} \\
	\And
	Yu-Xiang Wang \\
	Department of Computer Science \\
	University of California, Santa Barbara\\
	Santa Barbara, CA 93106 \\
	\texttt{yuxiangw@cs.ucsb.edu} \\
}

\begin{document}
	
\maketitle

\begin{abstract}
	Feature-based dynamic pricing is an increasingly popular model of setting prices for highly differentiated products with applications in digital marketing, online sales, real estate and so on. The problem was formally studied as an online learning problem \citep{javanmard2019dynamic} where a seller needs to propose prices \emph{on the fly} for a sequence of $T$ products based on their features $x$ while having a small \emph{regret} relative to the best ---``omniscient’’--- pricing strategy she could have come up with in hindsight. We revisit this problem and provide two algorithms (EMLP and ONSP) for stochastic and adversarial feature settings, respectively, and prove the optimal $O(d\log{T})$ regret bounds for both. In comparison, the best existing results are $O\left(\min\left\{\frac{1}{\lambda_{\min}^2}\log{T}, \sqrt{T}\right\}\right)$ and $O(T^{2/3})$ respectively, with $\lambda_{\min}$ being the smallest eigenvalue of $\mathbb{E}[xx^T]$ that could be arbitrarily close to $0$.  We also prove an $\Omega(\sqrt{T})$ information-theoretic lower bound for a slightly more general setting, which demonstrates that ``knowing-the-demand-curve'' leads to an exponential improvement in feature-based dynamic pricing. 
	\bigskip
	
	{\textbf{Keywords}: dynamic pricing, online learning, adversarial features, optimal regret, affine invariant, distribution-free.}
\end{abstract}
\newpage
\tableofcontents
\newpage

\section{Introduction}
\label{sec_introduction}
The problem of pricing — to find a high-and-acceptable price — has been studied since \citet{cournot1897researches}. In order to locate the optimal price that maximizes the revenue, a firm may adjust their prices of products frequently, which inspires the \emph{dynamic pricing} problem. Existing works \citep{kleinberg2003value,broder2012dynamic, chen2013simple, besbes2015surprising} primarily focus on pricing a single product, which usually will not work well in another setting when thousands of new products are being listed every day with no prior experience in selling them. Therefore, we seek methods that approach an acceptable-and-profitable price with only observations on this single product and some historical selling records of other products.

In this work, we consider a ``feature-based dynamic pricing'' problem, which was studied by \citet{amin2014repeated, cohen2020feature_journal, javanmard2019dynamic}. In this problem setting, a sales session (product, customer and other environmental variables) is described by a feature vector, and the customer's \emph{expected} valuation is modeled as a linear function of this feature vector.


\fbox{\parbox{0.97\textwidth}{Feature-based dynamic pricing. For $t=1,2,...,T:$
		\small
		\noindent
		\begin{enumerate}[leftmargin=*,align=left]
			\setlength{\itemsep}{0pt}
			\item A feature vector $x_t\in\R^{d}$ is revealed that describes a sales session (product, customer and context).			
			\item The customer valuates the product as $w_t = x_t^{\top}\theta^{*} + N_t$.
			\item The seller proposes a price $v_t>0$ concurrently (according to $x_t$ and historical sales records).
			\item The transaction is successful if $v_t  \leq w_t$, i.e., the seller gets a reward (payment) of $r_t=v_t\cdot\ind(v_t\leq w_t)$.
		\end{enumerate}
	}
}

Here $T$ is unknown to the seller (and thus can go to infinity), $x_t$'s can be either stochastic (e.g., each sales session is drawn i.i.d.) or adversarial (e.g., the sessions arrive in a strategic sequence), $\theta^{*}\in\R^{d}$ is a fixed parameter for all time periods, $N_t$ is a zero-mean noise, and $\ind_t=\ind(v_t\leq w_t)$ is an indicator that equals $1$ if $v_t\leq w_t$ and $0$ otherwise.  In this online-fashioned setting, we only see and sell one product at each time. Also, the feedback is \emph{Boolean Censored}, which means we can only observe $\ind_t$ instead of knowing $w_t$ directly. The best pricing policy for this problem is the one that maximizes the \emph{expected} reward, and the \emph{regret} of a pricing policy is accordingly defined as the difference of expected rewards between this selected policy and the best policy.

\ifbool{compact}{\noindent\textbf{Summary of Results.}}{\paragraph{Summary of Results.}} Our contributions are threefold.

\begin{enumerate}
	\item When $x_t$'s are independently and identically distributed (i.i.d.) from an unknown distribution, we propose an ``Epoch-based Max-Likelihood Pricing (EMLP)'' algorithm that guarantees a regret bound at $O(d\log{T})$. The design of EMLP is similar to that of the RMLP algorithm in \citet{javanmard2019dynamic}, but our new analysis improves their regret bound at $O(\sqrt{T})$ when $\E[xx^{\top}]$ is near singular. 
	\item When $x_t$'s are adversarial, we propose an ``Online-Newton-Step Pricing (ONSP)'' algorithm that achieves $O(d\log{T})$ regret on constant-level noises for the first time, which exponentially improves the best existing result of $O(T^{2/3})$ \citep{cohen2020feature_journal}.\footnote{Previous works \citep{cohen2020feature_journal, krishnamurthy2020contextual} did achieve polylog regrets, but only for negligible noise with $\sigma=O(\frac{1}{T\log{T}})$.}  
	\item 
	Our methods that achieve logarithmic regret require knowing the exact distribution of $N_t$ in advance, as is also assumed in \citet{javanmard2019dynamic}.
	We prove an $\Omega(\sqrt{T})$ lower bound on the regret if $N_t\sim\cN(0,\sigma^2)$ where $\sigma$ is \emph{unknown}, even with $\theta^*$ given and $x_t$ fixed for all $t$. 
\end{enumerate}

The $O(\log{T})$ regret of EMLP and ONSP meets the information-theoretical lower bound \citep[Theorem 5,][]{javanmard2019dynamic}. In fact, the bound is optimal even when $w_t$ is revealed to the learner \citep{mourtada2019exact}. From the perspective of characterizing the hardness of dynamic pricing problems, we generalize the classical results on ``The Value of Knowing a Demand Curve'' \citep{kleinberg2003value} by further dividing the random-valuation class with an exponential separation of: (1) $O(\log{T})$ regret for knowing the \emph{demand curve} exactly (even with adversarial features), and (2) $\Omega(\sqrt{T})$ regret for 
\emph{almost} knowing the \emph{demand curves} (up to a one-parameter parametric family). 





\section{Related Works}
\vspace{-0.5em}
\label{sec_related_works}
In this section, we discuss our results relative to existing works on feature-based dynamic pricing, and highlight the connections and differences to the related settings of contextual bandits and contextual search (for a broader discussion, see Appendix~\ref{appendix_other_related_works}).  


\ifbool{compact}{\noindent\textbf{Feature-based Dynamic Pricing.}}{\paragraph{Feature-based Dynamic Pricing.}}
There is a growing body of work on dynamic pricing with linear features \citep{amin2014repeated,qiang2016dynamic,cohen2020feature_journal,javanmard2019dynamic}. Table \ref{table_related_works_and_regret_bounds} summarizes the differences in the settings and results\footnote{We only concern the dependence on $T$ since there are various different assumptions on $d$.}.
\begin{table*}[t]
	\centering
	\caption{Related Works and Regret Bounds w.r.t. $T$}
	\label{table_related_works_and_regret_bounds}
	\resizebox{\textwidth}{!}{
		\begin{tabular}{|c|c|c|c|p{2.5in}|c|}
			\hline
			Algorithm & Work                     & Regret (upper) bound         & Feature & Noise       \\ \hline
			LEAP      & \citep{amin2014repeated} & $\tilde{O}(T^{\frac{2}{3}})$ & i.i.d.  & Noise-free  \\ \hline
			EllipsoidPricing &  \citep{cohen2020feature_journal} & $O(\log{T})$         & adversarial & Noise-free \\ \hline
			EllipsoidEXP4 &  \citep{cohen2020feature_journal} & $\tilde{O}(T^{\frac{2}{3}})$ & adversarial & Sub-Gaussian  \\ \hline
			PricingSearch & \citep{leme2018contextual} & $O(\log\log(T))$ & adversarial & Noise-free \\ \hline
			\multirow{2}{*}{RMLP} & \multicolumn{1}{c|}{\multirow{2}{*}{ \citep{javanmard2019dynamic}}} & $O(\log{T}/C_{\min}^2)$$^\dagger$                 & \multirow{2}{*}{i.i.d.} & \multirow{2}{*}{\begin{tabular}[c]{@{}l@{}}Log-concave, distribution-known\end{tabular}} \\ \cline{3-3} \cline{6-6} 
			& \multicolumn{1}{c|}{}  & $O(\sqrt{T})$      &      &    \\ \hline
			RMLP-2  & \citep{javanmard2019dynamic}   & $O(\sqrt{T})$  & i.i.d.  & Known parametric family of log-concave. \\ \hline
			ShallowPricing & \citep{cohen2020feature_journal} & \multirow{2}{*}{$O(poly(\log{T}))$} & \multirow{2}{*}{adversarial} & \multirow{2}{*}{Sub-Gaussian, known $\sigma=O(\frac{1}{T\log{T}})$} \\ \cline{1-2}
			CorPV          & \citep{krishnamurthy2020contextual}   &                                            &                              &                                                            \\ \hline
			Algorithm 2 (MSPP) & \citep{liu2021optimal} & $O(\log\log(T))$ & adversarial & Noise-free \\ \hline
			\textbf{EMLP} & This paper & $O(\log T)$ & i.i.d. & Strictly log-concave, distribution-known \\ \hline
			\textbf{ONSP} & This paper & $O(\log{T})$ & adversarial & Strictly log-concave, distribution-known \\ \hline
	\end{tabular}}
	\footnotesize{$^\dagger$ $C_{\min}$ is the restricted eigenvalue condition. It reduces to the smallest eigenvalue of $\E[xx^{\top}]$ in the low-dimensional case we consider.}\\
\end{table*}
Among these work, our paper directly builds upon \citep{cohen2020feature_journal} and \citep{javanmard2019dynamic}, as we share the same setting of online feature vectors, linear and noisy valuations and Boolean-censored feedback.  Relative to the results in \citep{javanmard2019dynamic}, we obtain $O(d\log T)$ regret under weaker assumptions on the sequence of input features --- in both distribution-free stochastic feature setting and the adversarial feature setting. 
It is to be noted that \citep{javanmard2019dynamic} also covers the sparse high-dimensional setting, and handles a slightly broader class of demand curves. Relative to \citep{cohen2020feature_journal}, in which the adversarial feature-based dynamic pricing was first studied, 
our algorithm ONSP enjoys the optimal $O(d\log T)$ regret when the noise-level is a constant. In comparison, \citet{cohen2020feature_journal} reduces the problem to contextual bandits and applies the (computationally inefficient) ``EXP-4'' algorithm \citep{auer2002nonstochastic} to achieve a $\tilde{O}(T^{2/3})$ regret. The ``bisection'' style-algorithm in both \citet{cohen2020feature_journal} and \citet{krishnamurthy2020contextual} could achieve $\tilde{O}(poly(d) poly\log(T))$ regrets but requires a small-variance subgaussian noise satisfying $\sigma = O(\frac{1}{T\log{T}})$. 


\ifbool{compact}{\noindent\textbf{Lower Bounds.}}{\paragraph{Lower Bounds.}} Most existing works focus on the lower regret bounds of non-feature-based models. \citet{kleinberg2003value} divides the problem setting as fixed, random, and adversarial valuations, and then proves each a $\Theta(\log\log{T})$, $\Theta(\sqrt{T})$, and $\Theta(T^{2/3})$ regret, respectively. \citet{broder2012dynamic} further proves a $\Theta(\sqrt{T})$ regret in general parametric valuation models. In this work, we generalize the methods of \citet{broder2012dynamic} to our feature-based setting and further narrow it down to a linear-feature Gaussian-noisy model. As a complement to \citet{kleinberg2003value}, we further separate the exponential regret gap between: (1) $O(\log{T})$ of the hardest (adversarial feature) totally-parametric model, and (2) $\Omega(\sqrt{T})$ of the simplest (fixed known expectation) unknown-$\sigma$ Gaussian model.

\ifbool{compact}{\noindent\textbf{Contextual Bandits.}}{\paragraph{Contextual Bandits.}}
For readers familiar with the online learning literature, our problem can be reduced to a contextual bandits problem \citep{langford2007epoch,agarwal2014taming} by discretizing the prices. But this reduction only results in $O(T^{2/3})$ regret, as it does not capture the special structure of the feedback: \emph{an accepted price indicates the acceptance of all lower prices}, and vise versa.  Moreover, when comparing to linear bandits \citep{chu2011contextual}, it is the valuation instead of the expected reward that we assume to be linear.

\ifbool{compact}{\noindent\textbf{Contextual Search.}}{\paragraph{Contextual Search.}}
Feature-based dynamic pricing is also related to the contextual search problem \citep{lobel2018multidimensional, leme2018contextual, liu2021optimal, krishnamurthy2020contextual}, which often involves learning from Boolean feedbacks, sometimes with a ``pricing loss'' and ``noisy'' feedback. These shared jargons make this problem \emph{appearing} very similar to our problem. However, except for the noiseless cases \citep{lobel2018multidimensional, leme2018contextual}, contextual search algorithms, even with ``pricing losses'' and ``Noisy Boolean feedback'' \citep[e.g.,][]{liu2021optimal}, do \emph{not} imply meaningful regret bounds in our problem setup due to several subtle but important differences in the problem settings. Specifically, the noisy-boolean feedback model of \citep{liu2021optimal} is about randomly toggling the ``purchase decision'' determined by the \emph{noiseless} valuation $x^{\top}\theta^*$ with probability $0.5-\epsilon$. This is incompatible to our problem setting where the purchasing decision is determined by a noisy valuation $x^{\top}\theta^* + \text{Noise}$.
 Ultimately, in the setting of \citep{liu2021optimal}, the optimal policy alway plays $x^{\top}\theta^*$, but our problem is harder in that we need to exploit the noise and the optimal price could be very different from $x^{\top}\theta^*$. \footnote{As an explicit example, suppose the valuation $x^{\top}\theta^*=0$, then the optimal price must be $>0$ in order to avoid zero return. }
\citet{krishnamurthy2020contextual} also discussed this issue explicitly and considered the more natural noisy Boolean feedback model studied in this paper. Their result, similar to \citet{cohen2020feature_journal}, only achieves a logarithmic regret when the noise on the valuation is vanishing in an $\tilde{O}(1/T)$ rate.

\section{Problem Setup}
\label{sec_preliminary}
\ifbool{compact}{\noindent\textbf{Symbols and Notations.}}{\paragraph{Symbols and Notations.}}
Now we introduce the mathematical symbols and notations involved in the following pages. The game consists of $T$ rounds. 
$x_{t}\in \R^d$, $v_{t}\in \R_+$ and  $N_{t}\in\R$ denote the feature vector, the proposed price and the noise respectively at round $t=1,2,...,T$.\footnote{In an epoch-design situation, a subscript $(k,t)$ indicates round $t$ of epoch $k$.} We denote the product $u_t:=x_t^{\top}\theta^{*}$ as an \emph{expected valuation}. At each round, we receive a payoff (reward) $r_{t}=v_t\cdot\ind_t$, where the binary variable $\ind_{t}$ indicates whether the price is accepted or not, i.e.,  $\ind_{t}=\mathbf{1}(v_t\leq w_t)$. As we may estimate $\theta^{*}$ in our algorithms, we denote $\hat{\theta}_t$ as an estimator of $\theta^{*}$, which we will formally define in the algorithms.
Furthermore, we denote some functions that are related to noise distribution: $F(\omega)$ and $f(\omega)$ denote the cumulative distribution function (CDF) and probability density function (PDF) sequentially. We know that $F'(\omega)=f(\omega)$ if we assume differentiability.
To concisely denote all data observed up to round $\tau$ (i.e., feature, price and payoff of all past rounds), we define $hist(\tau)=\{(x_{t}, v_{t}, \ind_{t}) \text{ for }t=1,2,...,\tau\}$. $hist(\tau)$ represents the \emph{transcript} of all observed random variables before round $(\tau+1)$.
	
	We define
	\begin{equation}
	l_{t}(\theta):=-\ind_t\cdot\log\big(1-F(v_t-x_t^{\top}\theta)\big)-(1-\ind_t)\log\big(F(v_t-x_t^{\top}\theta)\big)
	\label{equation_log_likelihood_function}
	\end{equation}
	as a negative log-likelihood function at round $t$. Also, we define an expected log-likelihood function $L_t(\theta)$:
	\begin{equation}
	    L_t(\theta):=\E_{N_t}[l_t(\theta)|x_t]
	    \label{equation_expected_log_likelihood}
	\end{equation}
	Notice that we will later define an $\hat{L}_{k}(\theta)$ which is, however, not an expectation. 

\ifbool{compact}{\noindent\textbf{Definitions of Key Quantities.}}{\paragraph{Definitions of Key Quantities.}}
We firstly define an \emph{expected reward} function $g(v,u)$.
\ifbool{twocol}{
\begin{equation}
\begin{aligned}
g(v, u):=&\mathbb{E}[r_t|v_t=v, x_t^{\top}\theta^{*}=u]\\
=&v\cdot P[v\leq x_t^{\top}\theta^{*}+N_t]\\
=&v\cdot(1-F(v-u)).
\end{aligned}
\label{equ_def_gvu}
\end{equation}
}{
\begin{equation}
    g(v, u):=\mathbb{E}[r_t|v_t=v, x_t^{\top}\theta^{*}=u]=v\cdot P[v\leq x_t^{\top}\theta^{*}+N_t]=v\cdot(1-F(v-u)).
    \label{equ_def_gvu}
\end{equation}
}
This indicates that if the expected valuation is $u$ and the proposed price is $v$, then the (conditionally) expected reward is $g(v,u)$. Now we formally define the \emph{regret} of a policy (algorithm) $\cA$ as is promised in Section \ref{sec_introduction}.

\begin{definition}[Regret]
	Let $\cA: \R^{d}\times\left(\R^{d},\R,\{0,1\}\right)^{t-1}\rightarrow\R$ be a policy of pricing, i.e. $\cA(x_t, hist(t-1)) = v_t$. The regret of $\cA$ is defined as follows.
	{\small
	\begin{equation}
	\begin{aligned}
	Reg_{\cA}
	=&\sum_{t=1}^{T}\max_{v}g(v,x_t^{\top}\theta^{*})-g(\cA(x_t,hist(t-1)), x_t^{\top}\theta^{*}).
	\end{aligned}
	\label{Reg_A}
	\end{equation}
}
	Here $hist(t-1)$ is the historical records until $(t-1)^{\text{th}}$ round.
\end{definition}

\ifbool{compact}{\noindent\textbf{Summary of Assumptions.}}{\paragraph{Summary of Assumptions.}}
We specify the problem settings by proposing three assumptions.

\begin{assumption}[Known, bounded, strictly log-concave distribution]
	The noise $N_t$ is independently and identically sampled from a distribution whose CDF is $F$. Assume that $F\in \mathbb{C}^{2}$ is strictly increasing and that $F$ and $(1-F)$ are strictly log-concave. Also assume that $f$ and $f'$ are bounded, and denote $B_f:=\sup_{\omega\in\R}{f(\omega)}, B_{f'}:=\sup_{\omega\in\R}|f'(\omega)|$ as two constants.
	\label{assumption_strictly_log_concave}
\end{assumption}

\begin{assumption}[Bounded convex parameter space]\label{assumption_convex_feature_and_parameter_set}
	The true parameter $\theta^{*}\in\mathbb{H}$, where $\mathbb{H}\subseteq\{\theta: ||\theta||_2\leq B_1\}$ is a bounded convex set and $B_1$ is a constant. Assume $\mathbb{H}$ is known to us (but $\theta^*$ is not).
\end{assumption}

\begin{assumption}[Bounded feature space]
\label{assumption_bounded_x_set}
    Assume $x_t\in D\subseteq\{x: ||x||_2\leq B_2\}$, $\forall t=1,2,\ldots, T$. Also, $0\leq x^{\top}\theta\leq B, \forall x\in D, \forall\theta\in\mathbb{H}$, where $B=B_1\cdot B_2$ is a constant.
\end{assumption}

Assumption \ref{assumption_convex_feature_and_parameter_set} and \ref{assumption_bounded_x_set} are mild as we can choose $B_1$ and $B_2$ large enough. In Section \ref{subsec_alg_random}, we may add further complement to Assumption \ref{assumption_bounded_x_set} to form a stochastic setting. Assumption \ref{assumption_strictly_log_concave} is stronger since we might not know the exact CDF in practice, but it is still acceptable from an information-theoretic perspective. There are at least three reasons that lead to this assumption: Primarily, this is necessary if we hope to achieve an $O(\log(T))$ regret. We will prove in Section \ref{sec_lower} that an $\Omega(\sqrt{T})$ is unavoidable if we cannot know one parameter exactly. Moreover, the pioneering work of \citet{javanmard2019dynamic} also assumes a known noise distribution with log-concave CDF, and many common distributions are actually strictly log-concave, such as Gaussian and logistic.\footnote{In fact, $F$ and $(1-F)$ are both log-concave if its PDF is log-concave, according to Prekopa's Inequality.} Besides, although we did not present a method to precisely estimate $\sigma$ in this work, it is a reasonable algorithm to replace  with a plug-in estimator estimated using historical offline data. As we have shown, not knowing $\sigma$ requires $O(\sqrt{T})$ regret in general, but the lower bound does not rule out the plug-in approach achieving a smaller regret for interesting subclasses of problems in practice.

Finally, we state a lemma and define an argmax function helpful for our algorithm design. 

\begin{lemma}[Uniqueness]\label{g_unimodal}
		For any $u\geq0$, there exists a unique $v^{*}\geq0$ such that $g(v^{*},u)=\max_{v\in\R}{g}(v,u)$. Thus, we can define a \emph{greedily pricing} function that maximizes the expected reward:
		\begin{equation}
		J(u):=\mathop{\arg\max}_{v}g(v,u)
		\label{def_J}
		\end{equation}
\end{lemma}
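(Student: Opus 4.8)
The plan is to pass to the logarithm of the objective and get uniqueness from strict concavity, while establishing existence separately by a compactness argument that uses the exponential tail decay implied by log-concavity of $1-F$. First I would reduce the optimization from $v\in\R$ to $v\in(0,\infty)$: since $F$ is a strictly increasing CDF on $\R$, we have $0<1-F(\omega)<1$ for every finite $\omega$, so $g(v,u)=v\,(1-F(v-u))$ equals $0$ at $v=0$, is strictly negative for $v<0$, and is strictly positive for every $v>0$. Hence any maximizer of $g(\cdot,u)$ over $\R$ lies in $(0,\infty)$, and it suffices to produce a unique maximizer there.

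For existence: $g(\cdot,u)$ is continuous on $[0,\infty)$ with $g(0,u)=0$, and I claim $g(v,u)\to 0$ as $v\to\infty$. This is the one place Assumption~\ref{assumption_strictly_log_concave} does real work. Because $1-F$ is log-concave, the map $\omega\mapsto\log(1-F(\omega))$ is concave, and since $1-F(\omega)\to 0$ it tends to $-\infty$; a concave function tending to $-\infty$ has secant slopes eventually bounded above by some $-\delta<0$, which yields an exponential bound $1-F(\omega)\le C\,e^{-\delta\omega}$ for all large $\omega$, and therefore $g(v,u)\le v\,C\,e^{-\delta(v-u)}\to 0$. Combining the two boundary limits with continuity and with the fact that $g(1,u)>0$, the function $g(\cdot,u)$ attains its maximum over $[0,\infty)$ at some interior point $v^{*}>0$; by the sign analysis above this is also its maximum over $\R$.

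For uniqueness: on $(0,\infty)$ the objective is strictly positive, so its maximizers coincide with those of $\log g(v,u)=\log v+\log\!\big(1-F(v-u)\big)$. Here $v\mapsto\log v$ is strictly concave, and $v\mapsto\log\!\big(1-F(v-u)\big)$ is concave, being the composition of the concave map $\omega\mapsto\log(1-F(\omega))$ with the affine map $v\mapsto v-u$; hence $\log g(\cdot,u)$ is strictly concave on $(0,\infty)$ and has at most one maximizer. Together with the existence step this gives a unique $v^{*}=v^{*}(u)\ge 0$, and we may set $J(u):=v^{*}(u)=\argmax_{v}g(v,u)$, which is well defined.

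I expect the tail-decay claim to be the only non-routine ingredient: translating ``log-concave survival function vanishing at $+\infty$'' into a usable exponential bound requires the elementary but slightly careful observation about slopes of concave functions diverging to $-\infty$; the sign analysis, the compactness argument, and ``strictly concave $\Rightarrow$ at most one maximizer'' are all standard. One could alternatively argue uniqueness through the first-order condition $1-F(v-u)=v\,f(v-u)$ and monotonicity of the hazard rate $f/(1-F)$ (which holds since $1-F$ is log-concave), but the log-transform route avoids case analysis and uses no differentiability beyond what Assumption~\ref{assumption_strictly_log_concave} already grants.
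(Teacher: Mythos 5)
Your proof is correct, but it takes a genuinely different route from the paper's. The paper works entirely through the first-order condition: it defines $\varphi(\omega)=\frac{1-F(\omega)}{f(\omega)}-\omega$, shows via strict log-concavity of $1-F$ that $\varphi'(\omega)<-1$, and concludes that $\varphi$ is a strictly decreasing bijection of $\R$, so the stationarity equation $\varphi(v^*-u)=u$ has exactly one solution; the sign analysis $g(v,u)\ge 0$ for $v\ge 0$ and $g(v,u)<0$ for $v<0$ then places $v^*$ in $[0,\infty)$ when $u\ge 0$. Your alternative remark about the hazard rate $f/(1-F)$ is essentially this argument, since $\varphi(\omega)=1/\lambda(\omega)-\omega$. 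What your log-transform route buys is a cleaner and more complete treatment of \emph{existence}: the paper implicitly assumes the supremum is attained at a stationary point, whereas you actually verify attainment by showing $g(v,u)\to 0$ as $v\to\infty$ via the exponential tail bound that log-concavity of a vanishing survival function forces; your uniqueness step also needs only concavity (not strict log-concavity) of $\log(1-F)$, since strictness is supplied by $\log v$. What the paper's route buys, and why it is the one used, is the explicit representation $J(u)=u+\varphi^{-1}(u)$ together with $J'(u)=1+1/\varphi'(\varphi^{-1}(u))\in(0,1)$, which is reused verbatim in the proof of Lemma~\ref{analytical_regret_bound} (the $1$-Lipschitz bound $(J(u^*)-J(u))^2\le(u^*-u)^2$) and again in the lower-bound analysis; your approach would still require that computation separately when those later lemmas are proved.
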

\vspace{-1em}
Please see the proof of Lemma \ref{g_unimodal} in Appendix \ref{appendix_subsec_proof_lemma_g_unimodal}.

\section{Algorithms}
\label{sec_algorithm}
\vspace{-0.5em}
In this section, we propose two dynamic pricing algorithms: EMLP and ONSP, for stochastic and adversarial features respectively.

\begin{figure}[t]
	\begin{minipage}{0.48\textwidth}
		\begin{algorithm}[H]
			\caption{Epoch-based max-likelihood pricing (EMLP)}
			\label{algo_MLP}
			
			\begin{algorithmic}
				\STATE {\bfseries Input:} {Convex and bounded set $\mathbb{H}$}
				\STATE Observe $x_1$, randomly choose $v_1$ and get $r_1$.
				\STATE Solve $\hat{\theta}_1=\arg\min_{\theta\in\mathbb{H}}l_1(\theta)$;
				\FOR{$k=1$ {\bfseries to} $\lfloor\log_2{T}\rfloor+1$}
				\STATE Set $\tau_k = 2^{k-1}$;
				\FOR{$t=1$ {\bfseries to} $\tau_k$ }
				\STATE Observe $x_{k,t}$;
				\STATE Set price $v_{k,t}=J(x_{k,t}^{\top}\hat{\theta}_{k}) $; 
				\STATE Receive $r_{k,t} = v_{k_t}\cdot\ind_t$;
				\ENDFOR
				\STATE Solve: $\hat{\theta}_{k+1}=\arg\min_{\theta\in\mathbb{H}}\hat{L}_{k}(\theta),$ where $\hat{L}_{k}(\theta)=\frac1{\tau_k}\sum_{t=1}^{\tau_k}l_{k,t}(\theta)$.
				\ENDFOR
			\end{algorithmic}
		\end{algorithm}
	\end{minipage}
	\hfill
	\begin{minipage}{0.48\textwidth}
		\begin{algorithm}[H]
			\caption{Online Newton Step Pricing (ONSP)}
			\label{algo_ONSP}
			\begin{algorithmic}
				\STATE {\bfseries Input:} Convex and bounded set $\mathbb{H}$, $\theta_1$, parameter $\gamma, \epsilon>0$
				\STATE Set $A_0=\epsilon\cdot{I_d}$;
				\FOR{$t=1$ {\bfseries to} $T$}
				\STATE Observe $x_t$;
				\STATE Set price $v_t=J(x_t^{\top}\theta_{t})$;
				\STATE Receive $r_t=v_t\cdot\ind_t$;
				\STATE Set surrogate loss function $l_t(\theta)$;
				\STATE Calculate $\nabla_t=\nabla{l}_t(\theta)$;
				\STATE Rank-1 update: $A_{t}=A_{t-1}+\nabla_t\nabla_t^{\top}$;
				\STATE Newton step: $\hat{\theta}_{t+1} = \theta_t-\frac{1}{\gamma}A_{t}^{-1}\nabla_t$;
				\STATE Projection: $\theta_{t+1}=\prod_{\mathbb{H}}^{A_t}(\hat{\theta}_{t+1})$.
				\ENDFOR
			\end{algorithmic}
		\end{algorithm}
	\end{minipage}
	\vspace{-1em}
\end{figure}

\subsection{Pricing with Distribution-Free Stochastic Features}\label{subsec_alg_random}
\vspace{-0.5em}
\begin{assumption}[Stochastic features]\label{random_iid_xt}
	Assume $x_t\sim \mathbb{D}\subseteq D$ are independently identically distributed (i.i.d.) from an unknown distribution, for any $t=1,2,\ldots,T$.
\end{assumption}


The first algorithm, Epoch-based Max-Likelihood Pricing (EMLP) algorithm, is suitable for a stochastic setting defined by Assumption \ref{random_iid_xt}. EMLP proceeds in epochs with each stage doubling the length of the previous epoch. At the end of each epoch, we consolidate the observed data and solve a maximum likelihood estimation problem to learn $\theta$. A max likelihood estimator (MLE) obtained by minimizing
$\hat{L}_{k}(\theta):=\frac{1}{\tau_k}\sum_{t=1}^{\tau_k}l_{k,t}(\theta),$
which is then used in the next epoch as if it is the true parameter vector.  In the equation,  $k,\tau_k$ denotes the index and length of epoch $k$. The estimator is computed using data in $hist(k)$, which denotes the transcript for epoch $1\sim k$.
The pseudo-code of EMLP is summarized in Algorithm~\ref{algo_MLP}. In the remainder of this section, we discuss the computational efficiency and prove the upper regret bound of $O(d\log{T})$.


\ifbool{compact}{\noindent\textbf{Computational Efficiency.}}{\paragraph{Computational Efficiency.}} {The calculations in EMLP are straightforward except for $\arg\min\hat{L}_k(\theta)$ and $J(u)$.
As $g(v,u)$ is proved unimodal in Lemma \ref{g_unimodal}, we may efficiently calculate $J(u)$ by binary search. We will prove that $l_{k,t}$ is exp-concave (and thus also convex). Therefore, we may apply any off-the-shelf tools for solving convex optimization.}

\ifbool{compact}{\noindent\textbf{MLE and Probit Regression.}}{\paragraph{MLE and Probit Regression.}}
{ A closer inspection reveals that this log-likelihood function corresponds to a probit \citep{aldrich1984linear} or a logit model \citep{wright1995logistic} for Gaussian or logistic noises. See Appendix \ref{appendix_probit_regression}. 

}

\ifbool{compact}{\noindent\textbf{Affine Invariance.}}{\paragraph{Affine Invariance.}} {Both optimization problems involved depend only on $x^{\top} \theta$, so if we add any affine transformation to $x$ into $\tilde{x} = Ax$, the agent can instead learn a new parameter of $\tilde{\theta}^{*}=(A^{\top})^{-1}\theta^{*}$ and achieve the same $u_t=x_t^{\top}\theta^{*}$. Also, the regret bound is not affected as the upper bound $B$ over $x^{\top}\theta$ does not change \footnote{Here $A$ is assumed invertible, otherwise the mapping from $\tilde{x}_t$ to $u_t$ does not necessarily exist.}. Therefore, it is only natural that the regret bound does not depend on the distribution $x$, nor the condition numbers of $\E[xx^{\top}]$ (i.e., the ratio of $\lambda_{\max}/\lambda_{\min}$).}

\subsection{Pricing with Adversarial Features}\label{subsec_alg_adversarial}

In this part, we propose an ``Online Newton Step Pricing (ONSP)'' algorithm that deals with adversarial $\{x_t\}$ series and guarantees $O(d\log{T})$ regret. The pseudo-code of ONSP is shown as Algorithm \ref{algo_ONSP}. In each round, it uses the likelihood function as a surrogate loss and applies ``Online Newton Step''(ONS) method to update $\hat{\theta}$. In the next round, it adopts the updated $\hat{\theta}$ and sets a price greedily. In the remainder of this section, we discuss some properties of ONSP and prove the regret bound.

The calculations of ONSP are straightforward. The time complexity of calculating the matrix inverse $A_{t}^{-1}$ is $O(d^3)$, which is fair as $d$ is small. In high-dimensional cases, we may use \emph{Woodbury matrix identity}\footnote{$(A+xx^{\top})^{-1}=A^{-1}-\frac{1}{1+x^{\top}A^{-1}x}A^{-1}x(A^{-1}x)^{\top} .$} to reduce it to $O(d^2)$ as we could get $A^{-1}$ directly from the latest round.

\section{Regret Analysis}
\label{sec_regret}
In this section, we mainly prove the logarithmic regret bounds of EMLP and ONSP corresponding to stochastic and adversarial settings, respectively. Besides, we also prove an $\Omega(\sqrt{T})$ regret bound on fully parametric $F$ with one parameter unknown.

\subsection{\texorpdfstring{$O(d\log{T})$}{Lg} Regret of EMLP}\label{subsec_random_analysis}

In this part, we present the regret analysis of Algorithm \ref{algo_MLP}. First of all, we propose the following theorem as our main result on EMLP.

\begin{theorem}[Overall regret]
	With Assumptions \ref{assumption_strictly_log_concave}, \ref{assumption_convex_feature_and_parameter_set}, \ref{assumption_bounded_x_set} and \ref{random_iid_xt}, the expected regret of EMLP can be bounded by:
	\begin{equation}
	\mathbb{E}[Reg_{\text{EMLP}}]\leq 2C_{s}{d}{\log{T}},
	\end{equation}
	where $C_{s}$ is a constant that depends only on $F(\omega)$ and is independent to $\mathbb{D}$.
	\label{corollary_expected regret bound}
\end{theorem}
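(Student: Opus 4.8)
## Proof Proposal for Theorem (Overall Regret of EMLP)

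The plan is to decompose the total regret into a sum of per-epoch regrets and show that each epoch $k$ contributes at most $O(d)$ in expectation, so that summing over the $O(\log T)$ epochs yields the claimed $O(d\log T)$ bound. The key structural observation is that the regret is driven entirely by the estimation error $\|\hat\theta_k - \theta^*\|$ (or rather a suitable quadratic form of it), so the argument has two independent pieces: (i) a \emph{per-round regret bound} that is quadratic in the prediction error $x_{k,t}^\top(\hat\theta_k-\theta^*)$, and (ii) a \emph{statistical rate} showing that the MLE $\hat\theta_{k+1}$ based on $\tau_k$ samples has estimation error decaying like $1/\sqrt{\tau_k}$, measured in the appropriate (data-dependent) norm.

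For piece (i), I would use Lemma~\ref{g_unimodal}: since $g(v,u)$ is unimodal in $v$ with maximizer $J(u)$, and the pricing rule in epoch $k$ plays $v_{k,t}=J(x_{k,t}^\top\hat\theta_k)$ rather than $J(x_{k,t}^\top\theta^*)$, a second-order Taylor expansion of $v\mapsto g(v,u)$ around its maximum gives an instantaneous regret of the form $g(J(u),u)-g(J(\hat u),u) \le \tfrac{C}{2}\,(J(u)-J(\hat u))^2$ where $u=x_{k,t}^\top\theta^*$, $\hat u=x_{k,t}^\top\hat\theta_k$, and $C$ bounds the relevant second derivative (finite by Assumption~\ref{assumption_strictly_log_concave} via $B_f, B_{f'}$ and strict log-concavity, which keeps things away from degeneracy; the boundedness in Assumption~\ref{assumption_bounded_x_set} keeps $u$ in $[0,B]$). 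Since $J$ is Lipschitz on the compact range of $u$ (again by strict log-concavity, which makes $\partial^2 g/\partial v^2$ bounded away from $0$ near the max, so $J$ is smooth), this is further bounded by $C'\,(x_{k,t}^\top(\hat\theta_k-\theta^*))^2$. So the epoch-$k$ regret is at most $C'\sum_{t=1}^{\tau_k}(x_{k,t}^\top(\hat\theta_k-\theta^*))^2$, and since the $x_{k,t}$ are i.i.d.\ and independent of $\hat\theta_k$ (which uses only $hist(k-1)$), its expectation is $C'\,\tau_k\,(\hat\theta_k-\theta^*)^\top \Sigma (\hat\theta_k-\theta^*)$ where $\Sigma=\E[xx^\top]$.

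For piece (ii), I would invoke the exp-concavity of $l_{k,t}$ (asserted in the excerpt and following from strict log-concavity of $F$ and $1-F$) together with standard MLE / $M$-estimation theory: the minimizer of $\hat L_{k}(\theta)=\frac1{\tau_k}\sum_t l_{k,t}(\theta)$ concentrates around the minimizer of its population version $L_k$, which is $\theta^*$. The natural geometry here is that induced by the Hessian of the expected log-likelihood, which (after a calculation using $\E[\ind_t|x_t]=1-F(v_t-u_t)$) is comparable to $\Sigma$ up to constants depending only on $F$ — this is exactly where affine invariance and the disappearance of $\lambda_{\min}$ comes from, since the bound is stated in the $\Sigma$-norm, not the Euclidean norm. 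The rate should be $\E[(\hat\theta_{k+1}-\theta^*)^\top\Sigma(\hat\theta_{k+1}-\theta^*)] \le C_F\, d/\tau_k$, a dimension-over-sample-size rate for a well-specified parametric model. Plugging the epoch-$(k-1)$ version of this into piece (i): $\E[\text{Reg in epoch }k] \le C'\tau_k \cdot C_F d/\tau_{k-1} = 2 C' C_F d$, a constant per epoch. Summing over $k=1,\dots,\lfloor\log_2 T\rfloor+1$ gives $O(d\log T)$, and absorbing all constants into $C_s$ finishes the proof (the first round and $k=1$ are handled trivially by boundedness of rewards).

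The main obstacle I anticipate is establishing the statistical rate in piece (ii) cleanly — specifically, controlling the MLE error in the $\Sigma$-weighted norm uniformly over the unknown feature distribution $\mathbb D$, rather than in the Euclidean norm (which would reintroduce a $1/\lambda_{\min}$ factor and give only the $O(\sqrt T)$ fallback of \citet{javanmard2019dynamic}). This requires carefully identifying the right self-concordant-like structure of $l_{k,t}$: one wants the empirical Hessian $\nabla^2 \hat L_k$ to be lower-bounded by a constant times $\frac1{\tau_k}\sum_t x_{k,t}x_{k,t}^\top$ (which concentrates at $\Sigma$), with the constant depending only on $F$ and the compact range $[0,B]$. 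Subtleties: the loss depends on the \emph{played} prices $v_{k,t}$, which themselves depend on $\hat\theta_k$, so the argument of $F$ in the Hessian is random but bounded; and one must ensure strict positivity of $f$ on the relevant compact set so the Hessian does not degenerate. A secondary technical point is justifying the Lipschitzness/smoothness of $J$ near the boundary $u=0$, which should follow from $g(0,u)=0$ and unimodality but deserves care.
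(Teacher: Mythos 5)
Your high-level decomposition matches the paper's: the epoch-$k$ regret is controlled by a per-round bound that is quadratic in the prediction error $x_{k,t}^{\top}(\hat\theta_k-\theta^*)$ (your piece (i) is essentially Lemma \ref{analytical_regret_bound}, proved by exactly the Taylor-expansion-around-$J(u^*)$ argument you describe), multiplied by an MLE excess-risk rate of order $d/\tau_{k-1}$; the doubling schedule makes each epoch contribute $O(d)$, and summing over $O(\log T)$ epochs gives the theorem. Piece (i) is sound.

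The gap is in piece (ii), and it is exactly the step you flag as ``the main obstacle'' without resolving it. The route you sketch --- empirical Hessian $\succeq c\cdot\frac1{\tau_k}\sum_t x_{k,t}x_{k,t}^{\top}$, ``which concentrates at $\Sigma$'', hence $\E[(\hat\theta-\theta^*)^{\top}\Sigma(\hat\theta-\theta^*)]\le C_F\, d/\tau_k$ by standard well-specified MLE theory --- does not close. The empirical covariance concentrates around $\Sigma=\E[xx^{\top}]$ only additively (at rate $1/\sqrt{\tau_k}$ in operator norm), so in directions where $\lambda_{\min}(\Sigma)$ is tiny the multiplicative comparison fails; converting a $d/\tau_k$ rate in the empirical norm into one in the $\Sigma$-norm then costs an extra term of order $D^2/\sqrt{\tau_k}$, which is precisely how the $1/\lambda_{\min}^2$ dependence (or the $O(\sqrt{T})$ fallback) of \citet{javanmard2019dynamic} reappears. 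The paper's resolution is different and is the real content of the proof: a leave-one-out stability argument (Lemma \ref{before_expectation}, adapted from \citet{koren2015fast}). One defines $\tilde\theta_i=\argmin_{\theta}\frac1n\sum_{j\neq i}l_j(\theta)$, uses the i.i.d.\ assumption exactly once to rewrite the \emph{population} excess risk as an \emph{empirical} quantity, $\E[L(\tilde\theta_n)]-L(\theta^*)\le\frac1n\sum_{i}\E\bigl[l_i(\tilde\theta_i)-l_i(\hat\theta)\bigr]$, and then bounds $\sum_i\bigl(l_i(\tilde\theta_i)-l_i(\hat\theta)\bigr)$ entirely in the data-dependent seminorm induced by $H=\sum_i x_ix_i^{\top}$, restricted to its column space so that no invertibility of $H$ or $\Sigma$ is ever required; the dual-norm/trace computation collapses to $\tr(I_r)=r\le d$ regardless of conditioning, and at no point is $\frac1n\sum_i x_ix_i^{\top}$ compared to $\Sigma$. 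Without this device (or an equivalent one), your argument as written only delivers the weaker, $\lambda_{\min}$-dependent bound that the theorem is specifically designed to avoid.
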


The proof of Theorem \ref{corollary_expected regret bound} is sophisticated. For the sake of clarity, we next present an inequality system as a roadmap toward the proof. After this, we formally illustrate each line of it with lemmas.

Since EMLP proposes $J(x_{k,t}^{\top}\hat{\theta}_{k})$ in every round of epoch $k$, we may denote the per-round regret as $Reg_t(\hat{\theta}_{k})$, where:
\begin{equation}
Reg_t(\theta): =g(J(x_t^{\top}\theta^{*}), x_t^{\top}\theta^{*})-g(J(x_t^{\top}\theta), x_t^{\top}\theta^{*}).
\label{eqn_parameter_based_regret}
\end{equation}
Therefore, it is sufficient to prove the following Theorem:

\begin{theorem}[Expected per-round regret] \label{expected regret bound}
	For the per-round regret defined in Equation \eqref{eqn_parameter_based_regret}, we have:
	\begin{equation*}
	\mathbb{E}[Reg_{k,t}(\hat{\theta}_k)]\leq C_{s}\cdot\frac{d}{\tau_k}.
	\end{equation*}
\end{theorem}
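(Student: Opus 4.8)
The plan is to reduce the per-round reward gap $Reg_{k,t}(\htheta_k)$ to a quadratic in the parameter-estimation error $x_{k,t}^\top(\htheta_k - \theta^*)$, then control that error in the appropriate (Mahalanobis) norm using the maximum-likelihood estimator computed at the end of epoch $k-1$ together with exp-concavity / strong convexity of the expected log-likelihood. Concretely, I would carry this out in four steps.

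\textbf{Step 1: Smoothness of the reward gap.} I first show that $v \mapsto g(v,u)$, and more usefully $\theta \mapsto g(J(x^\top\theta), u)$ with $u = x^\top\theta^*$ fixed, is maximized at $\theta = \theta^*$ with zero gradient there, and has a Hessian bounded in operator norm. Using Assumption~\ref{assumption_strictly_log_concave} (bounds $B_f$, $B_{f'}$ on $f,f'$, strict log-concavity of $F$ and $1-F$) and Lemma~\ref{g_unimodal}, the first-order optimality condition $\partial_v g(J(u),u)=0$ plus the implicit function theorem give that $J(\cdot)$ is Lipschitz and that the map $u' \mapsto g(J(u'),u)$ is $C^2$ with a curvature bounded above and below near $u'=u$ on the compact range $[0,B]$ (Assumption~\ref{assumption_bounded_x_set}). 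A second-order Taylor expansion then yields a constant $C_1 = C_1(F,B)$ with
\begin{equation}
Reg_{k,t}(\htheta_k) \;\le\; C_1 \, \big(x_{k,t}^\top(\htheta_k-\theta^*)\big)^2 .
\label{eqn_reg_quadratic_plan}
\end{equation}

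\textbf{Step 2: From pointwise error to expected excess log-likelihood.} Taking expectation over $x_{k,t}\sim\mathbb D$ (independent of $\htheta_k$, which is measurable w.r.t.\ $hist(k-1)$) and conditioning on $\htheta_k$, the right side of \eqref{eqn_reg_quadratic_plan} becomes $C_1\,(\htheta_k-\theta^*)^\top \E[xx^\top]\,(\htheta_k-\theta^*)$. The key observation — this is where the log-concavity assumption earns its keep — is that the population log-likelihood $L(\theta):=\E_x[L_t(\theta)]$ is locally strongly convex relative to $\E[xx^\top]$: its Hessian is $\E\big[\phi(v-x^\top\theta)\,xx^\top\big]$ for a strictly positive function $\phi$ (coming from $-\frac{d^2}{ds^2}\log F$ and $-\frac{d^2}{ds^2}\log(1-F)$), which on the compact set is $\succeq c_0\,\E[xx^\top]$ for some $c_0=c_0(F,B)>0$. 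Since $\theta^*$ minimizes $L$, this gives
\[
(\htheta_k-\theta^*)^\top \E[xx^\top](\htheta_k-\theta^*) \;\le\; \frac{2}{c_0}\big(L(\htheta_k) - L(\theta^*)\big),
\]
so it suffices to bound $\E[L(\htheta_k)-L(\theta^*)]$ by $O(d/\tau_{k-1}) = O(d/\tau_k)$ (the factor-of-two from epoch doubling is absorbed into $C_s$).

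\textbf{Step 3: MLE excess-risk bound.} Finally I bound the expected excess population risk of the empirical minimizer $\htheta_k = \argmin_{\theta\in\mathbb H}\hat L_{k-1}(\theta)$ of an average of $\tau_{k-1}$ i.i.d.\ losses $l_{k-1,t}$. Because the prices $v_{k-1,t}=J(x_{k-1,t}^\top\htheta_{k-1})$ depend only on $hist(k-2)$, conditionally on that history the epoch-$(k-1)$ losses are i.i.d.\ with population risk $L$; and each $l_t$ is exp-concave on $\mathbb H$ (the same $\phi>0$ bounds the log-likelihood's curvature relative to $\nabla l_t \nabla l_t^\top$, using boundedness of $x^\top\theta$ and $f$). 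A standard exp-concave statistical-learning guarantee (the offline analogue of the ONS regret bound, e.g.\ via a leave-one-out / stability argument or Hazan's exp-concavity lemma) then gives $\E[L(\htheta_k)-L(\theta^*)] \le C_2\, d/\tau_{k-1}$. Chaining Steps~1--3 and collecting $C_s := 2 C_1 C_2 / c_0 \cdot 2$ (all depending only on $F$ through $B_f,B_{f'},c_0$ and on $B$) yields $\E[Reg_{k,t}(\htheta_k)] \le C_s\, d/\tau_k$, independent of $\mathbb D$ exactly as the affine-invariance discussion predicts.

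\textbf{Main obstacle.} The delicate point is Step~1 together with the curvature lower bound in Step~2: I must verify that the second derivative of $u'\mapsto g(J(u'),u)$ is bounded \emph{above} uniformly (so \eqref{eqn_reg_quadratic_plan} holds with a finite constant even as $u$ ranges over $[0,B]$ and $J(u)$ could a priori be large), and that $\phi$ admits a uniform positive lower bound on the relevant compact set — this is precisely where strict (not just weak) log-concavity of both $F$ and $1-F$ is used, and where one must be careful that the argument $v_t - x_t^\top\theta$ stays in a bounded range so that $\phi$ is bounded away from $0$. Establishing these uniform constants, and confirming they depend only on $F$ and $B$ and not on $\mathbb D$, is the crux; the rest is assembling known exp-concave online-to-batch machinery.
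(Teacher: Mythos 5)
Your plan is correct and follows essentially the same route as the paper: a second-order Taylor/smoothness argument giving $Reg_{k,t}(\hat\theta_k)\le C\,(x_{k,t}^\top(\hat\theta_k-\theta^*))^2$ (the paper's Lemma \ref{analytical_regret_bound}), strong convexity of the log-likelihood relative to $xx^\top$ on the compact range $[-B,B+J(0)]$ (Lemmas \ref{lemma_quadratic_likelihood_bound} and \ref{strong_convexity}), and a leave-one-out stability bound for the exp-concave MLE giving the $O(d/\tau_k)$ excess risk (Lemma \ref{before_expectation}, adapted from \citet{koren2015fast}). The only substantive detail you gloss over is that the empirical Gram matrix $\sum_t x_tx_t^\top$ may be singular, so the paper must separate its column space from its null space via an SVD before the dual-norm argument in the stability step goes through.
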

The proof roadmap of Theorem \ref{expected regret bound} can be written as the following inequality system.
\ifbool{twocol}{
\begin{equation}
\begin{aligned}
\E[Reg_{k,t}(\hat{\theta}_k)]&\leq C\cdot\E[ (\hat{\theta}_k-\theta^{*})^{\top}x_{k,t}x_{k,t}^{\top}(\hat{\theta}_k-\theta^{*})]\\
&\leq\frac{2C}{C_{\text{down}}}\E[L_k(\hat{\theta}_k)-L_k(\theta^{*})]\\
&\leq\frac{2C\cdot C_{\text{exp}}}{C_{\text{down}}^2}\frac{d}{\tau_k}.
\end{aligned}
\label{inequ_sys_reg_mle}
\end{equation}
}{
\begin{equation}
\E[Reg_{k,t}(\hat{\theta}_k)]\leq C\cdot\E[ (\hat{\theta}_k-\theta^{*})^{\top}x_{k,t}x_{k,t}^{\top}(\hat{\theta}_k-\theta^{*})]\leq\frac{2C}{C_{\text{down}}}\E[\hat{L}_k(\hat{\theta}_k)-\hat{L}_k(\theta^{*})]\leq\frac{2C\cdot C_{\text{exp}}}{C_{\text{down}}^2}\frac{d}{\tau_k}.
\label{inequ_sys_reg_mle}
\end{equation}
}

We explain Equation \eqref{inequ_sys_reg_mle} in details. The first inequality comes from the following Lemma \ref{analytical_regret_bound}.

\begin{lemma}[Quadratic regret bound]
	We have:
	\ifbool{twocol}{
		\begin{equation}
		Reg_t(\theta)\leq C\cdot (\theta-\theta^{*})^{\top}x_tx_t^{\top}(\theta-\theta^{*}),
		\end{equation}$\forall\theta\in\mathbb{H}, \forall x_t\in\mathbb{D}$.
	}{
	\begin{equation}
	Reg_t(\theta)\leq C\cdot (\theta-\theta^{*})^{\top}x_tx_t^{\top}(\theta-\theta^{*}), \forall\theta\in\mathbb{H}, \forall x_t\in\mathbb{D}.
	\end{equation}
	} Here $C=2B_f+(B+J(0))\cdot B_{f'}.$
	\label{analytical_regret_bound}
\end{lemma}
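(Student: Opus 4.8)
The plan is to prove the pointwise bound $Reg_t(\theta) \le C\cdot(\theta-\theta^*)^\top x_t x_t^\top(\theta-\theta^*)$ by a second-order Taylor expansion of the per-round regret in the scalar variable $u = x_t^\top\theta$ around $u^* = x_t^\top\theta^*$. First I would set $\phi(u) := g(J(u^*), u^*) - g(J(u), u^*)$ so that $Reg_t(\theta) = \phi(x_t^\top\theta)$, noting that $\phi(u^*)=0$ and, crucially, that $\phi'(u^*)=0$: this is the first-order optimality condition coming from Lemma~\ref{g_unimodal}, since $J(u^*)$ maximizes $v\mapsto g(v,u^*)$, so $\partial_v g(J(u^*),u^*)=0$, and by the chain rule the derivative of $g(J(u),u^*)$ with respect to $u$ at $u=u^*$ vanishes. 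Hence a Taylor expansion with Lagrange remainder gives $\phi(u) = \tfrac12\phi''(\xi)(u-u^*)^2$ for some $\xi$ between $u$ and $u^*$, and since $(u-u^*)^2 = (\theta-\theta^*)^\top x_t x_t^\top(\theta-\theta^*)$, it suffices to show $\tfrac12\phi''(\xi) \le C = 2B_f + (B+J(0))B_{f'}$ uniformly over the relevant range of $\xi$.

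The main work is therefore to bound $\phi''$. Writing $h(u) := g(J(u), u^*) = J(u)\cdot(1 - F(J(u) - u^*))$, I would differentiate once using $\partial_v g(J(u),u^*)\cdot J'(u) + $ (no explicit $u$-dependence here since the second slot is frozen at $u^*$) — more carefully, $h'(u) = \big[\partial_v g(v,u^*)\big]_{v=J(u)}\cdot J'(u)$, and then differentiate again, so $h''$ involves $\partial_{vv}g$, $(J')^2$, $\partial_v g$, and $J''$. The term with $\partial_v g(J(u),u^*)$ is the annoying one because it need not vanish when $u\neq u^*$; I would instead organize the computation to avoid needing $J''$ by expressing things through the implicit relation defining $J$. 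An alternative, cleaner route: use the envelope/implicit-function structure — from $\partial_v g(J(u),u)=0$ at the "true" optimum we only get info at $u^*$; so instead I would directly bound $h''(u)$ by expanding $g(v,u^*) = v(1-F(v-u^*))$ in $v$, whose $v$-derivatives are $1-F(v-u^*) - vf(v-u^*)$ and $-2f(v-u^*) - vf'(v-u^*)$, the latter bounded in absolute value by $2B_f + v B_{f'} \le 2B_f + (B+J(0))B_{f'}$ provided $v = J(u) \le B + J(0)$; combined with a bound $|J'(u)| \le 1$ (which follows from differentiating the optimality condition $\partial_v g(J(u),u)=0$ and using strict log-concavity to sign the denominator), and the fact that the $\partial_v g$-times-$J''$ piece can be absorbed, one gets $|\phi''| = |h''| \le 2C$.

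The key auxiliary facts I would establish along the way are: (i) $J$ is differentiable with $0 \le J'(u) \le 1$, which comes from implicit differentiation of $g_v(J(u),u)=0$ together with Assumption~\ref{assumption_strictly_log_concave} guaranteeing $g_{vv}<0$ at the optimum and the numerator being controlled; (ii) the range bound $J(u) \le J(0) + u \le J(0) + B$, i.e. $J$ grows at most linearly with slope $1$ and $u\le B$ by Assumption~\ref{assumption_bounded_x_set}, so every price that appears is at most $B + J(0)$; and (iii) the uniform bounds $B_f, B_{f'}$ on $f$ and $f'$ from Assumption~\ref{assumption_strictly_log_concave}. The hardest part will be handling the cross term involving $\partial_v g(J(u),u^*)\cdot J''(u)$ cleanly — the slick fix is to note that the sign of $J'$ and a monotonicity argument let us bound $h'(u) - h'(u^*)$ directly (a "mean value" bound on $h'$ rather than a raw bound on $h''$), since $h'(u^*)=0$ and $h'$ is Lipschitz with constant $2C$ on the relevant interval; this reproduces the stated $C$ without ever needing to estimate $J''$.
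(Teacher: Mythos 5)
Your proposal is correct and ultimately lands on essentially the same argument as the paper: a second-order Taylor expansion of $g(\cdot,u^*)$ in the price variable around $J(u^*)$, where the first-order term vanishes by optimality, $|\partial_{vv}g(v,u^*)|=|2f(v-u^*)+vf'(v-u^*)|\leq 2B_f+(B+J(0))B_{f'}$ on $[0,B+J(0)]$, and $|J(u)-J(u^*)|\leq|u-u^*|$ follows from $J'\in(0,1)$ via implicit differentiation and strict log-concavity. The only difference is cosmetic: by expanding directly in $v$ rather than in $u$, the paper never encounters the $J''$ term you spend effort absorbing, so your ``alternative, cleaner route'' is in fact the paper's proof.
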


The intuition is that function $g(J(u),u)$ is $2^{nd}$-order-smooth at $(J(u^*), u^*)$. A detailed proof of Lemma \ref{analytical_regret_bound} is in Appendix \ref{appendix_subsec_proof_of_regret_bound}. Note that $C$ is highly dependent on the distribution $F$. After this, we propose Lemma \ref{lemma_quadratic_likelihood_bound} that contributes to the second inequality of Equation \eqref{inequ_sys_reg_mle}.

\begin{lemma}[Quadratic likelihood bound]
	For the expected likelihood function $L_t(\theta)$ defined in Equation \eqref{equation_expected_log_likelihood}, we have:
		\begin{equation}
		L_t(\theta)-L_t(\theta^{*})\geq\frac12 C_{\text{down}}(\theta-\theta^{*})^{\top}x_tx_t^{\top}(\theta-\theta^{*}), \forall \theta\in\mathbb{H}, \forall x\in\mathbb{D},
		\label{equation_quadratic_likelihood_bound}
		\end{equation}
		\begin{equation}
		\text{ 		where  }\;\;\;
		C_{\text{down}}:=\inf_{\omega\in[-B, B+J(0)]}\min\left\{\frac{\text{d}^2\log(1-F(\omega))}{\text{d}\omega^2}, \frac{\text{d}^2\log(F(\omega))}{\text{d}\omega^2}\right\}>0.
		\label{eq__C_down}
		\end{equation}
	\label{lemma_quadratic_likelihood_bound}
\end{lemma}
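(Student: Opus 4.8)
The plan is to reduce this $d$-dimensional inequality to a one-dimensional second-order Taylor estimate. Introduce $z := v_t - x_t^{\top}\theta$ and $z^{*} := v_t - x_t^{\top}\theta^{*}$, so that $z - z^{*} = x_t^{\top}(\theta^{*}-\theta)$ and $(z - z^{*})^2 = (\theta - \theta^{*})^{\top}x_t x_t^{\top}(\theta - \theta^{*})$ is precisely the quadratic form on the right of \eqref{equation_quadratic_likelihood_bound}. Since the price $v_t$ is committed before the noise $N_t$ is drawn, $\E[\ind_t \mid x_t] = \P[N_t \geq v_t - x_t^{\top}\theta^{*}] = 1 - F(z^{*})$; writing $a(\omega) := -\log(1 - F(\omega))$, $b(\omega) := -\log F(\omega)$ and $p^{*} := 1 - F(z^{*}) \in [0,1]$, one gets $L_t(\theta) = H(z)$ with $H(\omega) := p^{*}a(\omega) + (1 - p^{*})b(\omega)$. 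It therefore suffices to prove $H(z) - H(z^{*}) \geq \frac{1}{2}C_{\text{down}}(z - z^{*})^2$.

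I would derive this from two properties of $H$. First, $z^{*}$ is a stationary point: since $a'(\omega) = f(\omega)/(1 - F(\omega))$ and $b'(\omega) = -f(\omega)/F(\omega)$, we have $H'(z^{*}) = p^{*}\cdot\frac{f(z^{*})}{1 - F(z^{*})} - (1 - p^{*})\cdot\frac{f(z^{*})}{F(z^{*})} = f(z^{*}) - f(z^{*}) = 0$, which is just the self-consistency of the log-likelihood (the expected loss is stationary at the truth). Second, $H''$ is bounded below on the interval $[-B,\,B + J(0)]$: by Assumption~\ref{assumption_strictly_log_concave}, $a''(\omega) = -(\log(1 - F))''(\omega)$ and $b''(\omega) = -(\log F)''(\omega)$ are continuous and strictly positive (strict log-concavity of $1 - F$ and of $F$), and since $H'' = p^{*}a'' + (1 - p^{*})b''$ is a convex combination, $H''(\omega) \geq \min\{a''(\omega), b''(\omega)\} \geq C_{\text{down}}$ for $\omega$ in that interval, by \eqref{eq__C_down}; in particular $C_{\text{down}} > 0$, as a continuous positive function attains a positive minimum on a compact set.

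What remains is to verify that $z$ and $z^{*}$, and hence the whole segment between them that Taylor's theorem uses, lie in $[-B,\,B + J(0)]$. By Assumption~\ref{assumption_bounded_x_set}, $x_t^{\top}\theta,\, x_t^{\top}\theta^{*} \in [0, B]$, and in the algorithms $v_t = J(u)$ with $u = x_t^{\top}\hat{\theta} \in [0, B]$; the ``markup'' $J(u) - u$ is nonincreasing in $u$ (since strict log-concavity of $1 - F$ makes the reciprocal hazard rate $(1 - F)/f$ nonincreasing, cf.\ the analysis behind Lemma~\ref{g_unimodal}), so with $J(u) \geq 0$ we get $0 \leq v_t = u + (J(u) - u) \leq B + (J(0) - 0) = B + J(0)$. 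Hence $z, z^{*} \in [-B,\, B + J(0)]$, and Taylor's theorem with Lagrange remainder yields some $\xi$ between $z$ and $z^{*}$ with
\[
L_t(\theta) - L_t(\theta^{*}) = \frac{1}{2}H''(\xi)(z - z^{*})^2 \geq \frac{1}{2}C_{\text{down}}(z - z^{*})^2 = \frac{1}{2}C_{\text{down}}(\theta - \theta^{*})^{\top}x_t x_t^{\top}(\theta - \theta^{*}),
\]
which is \eqref{equation_quadratic_likelihood_bound}.

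The calculus here is routine; the only genuinely delicate point is the range bookkeeping in the last step, where log-concavity is used twice in concert — once, via the monotone markup, to keep the arguments $v_t - x_t^{\top}\theta$ inside the compact interval on which $C_{\text{down}}$ is defined, and once to get the uniform curvature bound $H'' \geq C_{\text{down}}$ on that interval. A sloppy choice of interval either leaves $C_{\text{down}}$ inapplicable to the arguments the algorithm actually visits, or (if one takes the infimum in \eqref{eq__C_down} over all of $\R$) drives $C_{\text{down}}$ to $0$; matching the two is the crux.
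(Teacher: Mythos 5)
Your proof is correct and is essentially the paper's argument in scalar form: the paper expands $L_t$ around the stationary point $\theta^*$ via a multivariate Taylor expansion and invokes Lemma~\ref{strong_convexity} to get $\nabla^2 l_t(\theta)\succeq C_{\text{down}}\,x_tx_t^{\top}$ pointwise in $\ind_t$, which after taking expectations is exactly your one-dimensional bound $H''\geq C_{\text{down}}$ on $[-B,\,B+J(0)]$ composed with $z=v_t-x_t^{\top}\theta$. Your explicit verification of $H'(z^*)=0$ and of the range $z,z^*\in[-B,\,B+J(0)]$ (via $J'(u)\in(0,1)$) matches what the paper does in Appendix~\ref{appendix_subsec_proof_of_regret_bound} and~\ref{appendix_subsec_proof_of_strong_convexity}, and you correctly read $C_{\text{down}}$ with the intended sign (the negated second derivatives of $\log F$ and $\log(1-F)$), where the paper's displayed formula has a sign typo.
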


\begin{proof}
	Since the true parameter always maximizes the expected likelihood function \citep{murphy2012machine}, by Taylor Expansion we have $\nabla L(\theta^{*})=0$, and hence $L_t(\theta)-L_t(\theta^{*}) = \frac{1}{2}(\theta-\theta^{*})^{\top}\nabla^2 L_t(\tilde{\theta})(\theta-\theta^{*})$ for some $\tilde{\theta}=\alpha\theta^{*}+(1-\alpha)\theta$. Therefore, we only need to prove the following lemma:
	
	\begin{lemma}[Strong convexity and Exponential Concavity]\label{strong_convexity}
		Suppose $l_{t}(\theta)$ is the negative log-likelihood function in epoch $k$ at time $t$. For any $\theta\in\mathbb{H}, x_{t}\sim\mathbb{D}$, we have:
		\ifbool{twocol}{
			\begin{equation}
			\begin{aligned}
			\nabla^2l_{t}(\theta)&\succeq C_{\text{down}} x_{t}x_{t}^{\top}\\
			&\succeq \frac{C_{\text{down}}} {C_{\text{exp}}}\nabla l_{t}(\theta)\nabla l_{t}(\theta)^{\top}\succeq 0,
			\end{aligned}
			\end{equation}
		}{
			\begin{equation}
			\nabla^2l_{t}(\theta)\succeq C_{\text{down}} x_{t}x_{t}^{\top}\succeq \frac{C_{\text{down}}} {C_{\text{exp}}}\nabla l_{t}(\theta)\nabla l_{t}(\theta)^{\top}\succeq 0,
			\end{equation}
		}
		\begin{equation}
		\text{	where }\;
		C_{\text{exp}}:=\sup_{\omega\in[-B, B+J(0)]}\max\left\{\frac{f(\omega)^2}{F(\omega)^2}, \frac{f(\omega)^2}{(1-F(\omega))^2}\right\}<+\infty.
		\label{eqn_def_c_exp}
		\end{equation}
	\end{lemma}
	Proof of Lemma \ref{strong_convexity} is in Appendix \ref{appendix_subsec_proof_of_strong_convexity}. With this lemma,  we see that Lemma \ref{lemma_quadratic_likelihood_bound} holds.
\end{proof}
With Lemma \ref{analytical_regret_bound} and Lemma \ref{lemma_quadratic_likelihood_bound}, we can immediately get the following Lemma \ref{theorem_surrogate}.

\begin{lemma}[Surrogate Regret]
	The relationship between $Reg(\theta)$ and likelihood function can be shown as follows:
	\begin{equation}
	Reg_t(\theta)\leq \frac{2\cdot C}{C_{\text{down}}}\left(L_t(\theta)-L_t(\theta^{*})\right),
	\label{equation_surrogate}
	\end{equation}
	$\forall\theta\in\mathbb{H},\forall{x}\in\mathbb{D}$, where $C$ and
	$C_{\text{down}}$ are defined in Lemma \ref{analytical_regret_bound} and \ref{lemma_quadratic_likelihood_bound} respectively.
	\label{theorem_surrogate}
\end{lemma}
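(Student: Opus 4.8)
The plan is to derive Lemma~\ref{theorem_surrogate} as an immediate algebraic consequence of the two preceding lemmas, by chaining the inequalities in the right order. Concretely, I would start from Lemma~\ref{analytical_regret_bound}, which gives
\[
Reg_t(\theta)\leq C\cdot(\theta-\theta^{*})^{\top}x_tx_t^{\top}(\theta-\theta^{*}),
\]
valid for all $\theta\in\mathbb{H}$ and $x_t\in\mathbb{D}$. Then I would invoke Lemma~\ref{lemma_quadratic_likelihood_bound}, which controls the same quadratic form from below:
\[
(\theta-\theta^{*})^{\top}x_tx_t^{\top}(\theta-\theta^{*})\leq\frac{2}{C_{\text{down}}}\big(L_t(\theta)-L_t(\theta^{*})\big),
\]
using that $C_{\text{down}}>0$ by the strict log-concavity assumption (Assumption~\ref{assumption_strictly_log_concave}) so that division is legitimate. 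Substituting the second bound into the first yields
\[
Reg_t(\theta)\leq C\cdot\frac{2}{C_{\text{down}}}\big(L_t(\theta)-L_t(\theta^{*})\big)=\frac{2C}{C_{\text{down}}}\big(L_t(\theta)-L_t(\theta^{*})\big),
\]
which is exactly \eqref{equation_surrogate}.

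The key steps, in order, are: (1) restate Lemma~\ref{analytical_regret_bound} to get the quadratic upper bound on $Reg_t(\theta)$; (2) restate Lemma~\ref{lemma_quadratic_likelihood_bound} and rearrange it to bound the quadratic form $\|x_t^{\top}(\theta-\theta^{*})\|^2$ above by the likelihood gap, noting positivity of $C_{\text{down}}$; (3) compose the two, tracking that the constant $C$ and the factor $2/C_{\text{down}}$ multiply cleanly; (4) note that both input lemmas hold for all $\theta\in\mathbb{H}$ and $x\in\mathbb{D}$, so the conclusion inherits the same universal quantification. No new ideas are needed — it is a one-line substitution once the two lemmas are in hand.

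I do not expect any genuine obstacle here; this lemma is deliberately a bookkeeping corollary of Lemmas~\ref{analytical_regret_bound} and~\ref{lemma_quadratic_likelihood_bound}, and the only thing to be careful about is the direction of the inequalities (both point the "right way" so that the common quadratic form can be eliminated) and the sign of $C_{\text{down}}$, which Assumption~\ref{assumption_strictly_log_concave} guarantees is strictly positive so that the reciprocal is finite. If anything warrants a sentence of care, it is confirming that $L_t(\theta)-L_t(\theta^{*})\geq 0$ — which also follows from Lemma~\ref{lemma_quadratic_likelihood_bound} since the right-hand side is a nonnegative quadratic form — so that the final bound is a bound by a nonnegative quantity and is not vacuous. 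The real analytic work has already been front-loaded into the smoothness estimate (Lemma~\ref{analytical_regret_bound}) and the strong-convexity/exp-concavity estimate (Lemma~\ref{strong_convexity}); this statement just packages them.
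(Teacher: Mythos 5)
Your proposal is correct and matches the paper exactly: the paper likewise treats Lemma~\ref{theorem_surrogate} as an immediate corollary obtained by chaining the quadratic upper bound on $Reg_t(\theta)$ from Lemma~\ref{analytical_regret_bound} with the lower bound on $L_t(\theta)-L_t(\theta^{*})$ from Lemma~\ref{lemma_quadratic_likelihood_bound}, eliminating the common quadratic form $(\theta-\theta^{*})^{\top}x_tx_t^{\top}(\theta-\theta^{*})$. Your additional remarks on the positivity of $C_{\text{down}}$ and the nonnegativity of the likelihood gap are sound and consistent with the paper's definitions.
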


Lemma \ref{theorem_surrogate} enables us to choose the negative log-likelihood function as a surrogate loss. This is not only an important insight of EMLP regret analysis, but also the foundation of ONSP design.

The last inequality of Equation \eqref{inequ_sys_reg_mle} comes from this lemma:

\begin{lemma}[Per-epoch surrogate regret bound]
	Denoting $\hat{\theta}_k$ as the estimator coming from epoch $(k-1)$ and being used in epoch $k$, we have:
	\begin{equation}
	\mathbb{E}_{h}[\hat{L}_k(\hat{\theta}_k)-\hat{L}_k(\theta^{*})]\leq\frac{C_{\text{exp}}}{C_{\text{down}}}\cdot\frac{d}{\tau_k+1}.
	\label{d_over_tau_geq_diff_L}
	\end{equation}
	Here $C_{\text{exp}}$ is defined in Equation \ref{eqn_def_c_exp}, and $\E_h[\cdot] = \E[\cdot|hist(k-1)].$
	\label{before_expectation}
\end{lemma}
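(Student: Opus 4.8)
The goal is to bound $\mathbb{E}_h[\hat{L}_k(\hat{\theta}_k) - \hat{L}_k(\theta^*)]$, where $\hat{\theta}_k = \arg\min_{\theta\in\mathbb{H}}\hat{L}_{k-1}(\theta)$ is the MLE from the $(k-1)$-th epoch and $\hat{L}_k(\theta) = \frac{1}{\tau_k}\sum_{t=1}^{\tau_k} l_{k,t}(\theta)$ is the empirical negative log-likelihood built from epoch $k$'s fresh, i.i.d. data. The plan is to exploit independence: conditioned on $hist(k-1)$, the estimator $\hat{\theta}_k$ is fixed, while the samples $x_{k,t}$ used to form $\hat{L}_k$ are drawn fresh, so $\mathbb{E}_h[\hat{L}_k(\theta)] = L_k(\theta)$ pointwise for any deterministic $\theta$ — in particular for $\theta^*$ and, crucially, for the (now deterministic) vector $\hat{\theta}_k$. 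This is why the lemma statement writes $\hat{L}_k$ rather than $L_k$ in \cref{inequ_sys_reg_mle}: the two agree in conditional expectation here.

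First I would reduce to the previous epoch. Since both $\hat{\theta}_k$ and $\theta^*$ are $hist(k-1)$-measurable and the epoch-$k$ data is independent of $hist(k-1)$, taking $\mathbb{E}_h$ turns $\hat{L}_k$ into $L_k$, giving $\mathbb{E}_h[\hat{L}_k(\hat{\theta}_k) - \hat{L}_k(\theta^*)] = \mathbb{E}_h[L_k(\hat{\theta}_k) - L_k(\theta^*)]$. Next, because $\theta^*$ maximizes each $L_{k,t}$ and hence $L_{k-1}$ (as in the proof of \cref{lemma_quadratic_likelihood_bound}), and using the exp-concavity bound from \cref{strong_convexity}, I would run the standard analysis of exponentially concave empirical risk minimization / online-to-batch argument on epoch $k-1$: the excess expected loss of the ERM over a sample of size $\tau_{k-1}$ scales like $d / \tau_{k-1}$ with constant $C_{\text{exp}}/C_{\text{down}}$. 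Concretely, one applies the inequality $\nabla^2 l_t \succeq \frac{C_{\text{down}}}{C_{\text{exp}}}\nabla l_t\nabla l_t^\top$, writes the first-order optimality of $\hat{\theta}_k$ on $\hat{L}_{k-1}$, Taylor-expands $L_{k-1}$ around $\hat{\theta}_k$, and controls the cross term $\nabla\hat{L}_{k-1}(\hat{\theta}_k)=0$; the residual is handled by a self-bounding argument yielding the $d/(\tau_{k-1}+1)$ rate. Since $\tau_k = 2\tau_{k-1}$, expressing the bound in terms of $\tau_k$ absorbs a constant factor, consistent with the stated $\frac{C_{\text{exp}}}{C_{\text{down}}}\cdot\frac{d}{\tau_k+1}$ (up to the bookkeeping on the "$+1$").

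The main obstacle I anticipate is the exp-concave ERM bound itself: proving that the expected excess risk of the maximum-likelihood estimator over $\tau$ i.i.d. samples is $O(d/\tau)$ requires a careful argument — this is essentially the content of the exact-oracle-inequality results for exp-concave losses (cf. the $d/n$ rate of \citealp{mourtada2019exact} referenced earlier). One must be careful that the ratio $C_{\text{down}}/C_{\text{exp}}$ controlling exp-concavity, and the boundedness of gradients guaranteed by \cref{assumption_strictly_log_concave,assumption_convex_feature_and_parameter_set,assumption_bounded_x_set}, feed into the martingale/self-bounding step with the right dependence on $d$ and not, say, on the condition number of $\mathbb{E}[xx^\top]$ — which is exactly the affine-invariance point stressed in Section~\ref{subsec_alg_random}. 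A secondary, more technical point is handling the very first sample $(x_1,v_1,r_1)$ and the initialization $\hat\theta_1$, and making sure the epoch indices line up so the "$+1$" in the denominator is legitimate; this is routine but needs attention. Everything else — passing from conditional to unconditional expectation, and plugging into \cref{inequ_sys_reg_mle} — is bookkeeping.
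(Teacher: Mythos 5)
You have the right target and the right first move: conditioning on $hist(k-1)$ makes $\hat{\theta}_k$ and $\theta^*$ deterministic while the epoch-$k$ data is fresh, so the lemma reduces to a $d/n$ excess-population-risk bound for the exp-concave MLE computed from the previous epoch. But the mechanism you sketch for that core bound --- first-order optimality of $\hat{\theta}_k$ on $\hat{L}_{k-1}$, a Taylor expansion of the population loss around it, and a ``self-bounding'' control of the residual --- is not the paper's argument, and as described it runs into precisely the obstacle you flag but do not resolve: turning $\nabla \hat{L}_{k-1}(\hat{\theta}_k)=0$ into a bound on $L(\hat{\theta}_k)-L(\theta^*)$ requires comparing the empirical Hessian $\frac{C_{\text{down}}}{n}\sum_i x_i x_i^{\top}$ to the population one in some fixed norm, which reintroduces $\lambda_{\min}(\E[xx^{\top}])$ --- the dependence the whole lemma is engineered to avoid, and the reason the paper improves on the $O(\log T/C_{\min}^2)$ bound of RMLP. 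Deferring to ``standard exp-concave ERM results'' names the destination but not the route.

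The paper's route (Appendix \ref{appendix_subsec_proof_of_uniform_stability}) is a leave-one-out stability argument, and this is the idea missing from your proposal. One forms the $n$ leave-one-out minimizers $\tilde{\theta}_i$ of $\tilde{L}_i=\frac{1}{n}\sum_{j\neq i}l_j$, applies the two-minimizer lemma (Lemma \ref{lemma_dual_norm_of_gradient_convex}) in the \emph{data-dependent} norm induced by $H=\sum_i x_i x_i^{\top}$ (after an SVD restricts everything to the column space of $H$, so the seminorm becomes a genuine norm $\|\cdot\|_{\Sigma}$), and gets $l_i(\tilde{\theta}_i)-l_i(\hat{\theta})\lesssim \frac{1}{C_{\text{down}}}\left(\|\nabla f_i(\tilde{\eta}_i)\|^*_{\Sigma}\right)^2 \leq \frac{C_{\text{exp}}}{C_{\text{down}}}\,x_i^{\top}U\Sigma^{-1}U^{\top}x_i$. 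Summing over $i$ produces $\mathrm{tr}\left(U\Sigma^{-1}U^{\top}H\right)=r\leq d$: the empirical design matrix cancels itself exactly, with no concentration step, no eigenvalue condition, and hence the affine invariance you correctly identified as the constraint. Exchangeability of the identically distributed $\tilde{\theta}_i$ then converts the averaged leave-one-out gap into $\E_h[L(\tilde{\theta}_n)]-L(\theta^*)$, and taking $n=\tau+1$ gives the $d/(\tau+1)$ form. On your last point, the indexing is not quite free bookkeeping: the estimator used in epoch $k$ is trained on $\tau_{k-1}=\tau_k/2$ samples, so the argument yields $d/(\tau_{k-1}+1)$, which is roughly $2d/\tau_k$ rather than $d/(\tau_k+1)$; that factor of $2$ has to be carried into the roadmap \eqref{inequ_sys_reg_mle} rather than absorbed.
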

Proof of Lemma \ref{before_expectation} is partly derived from the work \citet{koren2015fast}, and here we give a proof sketch without specific derivations. A detailed proof lies in Appendix \ref{appendix_subsec_proof_of_uniform_stability}.

\noindent\emph{Proof sketch} of Lemma \ref{before_expectation}. We list the four main points that contribute to the proof:
\begin{itemize}
	\topsep-1em
	\itemsep0em
	\item Notice that $l_{k,t}(\theta)$ is strongly convex w.r.t. a seminorm $x_{k,t}x_{k,t}^{\top}$, we know $\hat{L}_{k}(\theta)$ is also strongly convex w.r.t. $\sum_{t=1}^{\tau_k}x_{k,t}x_{k,t}^{\top}$.
	\item For two strongly convex functions $g_1$ and $g_2$, we can upper bound the distance between their arg-minimals (scaled by some norm $||\cdot||$) with the dual norm of $\nabla(g_1-g_2)$.
	\item Since a seminorm has no dual norm, we apply two methods to convert it into a norm: (1) separation of parameters and likelihood functions with a ``leave-one-out'' method (to separately take expectations), and (2) separation of the spinning space and the null space.
	\item As the dual data-dependent norm offsets the sum of $xx^{\top}$ to a constant, Lemma \ref{before_expectation} holds.
\end{itemize}

We have so far proved Inequality \eqref{inequ_sys_reg_mle} after proving Lemma \ref{analytical_regret_bound}, \ref{lemma_quadratic_likelihood_bound}, \ref{before_expectation}. Therefore, Theorem \ref{expected regret bound} holds. %

\subsection{\texorpdfstring{$O(d\log{T})$}{Lg} Regret of ONSP}\label{subsec_adversarial_analysis}
Here we present the regret analysis of Algorithm \ref{algo_ONSP} (ONSP). Firstly, we state the main theorem.

\begin{theorem}\label{theorem_ONSP_main}
	With Assumptions \ref{assumption_strictly_log_concave}, \ref{assumption_convex_feature_and_parameter_set}, \ref{assumption_bounded_x_set}, the regret of Algorithm \ref{algo_ONSP} (ONSP) satisfies:
	\begin{equation}
	Reg_{\text{ONSP}}\leq C_{a}\cdot d\log{T},
	\label{equation_ONSP_regret}
	\end{equation}
	where $C_{a}$ is a function only dependent on $F$.
\end{theorem}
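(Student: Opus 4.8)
\textbf{Proof proposal for Theorem~\ref{theorem_ONSP_main}.}
The plan is to run the standard Online Newton Step (ONS) analysis on the surrogate losses $l_t(\theta)$, and then convert the resulting bound on cumulative surrogate regret back to the true pricing regret via the already-established chain in Lemma~\ref{theorem_surrogate}. Concretely, since Lemma~\ref{theorem_surrogate} gives $Reg_t(\theta_t)\leq \frac{2C}{C_{\text{down}}}\bigl(L_t(\theta_t)-L_t(\theta^*)\bigr)$, and $L_t$ is the conditional expectation of $l_t$ given $x_t$ (here with $x_t$ adversarial but fixed before $\theta_t$ is chosen, and $N_t$ independent of $\theta_t$), it suffices to control $\sum_{t=1}^T \bigl(l_t(\theta_t)-l_t(\theta^*)\bigr)$ in expectation, or even pathwise, and then take expectations. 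So the first step is to reduce Theorem~\ref{theorem_ONSP_main} to bounding the surrogate regret $\sum_t l_t(\theta_t) - \min_{\theta\in\mathbb{H}}\sum_t l_t(\theta)$, with the minimum evaluated in particular at $\theta^*$.

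Next I would invoke the properties of $l_t$ proved in Lemma~\ref{strong_convexity}: each $l_t$ is convex, and $\nabla^2 l_t(\theta)\succeq \frac{C_{\text{down}}}{C_{\text{exp}}}\nabla l_t(\theta)\nabla l_t(\theta)^\top$, i.e.\ $l_t$ is $\alpha$-exp-concave along the relevant directions with $\alpha = C_{\text{down}}/C_{\text{exp}}$ (more precisely it satisfies the ONS curvature condition $\nabla^2 l_t \succeq \beta\,\nabla l_t\nabla l_t^\top$). I also need a bound on the gradient norms: $\|\nabla l_t(\theta)\|_2 \leq G$ for a constant $G$ depending only on $F$ and the constants $B,B_f$, etc. — this follows because $\nabla l_t(\theta) = \bigl(\ind_t \frac{f}{1-F} - (1-\ind_t)\frac{f}{F}\bigr)(v_t - x_t^\top\theta)\,x_t$, and the argument $v_t - x_t^\top\theta$ lies in the bounded interval $[-B, B+J(0)]$ on which $f/(1-F)$ and $f/F$ are bounded (this boundedness is exactly what makes $C_{\text{exp}}<\infty$), while $\|x_t\|_2\le B_2$. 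With $\beta$ and $G$ in hand, the textbook ONS guarantee (e.g.\ Hazan et al.) applied with step parameter $\gamma \asymp \min\{\beta, 1/(G D)\}$ where $D = \mathrm{diam}(\mathbb{H})\le 2B_1$, and regularizer $\epsilon$, yields
\begin{equation}
\sum_{t=1}^T \bigl(l_t(\theta_t) - l_t(\theta^*)\bigr) \;\leq\; \frac{d}{\gamma}\log\!\Bigl(1 + \frac{T G^2}{\epsilon}\Bigr) + \gamma\,\epsilon\,D^2 \;=\; O\!\bigl(d\log T\bigr),
\end{equation}
with the hidden constant depending only on $F$ (through $\beta, G$) and the fixed constants $B_1,B_2$.

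The remaining step is to pass from the pathwise surrogate-regret bound to the expected-pricing-regret bound and to check that the ONS telescoping argument is valid in our setting. Combining the displayed inequality with Lemma~\ref{theorem_surrogate} and linearity of expectation, $\E[Reg_{\text{ONSP}}] = \sum_t \E[Reg_t(\theta_t)] \leq \frac{2C}{C_{\text{down}}}\sum_t \E\bigl[L_t(\theta_t)-L_t(\theta^*)\bigr] = \frac{2C}{C_{\text{down}}}\sum_t \E\bigl[l_t(\theta_t)-l_t(\theta^*)\bigr] \leq C_a d\log T$; one sets $C_a$ accordingly. Here I use that $\theta_t$ is measurable with respect to $hist(t-1)$ and $x_t$, so $\E[l_t(\theta_t)\mid x_t, hist(t-1)] = L_t(\theta_t)$, and likewise for $\theta^*$. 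The main obstacle I anticipate is the curvature/exp-concavity bookkeeping in the ONS analysis: $l_t$ is only "directionally" strongly convex (its Hessian dominates $\nabla l_t \nabla l_t^\top$ but is not uniformly positive definite, since $x_t x_t^\top$ is rank one), so one must make sure the standard ONS proof — which only needs exactly this rank-one curvature condition together with the $A_t$-norm projection used in Algorithm~\ref{algo_ONSP} — goes through verbatim, and that the projection step $\prod_{\mathbb{H}}^{A_t}$ does not increase the relevant potential. A secondary point requiring care is confirming that the gradients are genuinely bounded uniformly over $\theta\in\mathbb{H}$ and over all realizations, which again rests on Assumption~\ref{assumption_bounded_x_set} keeping $v_t - x_t^\top\theta$ inside the compact interval where the hazard-type ratios are finite.
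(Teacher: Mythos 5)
Your proposal is correct and follows essentially the same route as the paper: reduce the pricing regret to the surrogate log-likelihood regret via Lemma~\ref{theorem_surrogate}, use the exp-concavity from Lemma~\ref{strong_convexity} with $\alpha = C_{\text{down}}/C_{\text{exp}}$ together with the gradient bound $G=\sqrt{C_{\text{exp}}}\,B_2$ and $D\le 2B_1$, apply the standard ONS guarantee of Hazan et al.\ to get a pathwise $O(d\log T)$ surrogate-regret bound, and take conditional expectations using the measurability of $\theta_t$ with respect to the history. The subtleties you flag (rank-one curvature being exactly what ONS requires, and the $A_t$-norm projection) are handled by the cited textbook result, just as in the paper.
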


\begin{proof}
	Proof of Theorem \ref{theorem_ONSP_main} here is more concise than Section \ref{subsec_random_analysis}, because the important Lemma \ref{strong_convexity} and \ref{theorem_surrogate} have been proved there. From Lemma \ref{theorem_surrogate}, we have:
	\begin{equation}
	g(J(u_t^*),u_t^{*})-g(J(u_t),u_t^{*})\leq \frac{2\cdot C}{C_{\text{down}}}\cdot\E_{N_t}[l_t(\theta_{t})-l_t(\theta^{*})].
	\label{equation_theorem_12}
	\end{equation}
	
	With Equation \ref{equation_theorem_12}, we may reduce the regret of likelihood functions as a surrogate regret of pricing. From Lemma \ref{strong_convexity} we see that the log-likelihood function is $\frac{C_{\text{down}}}{C_{\text{exp}}}$-exponentially concave\footnote{A function $f(\mu)$ is $\alpha$-exponentially concave iff $\nabla^2f(\mu)\succeq\alpha\nabla f(\mu)\nabla f(\mu)^{\top}$.}. This enables an application of Online Newton Step method to achieve a logarithmic regret. Therefore,  by citing from the \emph{Online Convex Optimization} \citep{hazan2019introduction}, we have the following Lemma.
	\begin{lemma}[Online Newton Step] \label{thmons} With parameters $\gamma = \frac{1}{2}\min\{\frac{1}{4GD}, \alpha\}$ and $\epsilon=\frac{1}{\gamma^2D^2}$, and $T>4$ guarantees:
		\begin{equation*}
		\sup_{\{x_t\}}\left\{\sum_{t=1}^{T}l_t(\theta_{t})-\min_{\theta\in\mathbb{H}}\sum_{t=1}^{T}l_t(\theta)\right\}\leq5\left(\frac{1}{\alpha}+GD\right)d\log T.
		\end{equation*}
		Here $\alpha = \frac{C_{\text{down}}}{C_{\text{exp}}}$, $D=2\cdot B_1$ and $G=\sqrt{C_{\text{exp}}}\cdot B_2$.
	\end{lemma}
		With Equation \ref{equation_theorem_12} and Lemma \ref{thmons}, we have:
		\begin{equation}
		Reg=\sum_{t=1}^{T}\left(g(J(u_t^*),u_t^{*})-\E_{N_1, N_2, \ldots, N_{t-1}}[g(J(u_t),u_t^{*})]\right)\leq\frac{2\cdot C}{C_{\text{down}}}\cdot5\left(\frac{1}{\alpha}+GD\right)d\log T.
		\end{equation}
	Therefore, we have proved Lemma \ref{theorem_ONSP_main}.
\end{proof}

\subsection{Lower Bound for Unknown Distribution}\label{sec_lower}

In this part, we evaluate Assumption \ref{assumption_strictly_log_concave} and prove that an $\Omega{(\sqrt{T})}$ lower regret bound is unavoidable with even a slight relaxation: a Gaussian noise with unknown $\sigma$. Our proof is inspired by \citet{broder2012dynamic} Theorem 3.1, while our lower bound relies on more specific assumptions (and thus applies to more general cases).

We firstly state Assumption \ref{assumption_fixed_unknown_parameter} covering this part, and then state Theorem \ref{theorem_lower_bound_square_root_t} as a lower bound:

\begin{assumption}
	The noise $N_t\sim\cN(0,\sigma^2)$ independently, where $0<\sigma\leq 1$ is fixed and \textbf{unknown}.
	\label{assumption_fixed_unknown_parameter}
\end{assumption}

\begin{theorem}[Lower bound with unknown $\sigma$]
	\label{theorem_lower_bound_square_root_t}
Under Assumption \ref{assumption_convex_feature_and_parameter_set}, \ref{assumption_bounded_x_set}, \ref{random_iid_xt} and \ref{assumption_fixed_unknown_parameter}, for any policy (algorithm) $\Psi: \R^{d}\times\left(\R^{d},\R,\{0,1\}\right)^{t-1}\rightarrow\R^{+}$ and any $T>2$, there exists a Gaussian parameter $\sigma\in\R^{+}$, a distribution $\mathbb{D}$ of features and a fixed parameter $\theta^{*}$, such that:
	$Reg_{\Psi}\geq{\frac1{24000}}\cdot\sqrt{T}.$
\end{theorem}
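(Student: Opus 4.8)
\textbf{Proof proposal for Theorem~\ref{theorem_lower_bound_square_root_t}.}

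The plan is to reduce the pricing problem to a hypothesis-testing problem between two noise levels, in the spirit of \citet{broder2012dynamic}, Theorem~3.1, but adapted to the feature-based Gaussian model. First I would simplify to the worst case allowed by the assumptions: fix $d=1$, fix a single feature value $x_t \equiv x$ for all $t$ (which is permitted since $x_t$ can be i.i.d.\ from a point mass), and fix $\theta^*$ so that the expected valuation $u = x\theta^*$ is a known constant --- say $u=0$, or some convenient value in $[0,B]$. Then the only unknown is $\sigma$, and the seller faces a one-dimensional dynamic pricing problem with $N_t \sim \cN(0,\sigma^2)$, $\sigma$ unknown. The expected reward of price $v$ is $g(v) = v\,(1-\Phi((v-u)/\sigma))$, and its maximizer $v^*(\sigma)$ and optimal value $g^*(\sigma)$ depend on $\sigma$; crucially, $v^*(\sigma)$ moves with $\sigma$ (for $u=0$ it scales linearly in $\sigma$), so a policy tuned for one $\sigma$ is suboptimal for another.

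The core construction is to pick two candidate values $\sigma_0$ and $\sigma_1 = \sigma_0(1+\delta)$ with a gap $\delta \asymp T^{-1/4}$, arranged so that (i) the optimal prices $v^*(\sigma_0)$ and $v^*(\sigma_1)$ differ by $\Theta(\delta)$, and (ii) using the ``wrong'' optimal price incurs per-round regret $\Theta(\delta^2) \asymp T^{-1/2}$ --- this uses the local quadratic behavior of $g$ near its maximum, i.e.\ a $\Theta(\delta^2)$ lower bound on $g^*(\sigma_i) - g(v^*(\sigma_{1-i}); \sigma_i)$ that mirrors (in reverse) the quadratic \emph{upper} bound logic. The key steps would then be: (1) argue that any policy achieving small regret under $\sigma_0$ must, for most rounds $t$, play prices within $O(\delta)$ of $v^*(\sigma_0)$ (else it already pays $\Omega(\sqrt T)$); (2) show that the distribution of the observed transcript $hist(T)$ --- which under this construction is a sequence of Bernoulli indicators $\ind_t = \ind(v_t \le N_t + u)$ --- has small total variation / KL divergence between the $\sigma_0$ and $\sigma_1$ worlds when $v_t$ stays near $v^*(\sigma_0)$; concretely, each round contributes $O(\delta^2)$ to the KL, so over $T$ rounds the KL is $O(T\delta^2) = O(\sqrt T)$, hence $O(1)$ when $\delta = c\,T^{-1/4}$; (3) conclude by a Le Cam / Pinsker two-point argument that the policy cannot distinguish the two worlds, so whichever price region it favors, an adversary picks the other $\sigma$ and forces $\Omega(\delta)$ suboptimal rounds for a constant fraction of time, giving total regret $\Omega(T \cdot \delta^2) = \Omega(\sqrt T)$. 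The explicit constant $1/24000$ comes from tracking all the constants through the quadratic bounds, the KL estimate, and Pinsker's inequality.

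There is a subtlety I would need to handle carefully: the KL bound in step~(2) degrades if the policy plays prices $v_t$ that are far from $v^*(\sigma_0)$ in a way that makes the acceptance probability close to $0$ or $1$ (where $\log$ of the likelihood ratio blows up). The clean fix is a truncation / case split: either the policy plays in a good region (prices bounded away from the extremes and near $v^*(\sigma_0)$) for at least, say, $T/2$ rounds --- in which case the KL accounting of step~(2) applies and step~(3) delivers $\Omega(\sqrt T)$ --- or it plays outside that region for more than $T/2$ rounds, in which case it directly suffers $\Omega(\sqrt T)$ regret under $\sigma_0$ itself (indeed $\Omega(T)$), since prices far from $v^*(\sigma_0)$ are $\Theta(1)$-suboptimal, not merely $\Theta(\delta)$-suboptimal. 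Either branch yields the bound. One also needs the adversary to choose $\sigma$ \emph{after} fixing the policy, which is exactly the order of quantifiers in the theorem statement (``there exists $\sigma$''), so a minimax / averaging argument over the two-point prior $\{\sigma_0,\sigma_1\}$ suffices.

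The main obstacle I anticipate is step~(2): getting a clean, rigorous per-round KL bound of the form $\mathrm{KL}(\ind_t \mid \sigma_0 \,\|\, \ind_t \mid \sigma_1) \le C\delta^2$ \emph{uniformly over the prices the policy might adaptively choose in the good region}, and then summing these conditional divergences along the adaptively-generated trajectory (a martingale/chain-rule argument for KL of the full transcript, since $v_t$ depends on the past). This requires care with the adaptivity --- the standard tool is the chain rule $\mathrm{KL}(P_{hist(T)}\|Q_{hist(T)}) = \sum_t \E_P[\mathrm{KL}(P_{\ind_t\mid hist(t-1)}\|Q_{\ind_t\mid hist(t-1)})]$ --- combined with the uniform per-round bound on the good region and the case split above to control the bad region. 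Everything else (the quadratic expansions of $g$, the explicit $\delta = \Theta(T^{-1/4})$ calibration, Pinsker, constant bookkeeping) is routine once this divergence estimate is in place.
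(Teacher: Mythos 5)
Your skeleton (reduce to a single fixed feature, two hypotheses $\sigma_0,\sigma_1$ with gap $\delta\asymp T^{-1/4}$, quadratic separation of the optimal prices, chain rule for the KL of the adaptively generated transcript, Le Cam two-point argument) matches the paper's proof, which follows \citet{broder2012dynamic}. However, there are two linked gaps that prevent your argument from closing as written.

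First, the KL accounting in your step (2) is internally inconsistent: you claim each round contributes $O(\delta^2)$ to the KL, so the total is $O(T\delta^2)=O(\sqrt{T})$, and then call this $O(1)$. With $\delta=cT^{-1/4}$, a transcript KL of order $\sqrt{T}$ diverges, the two worlds become asymptotically perfectly distinguishable, and the Le Cam step yields nothing. Conversely, if you shrink $\delta$ to $O(T^{-1/2})$ to keep a uniform $O(\delta^2)$-per-round KL bounded, the per-round separation $\Theta(\delta^2)$ only forces $\Omega(T\delta^2)=\Omega(1)$ total regret. A uniform per-round KL bound cannot give $\Omega(\sqrt{T})$; the bound must depend on where the policy prices.

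Second --- and this is the idea that repairs the first issue --- the expected valuation $u^*$ cannot be ``$0$ or some convenient value''; it must be chosen so that the optimal price under one hypothesis is \emph{uninformative} about $\sigma$. The paper sets $\sigma_1=1$ and picks $u^*=\sqrt{\pi/2}$ so that $J_1(u^*)=u^*$: at the price $v=u^*$ the acceptance probability is exactly $1/2$ for \emph{every} $\sigma$, so playing the $\sigma_1$-optimal price reveals nothing. Quantitatively, the per-round KL is bounded by $O\big(\delta^2\,(u^*-v_t)^2\big)$ rather than $O(\delta^2)$ (the paper's Lemma \ref{lemma_properties}, Property 3, plus the Bernoulli KL bound), and $(u^*-v_t)^2$ is in turn $O(\text{per-round regret under }\sigma_1)$ by the quadratic lower bound (Property 1). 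Summing gives the self-referential ``learning is costly'' inequality $\mathcal{K}\le 9900\,\delta^2\,\mathrm{Regret}(1,T,\Psi)$: low regret in world $1$ forces small KL, hence indistinguishability, hence $\Omega(\sqrt{T})$ regret in world $\sigma_2$; the paper then combines the two branches without a case split via $x+e^{-x}\ge 1$. With your suggested $u=0$ this structure disappears: the acceptance probability $1-\Phi(v/\sigma)$ at the optimal price $v^*(\sigma_0)=\Theta(\sigma_0)>0$ already differs by $\Theta(\delta)$ across the two worlds, so the seller learns $\sigma$ at the parametric rate while pricing near-optimally and the instance is no longer hard. The choice of $u^*$ is therefore not a detail to be filled in later; it is the construction.
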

\noindent\emph{Remark:} Here we assume $x_t$ to be i.i.d., which also implies the applicability on adversarial features. However, the minimax regret of the stochastic feature setting is $\Theta(\sqrt{T})$ \citep{javanmard2019dynamic}, while existing results have not yet closed the gap in adversarial feature settings.

\noindent\emph{Proof sketch} of Theorem \ref{theorem_lower_bound_square_root_t}. Here we assume a fixed valuation, i.e. $u^{*}=x_t^{\top}\theta^{*}, \forall t=1,2,\ldots$. Equivalently, we assume a fixed feature. The main idea of proof is similar to that in \citet{broder2012dynamic}: we assume $\sigma_1=1, \sigma_2 = 1-T^{-\frac14}$, and we prove that: (1) it is costly for an algorithm to perform well in both cases if the $\sigma$'s are different by a lot, and (2) it is costly for an algorithm to distinguish the two cases if $\sigma$'s are close enough to each other. We put the detailed proof in Appendix \ref{appendix_proof_lower_bound}.


\section{Numerical Result}
\label{sec_num_result}

\begin{figure*}[t]
	\centering
	\begin{subfigure}[t]{0.48\textwidth}
		\centering
		\includegraphics[width= \textwidth]{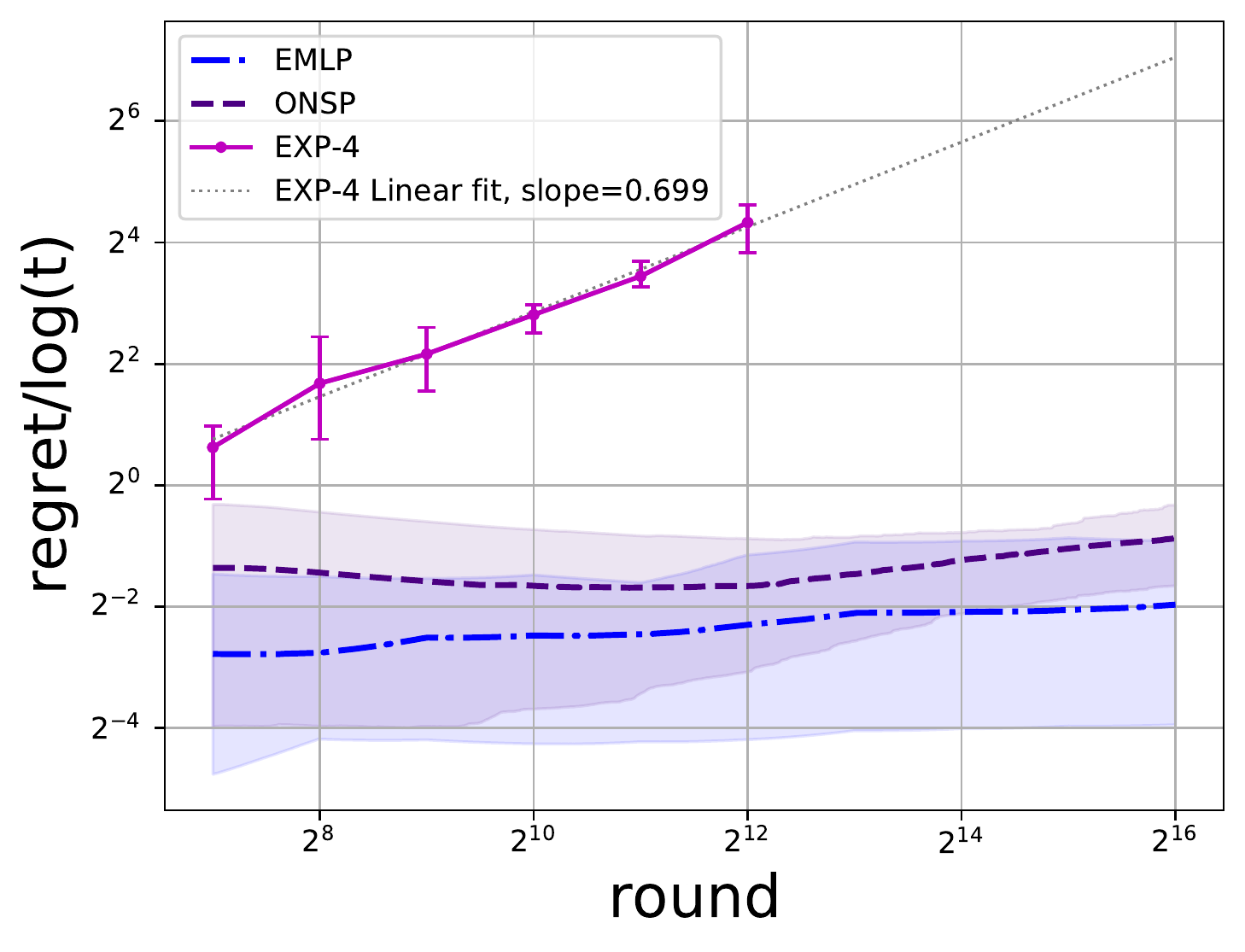}
		\caption{Stochastic feature}\label{stochastic_plot}
	\end{subfigure}
\quad	\begin{subfigure}[t]{0.48\textwidth}
		\centering
		\includegraphics[width=\textwidth]{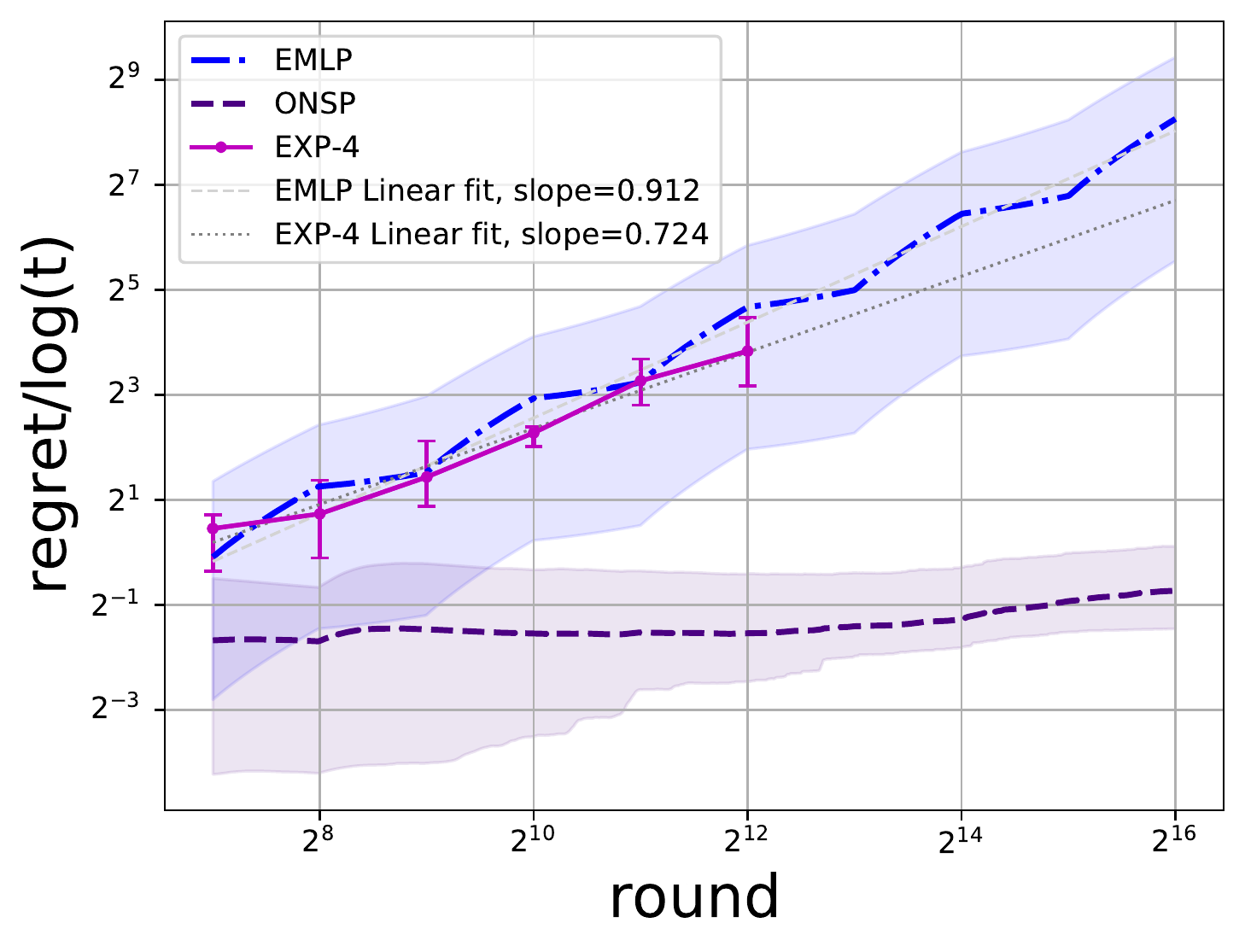}
		\caption{Adversarial feature}\label{adversarial_plot}
	\end{subfigure}
	\caption{\small
		The regret of EMLP, ONSP and EXP-4 on simulated examples (we only conduct EXP-4 up to $T=2^{12}$ due to its exponential time consuming), with Figure \subref{stochastic_plot} for stochastic features and Figure \subref{adversarial_plot} for adversarial ones. The plots are in log-log scales with all regrets divided by a $\log(t)$ factor to show the convergence. For EXP-4, we discretize the parameter space with $T^{-\frac13}$-size grids, which would incur an $\tilde{O}(T^{\frac23})$ regret according to \citet{cohen2020feature_journal}. We also plot linear fits for some regret curves, where a slope-$\alpha$ line indicates an $O(T^{\alpha})$ regret. Besides, we draw error bars and bands with 0.95 coverage using Wald’s test. The two diagrams reveal that (i) logarithmic regrets of EMLP and ONSP in the stochastic setting, (ii) a nearly-linear regret of EMLP in the adversarial setting, and (iii) $O(T^{\frac23})$ regrets of EXP-4 in both settings.
	} \label{fig:plots}
\end{figure*}

In this section, we conduct numerical experiments to validate EMLP and ONSP. In comparison with the existing work, we implement a discretized EXP-4 \citep{auer2002nonstochastic} algorithm for pricing, as is introduced in \citet{cohen2020feature_journal} (in a slightly different setting). We will test these three algorithms in both stochastic and adversarial settings.

Basically, we assume $d=2, B_1=B_2=B=1$ and $N_t\sim\cN(0,\sigma^2)$ with $\sigma = 0.25$. In both settings, we conduct EMLP and ONSP for $T=2^{16}$ rounds. For ONSP, we empirically select $\gamma$ and $\epsilon$ that accelerates the convergence, instead of using the values specified in Lemma \ref{thmons}. Since EXP-4 consumes exponential time and requires the knowledge of $T$ in advance to discretize the policy and valuation spaces, we execute EXP-4 for a series of $T=2^k, k=1,2,\ldots, 12$. We repeat every experiment 5 times for each setting and then take an average.

\ifbool{compact}{\noindent\textbf{Stochastic Setting.}}{\paragraph{Stochastic Setting.}} We implement and test EMLP, ONSP and EXP-4 with stochastic $\{x_t\}$'s. The numerical results are shown in Figure \ref{stochastic_plot} on a log-log diagram, with the regrets divided by $\log(t)$. It shows $\log(t)$-convergences on EMLP and ONSP, while EXP-4 is in a $t^{\alpha}$ rate with $\alpha\approx 0.699$.

\ifbool{compact}{\noindent\textbf{Adversarial Setting.}}{\paragraph{Adversarial Setting.}} We implement the three algorithms and test them with an adversarial $\{x_t\}$'s: for the $k$-th epoch, i.e. $t=2^{k-1}, 2^{k-1}+1, \ldots, 2^k-1$, we let $x_t = [1, 0]^{\top}$ if $k\equiv1(\mod2)$ and $x_t=[0,1]^{\top}$ if $k\equiv0(\mod2)$. The numerical results are shown in Figure \ref{adversarial_plot} on a log-log diagram, with the regrets divided by $\log(t)$. The log-log plots of ONSP and EXP-4 are almost the same as those in Figure \ref{stochastic_plot}. However, EMLP shows an almost linear ($t^{\alpha}$ rate with $\alpha\approx 0.912$) regret in this adversarial setting. This is because the adversarial series only trains one dimension of $\theta$ in each epoch, while the other is arbitrarily initialized and does not necessarily converge. However, in the next epoch, the incorrect dimension is exploited. Therefore, a linear regret originates.

\vspace{-0.5em}
\section{Discussion}
\vspace{-0.5em}
\label{sec_discussion}
Here we discuss the coefficients on our regret bounds as a potential extension of future works. In Appendix \ref{appendix_sec_more_discussion} we will discuss more on algorithmic design, problem modeling, and ethic issues.

\ifbool{compact}{\noindent\textbf{Coefficients on Regret Bounds.}}{\paragraph{Coefficients on Regret Bounds.}}
The exact regret bounds of both EMLP and ONSP contain a constant $\frac{C_{\text{exp}}}{C_{\text{down}}}$ that highly depends on the noise CDF $F$ and could be large. A detailed analysis in Appendix \ref{appendix_coefficient} shows that $\frac{C_{\text{exp}}}{C_{\text{down}}}$ is exponentially large w.r.t. $\frac{B}{\sigma}$ (see Equation \ref{equ_down_exp} and Lemma \ref{lemma_lambda}) for Gaussian noise $\cN(0, \sigma^2)$, which implies that a smaller noise variance would lead to a (much) larger regret bound. This is very counter-intuitive as a larger noise usually leads to a more sophisticated situation, but similar phenomenons also occur in existing algorithms that are suitable for constant-variance noise, such as RMLP in \citet{javanmard2019dynamic} and OORMLP in \citet{wang2020online}. In fact, it is because a (constantly) large noise would help explore the unknown parameter $\theta^*$ and smoothen the expected regret. In this work, this can be addressed by increasing $T$ since we mainly concern the asymptotic regrets as $T\rightarrow\infty$ with fixed noise distributions. However, we admit that it is indeed a nontrivial issue for finite $T$ and small $\sigma$ situations. There exists a ``ShallowPricing'' method in \citet{cohen2020feature_journal} that can deal with a very-small-variance noise setting (when $\sigma=\tilde{O}(\frac1{T})$) and achieve a logarithmic regret. Specifically, its regret bound would decrease as the noise variance $\sigma$ decreases (but would still not reach $O(\log\log{T})$ as the noise vanishes). We might also apply this method as a preprocess to cut the parameter domain and decrease $\frac{B}{\sigma}$ within logarithmic trials (see \citet{cohen2020feature_journal} Thm. 3), but it is still open whether a $\log(T)$ regret is achievable when $\sigma=\Theta(T^{-\alpha})$ for $\alpha\in(0,1)$.

\section{Conclusion}
\label{sec_conclusion}
In this work, we studied the problem of  online feature-based dynamic pricing with a noisy linear valuation in both stochastic and adversarial settings. We proposed a max-likelihood-estimate-based algorithm (EMLP) for stochastic features and an online-Newton-step-based algorithm (ONSP) for adversarial features. Both of them enjoy a regret guarantee of $O(d\log{T})$, which also attains the information-theoretic limit up to a constant factor. Compared with existing works, EMLP gets rid of strong assumptions on the distribution of the feature vectors in the stochastic setting, and ONSP improves the regret bound exponentially from $O(T^{2/3})$ to $O(\log{T})$ in the adversarial setting. We also showed that knowing the noise distribution (or the demand curve) is required to obtain logarithmic regret, where we prove a  lower bound of $\Omega(\sqrt{T})$ on the regret for the case when the noise is knowingly Gaussian but with an  unknown $\sigma$. In addition, we conducted numerical experiments to empirically validate the scaling of our algorithms. Finally, we discussed the regret dependence on the noise variance, and proposed a subtle open problem for further study.


\section*{Acknowledgments}
The work is partially supported by the Adobe Data Science Award and a start-up grant from the UCSB Department of Computer Science. We appreciate the input from anonymous reviewers and AC as well as a discussion with Akshay Krishnamurthy for clarifying some details of \citet{krishnamurthy2020contextual}.

\bibliographystyle{abbrvnat}  
\bibliography{ref_log}

\begin{thebibliography}{43}
\providecommand{\natexlab}[1]{#1}
\providecommand{\url}[1]{\texttt{#1}}
\expandafter\ifx\csname urlstyle\endcsname\relax
  \providecommand{\doi}[1]{doi: #1}\else
  \providecommand{\doi}{doi: \begingroup \urlstyle{rm}\Url}\fi

\bibitem[Agarwal et~al.(2014)Agarwal, Hsu, Kale, Langford, Li, and
  Schapire]{agarwal2014taming}
A.~Agarwal, D.~Hsu, S.~Kale, J.~Langford, L.~Li, and R.~Schapire.
\newblock Taming the monster: A fast and simple algorithm for contextual
  bandits.
\newblock In \emph{International Conference on Machine Learning (ICML-14)},
  pages 1638--1646, 2014.

\bibitem[Aldrich et~al.(1984)Aldrich, Nelson, and Adler]{aldrich1984linear}
J.~H. Aldrich, F.~D. Nelson, and E.~S. Adler.
\newblock \emph{Linear Probability, Logit, and Probit Models}.
\newblock Number~45. Sage, 1984.

\bibitem[Amin et~al.(2014)Amin, Rostamizadeh, and Syed]{amin2014repeated}
K.~Amin, A.~Rostamizadeh, and U.~Syed.
\newblock Repeated contextual auctions with strategic buyers.
\newblock In \emph{Advances in Neural Information Processing Systems
  (NIPS-14)}, pages 622--630, 2014.

\bibitem[Araman and Caldentey(2009)]{araman2009dynamic}
V.~F. Araman and R.~Caldentey.
\newblock Dynamic pricing for nonperishable products with demand learning.
\newblock \emph{Operations Research}, 57\penalty0 (5):\penalty0 1169--1188,
  2009.

\bibitem[Auer et~al.(2002)Auer, Cesa-Bianchi, Freund, and
  Schapire]{auer2002nonstochastic}
P.~Auer, N.~Cesa-Bianchi, Y.~Freund, and R.~E. Schapire.
\newblock The nonstochastic multiarmed bandit problem.
\newblock \emph{SIAM Journal on Computing}, 32\penalty0 (1):\penalty0 48--77,
  2002.

\bibitem[Aydin and Ziya(2009)]{aydin2009personalized}
G.~Aydin and S.~Ziya.
\newblock Personalized dynamic pricing of limited inventories.
\newblock \emph{Operations Research}, 57\penalty0 (6):\penalty0 1523--1531,
  2009.

\bibitem[Besanko et~al.(1998)Besanko, Gupta, and Jain]{besanko1998logit}
D.~Besanko, S.~Gupta, and D.~Jain.
\newblock Logit demand estimation under competitive pricing behavior: An
  equilibrium framework.
\newblock \emph{Management Science}, 44\penalty0 (11-part-1):\penalty0
  1533--1547, 1998.

\bibitem[Besanko et~al.(2003)Besanko, Dub{\'e}, and
  Gupta]{besanko2003competitive}
D.~Besanko, J.-P. Dub{\'e}, and S.~Gupta.
\newblock Competitive price discrimination strategies in a vertical channel
  using aggregate retail data.
\newblock \emph{Management Science}, 49\penalty0 (9):\penalty0 1121--1138,
  2003.

\bibitem[Besbes and Zeevi(2015)]{besbes2015surprising}
O.~Besbes and A.~Zeevi.
\newblock On the (surprising) sufficiency of linear models for dynamic pricing
  with demand learning.
\newblock \emph{Management Science}, 61\penalty0 (4):\penalty0 723--739, 2015.

\bibitem[Broder and Rusmevichientong(2012)]{broder2012dynamic}
J.~Broder and P.~Rusmevichientong.
\newblock Dynamic pricing under a general parametric choice model.
\newblock \emph{Operations Research}, 60\penalty0 (4):\penalty0 965--980, 2012.

\bibitem[Chan et~al.(2009)Chan, Kadiyali, and Xiao]{chan2009structural}
T.~Chan, V.~Kadiyali, and P.~Xiao.
\newblock Structural models of pricing.
\newblock \emph{Handbook of pricing research in marketing}, pages 108--131,
  2009.

\bibitem[Chen and Gallego(2021)]{chen2018primal}
N.~Chen and G.~Gallego.
\newblock A primal-dual learning algorithm for personalized dynamic pricing
  with an inventory constraint.
\newblock \emph{Mathematics of Operations Research}, 2021.

\bibitem[Chen and Farias(2013)]{chen2013simple}
Y.~Chen and V.~F. Farias.
\newblock Simple policies for dynamic pricing with imperfect forecasts.
\newblock \emph{Operations Research}, 61\penalty0 (3):\penalty0 612--624, 2013.

\bibitem[Chu et~al.(2011)Chu, Li, Reyzin, and Schapire]{chu2011contextual}
W.~Chu, L.~Li, L.~Reyzin, and R.~Schapire.
\newblock Contextual bandits with linear payoff functions.
\newblock In \emph{International Conference on Artificial Intelligence and
  Statistics (AISTATS-11)}, pages 208--214, 2011.

\bibitem[Cohen et~al.(2020)Cohen, Lobel, and
  Paes~Leme]{cohen2020feature_journal}
M.~C. Cohen, I.~Lobel, and R.~Paes~Leme.
\newblock Feature-based dynamic pricing.
\newblock \emph{Management Science}, 66\penalty0 (11):\penalty0 4921--4943,
  2020.

\bibitem[Cournot(1897)]{cournot1897researches}
A.~A. Cournot.
\newblock \emph{Researches into the Mathematical Principles of the Theory of
  Wealth}.
\newblock Macmillan, 1897.

\bibitem[den Boer(2015)]{den2015dynamic}
A.~V. den Boer.
\newblock Dynamic pricing and learning: historical origins, current research,
  and new directions.
\newblock \emph{Surveys in Operations Research and Management Science},
  20\penalty0 (1):\penalty0 1--18, 2015.

\bibitem[Draganska and Jain(2006)]{draganska2006consumer}
M.~Draganska and D.~C. Jain.
\newblock Consumer preferences and product-line pricing strategies: An
  empirical analysis.
\newblock \emph{Marketing science}, 25\penalty0 (2):\penalty0 164--174, 2006.

\bibitem[Evans(1924)]{evans1924dynamics}
G.~C. Evans.
\newblock The dynamics of monopoly.
\newblock \emph{The American Mathematical Monthly}, 31\penalty0 (2):\penalty0
  77--83, 1924.

\bibitem[Hazan(2016)]{hazan2019introduction}
E.~Hazan.
\newblock Introduction to online convex optimization.
\newblock \emph{Foundations and Trends in Optimization}, 2\penalty0
  (3-4):\penalty0 157--325, 2016.

\bibitem[Iyengar et~al.(2007)Iyengar, Ansari, and Gupta]{iyengar2007model}
R.~Iyengar, A.~Ansari, and S.~Gupta.
\newblock A model of consumer learning for service quality and usage.
\newblock \emph{Journal of Marketing Research}, 44\penalty0 (4):\penalty0
  529--544, 2007.

\bibitem[Javanmard and Nazerzadeh(2019)]{javanmard2019dynamic}
A.~Javanmard and H.~Nazerzadeh.
\newblock Dynamic pricing in high-dimensions.
\newblock \emph{The Journal of Machine Learning Research}, 20\penalty0
  (1):\penalty0 315--363, 2019.

\bibitem[Joskow and Wolfram(2012)]{joskow2012dynamic}
P.~L. Joskow and C.~D. Wolfram.
\newblock Dynamic pricing of electricity.
\newblock \emph{American Economic Review}, 102\penalty0 (3):\penalty0 381--85,
  2012.

\bibitem[Kadiyali et~al.(1996)Kadiyali, Vilcassim, and
  Chintagunta]{kadiyali1996empirical}
V.~Kadiyali, N.~J. Vilcassim, and P.~K. Chintagunta.
\newblock Empirical analysis of competitive product line pricing decisions:
  Lead, follow, or move together?
\newblock \emph{Journal of Business}, pages 459--487, 1996.

\bibitem[Keskin and Zeevi(2014)]{keskin2014dynamic}
N.~B. Keskin and A.~Zeevi.
\newblock Dynamic pricing with an unknown demand model: Asymptotically optimal
  semi-myopic policies.
\newblock \emph{Operations Research}, 62\penalty0 (5):\penalty0 1142--1167,
  2014.

\bibitem[Kincaid and Darling(1963)]{kincaid1963inventory}
W.~Kincaid and D.~Darling.
\newblock An inventory pricing problem.
\newblock \emph{Journal of Mathematical Analysis and Applications}, 7:\penalty0
  183--208, 1963.

\bibitem[Kleinberg and Leighton(2003)]{kleinberg2003value}
R.~Kleinberg and T.~Leighton.
\newblock The value of knowing a demand curve: Bounds on regret for online
  posted-price auctions.
\newblock In \emph{IEEE Symposium on Foundations of Computer Science
  (FOCS-03)}, pages 594--605. IEEE, 2003.

\bibitem[Koren and Levy(2015)]{koren2015fast}
T.~Koren and K.~Levy.
\newblock Fast rates for exp-concave empirical risk minimization.
\newblock In \emph{Advances in Neural Information Processing Systems
  (NIPS-15)}, pages 1477--1485, 2015.

\bibitem[Kr{\"a}mer et~al.(2018)Kr{\"a}mer, Friesen, and
  Shelton]{kramer2018airline}
A.~Kr{\"a}mer, M.~Friesen, and T.~Shelton.
\newblock Are airline passengers ready for personalized dynamic pricing? a
  study of german consumers.
\newblock \emph{Journal of Revenue and Pricing Management}, 17\penalty0
  (2):\penalty0 115--120, 2018.

\bibitem[Krishnamurthy et~al.(2021)Krishnamurthy, Lykouris, Podimata, and
  Schapire]{krishnamurthy2020contextual}
A.~Krishnamurthy, T.~Lykouris, C.~Podimata, and R.~Schapire.
\newblock Contextual search in the presence of irrational agents.
\newblock In \emph{Proceedings of the 53rd Annual ACM SIGACT Symposium on
  Theory of Computing (STOC-21)}, pages 910--918, 2021.

\bibitem[Lambrecht et~al.(2007)Lambrecht, Seim, and Skiera]{lambrecht2007does}
A.~Lambrecht, K.~Seim, and B.~Skiera.
\newblock Does uncertainty matter? consumer behavior under three-part tariffs.
\newblock \emph{Marketing Science}, 26\penalty0 (5):\penalty0 698--710, 2007.

\bibitem[Langford and Zhang(2007)]{langford2007epoch}
J.~Langford and T.~Zhang.
\newblock The epoch-greedy algorithm for contextual multi-armed bandits.
\newblock In \emph{Advances in Neural Information Processing Systems
  (NIPS-07)}, pages 817--824, 2007.

\bibitem[Leme and Schneider(2018)]{leme2018contextual}
R.~P. Leme and J.~Schneider.
\newblock Contextual search via intrinsic volumes.
\newblock In \emph{2018 IEEE 59th Annual Symposium on Foundations of Computer
  Science (FOCS-18)}, pages 268--282. IEEE, 2018.

\bibitem[Liu et~al.(2021)Liu, Leme, and Schneider]{liu2021optimal}
A.~Liu, R.~P. Leme, and J.~Schneider.
\newblock Optimal contextual pricing and extensions.
\newblock In \emph{Proceedings of the 2021 ACM-SIAM Symposium on Discrete
  Algorithms (SODA-21)}, pages 1059--1078. SIAM, 2021.

\bibitem[Lobel et~al.(2018)Lobel, Leme, and Vladu]{lobel2018multidimensional}
I.~Lobel, R.~P. Leme, and A.~Vladu.
\newblock Multidimensional binary search for contextual decision-making.
\newblock \emph{Operations Research}, 66\penalty0 (5):\penalty0 1346--1361,
  2018.

\bibitem[Mazumdar et~al.(2005)Mazumdar, Raj, and Sinha]{mazumdar2005reference}
T.~Mazumdar, S.~P. Raj, and I.~Sinha.
\newblock Reference price research: Review and propositions.
\newblock \emph{Journal of Marketing}, 69\penalty0 (4):\penalty0 84--102, 2005.

\bibitem[Mourtada(2019)]{mourtada2019exact}
J.~Mourtada.
\newblock Exact minimax risk for linear least squares, and the lower tail of
  sample covariance matrices.
\newblock \emph{arXiv preprint arXiv:1912.10754}, 2019.

\bibitem[Murphy(2012)]{murphy2012machine}
K.~P. Murphy.
\newblock \emph{Machine Learning: a Probabilistic Perspective}.
\newblock MIT press, 2012.

\bibitem[Qiang and Bayati(2016)]{qiang2016dynamic}
S.~Qiang and M.~Bayati.
\newblock Dynamic pricing with demand covariates.
\newblock \emph{arXiv preprint arXiv:1604.07463}, 2016.

\bibitem[Schultz et~al.(1938)]{schultz1938theory}
H.~Schultz et~al.
\newblock \emph{Theory and Measurement of Demand}.
\newblock The University of Chicago Press, 1938.

\bibitem[Wang et~al.(2020)Wang, Wang, Sun, and Cheng]{wang2020online}
C.-H. Wang, Z.~Wang, W.~W. Sun, and G.~Cheng.
\newblock Online regularization for high-dimensional dynamic pricing
  algorithms.
\newblock \emph{arXiv preprint arXiv:2007.02470}, 2020.

\bibitem[Whittle(1980)]{whittle1980multi}
P.~Whittle.
\newblock Multi-armed bandits and the gittins index.
\newblock \emph{Journal of the Royal Statistical Society: Series B
  (Methodological)}, 42\penalty0 (2):\penalty0 143--149, 1980.

\bibitem[Wright(1995)]{wright1995logistic}
R.~E. Wright.
\newblock Logistic regression.
\newblock 1995.

\end{thebibliography}

\newpage

\appendix

\onecolumn
\renewcommand\thesection{\Alph{section}}
\renewcommand\thesubsection{\Alph{section}.\arabic{subsection}}

\begin{appendices}
\textbf{\huge APPENDIX}

\section{Other related works}\label{appendix_other_related_works}
Here we will briefly review the history and recent studies that are related to our work. For the historical introductions, we mainly refer to \citet{den2015dynamic} as a survey. For bandit approaches, we will review some works that apply bandit algorithms to settle pricing problems. For the structural models, we will introduce different modules based on the review in \citet{chan2009structural}. Based on the existing works, we might have a better view of our problem setting and methodology.

\subsection{History of Pricing}

{It was the work of \citet{cournot1897researches} in 1897 that firstly applied mathematics to analyze the relationship between prices and demands. In that work, the price was denoted as $p$ and the demand was defined as a \emph{demand function} $F(p)$. Therefore, the revenue could be written as $pF(p)$. This was a straightforward interpretation of the general pricing problem, and the key to solving it was estimations of $F(p)$ regarding different products. Later in 1938, the work \citet{schultz1938theory} proposed price-demand measurements on exclusive kinds of products.  It is worth mentioning that these problems are ``static pricing’’ ones, because $F$ is totally determined by price $p$ and we only need to insist on the optimal one to maximize our profits.
	
However, the static settings were qualified by the following two observations: on the one hand, a demand function may not only depends on the static value of $p$, but also be affected by the trend of $p$'s changing \citep{evans1924dynamics, mazumdar2005reference}; on the other hand, even if $F(p)$ is static, $p$ itself might change over time according to other factors such as inventory level \citep{kincaid1963inventory}. As a result, it is necessary to consider dynamics in both demand and price, which leads to a ``dynamic pricing’’ problem setting.}

\subsection{Dynamic Pricing as Bandits}
\label{d_p_a_bandits}
As is said in Section \ref{sec_related_works}, the pricing problem can be viewed as a stochastic contextual bandits problem \citep[see, e.g.,][]{langford2007epoch,agarwal2014taming}. Even though we may not know the form of the demand function, we can definitely see feedback of demands, i.e. how many products are sold out, which enables us to learn a better decision-making policy. Therefore, it can be studied in a bandit module. If the demand function is totally agnostic, i.e. the evaluations (the highest prices that customers would accept) come at random or even at adversary over time, then it can be modeled as a Multi-arm bandit (MAB) problem \citep{whittle1980multi} exactly. In our paper, instead, we focus on selling different products with a great variety of features. This can be characterized as a Contextual bandit (CB) problem \citep{auer2002nonstochastic, langford2007epoch}. The work \citet{cohen2020feature_journal}, which applies the ``EXP-4'' algorithm from \citet{auer2002nonstochastic}, also mentions that ``the arms represent prices and the payoffs from the different arms are correlated since the measures of demand evaluated at different price points are correlated random variables’’. A variety of existing works, including \citet{ kleinberg2003value, araman2009dynamic, chen2013simple, keskin2014dynamic, besbes2015surprising}, has been approaching the demand function from a perspective of from either parameterized or non-parameterized bandits.

However, our problem setting is different from a contextual bandits setting in at least two perspectives: feedback and regret. The pricing problem has a specially structured feedback between full information and bandits setting. Specifically, $r_t>0$ implies that all policies producing $v<v_t$ will end up receiving $r'_t=v$, and $r_t =0$ implies that all policies producing $v>v_t$ will end up receiving $r'_t=0$. However, the missing patterns are confounded with the rewards. Therefore it is non-trivial to leverage this structure to improve the importance sampling approach underlying the algorithm of \citet{agarwal2014taming}. We instead consider the natural analog to the linear contextual bandits setting \citep{chu2011contextual}\footnote{But do notice that our expected reward above is not linear, even if the valuation function is.} and demonstrate that in this case an exponential improvement in the regret is possible using the additional information from the censored feedback. As for regret, while in contextual bandits it refers to a comparison with the optimal policy, it is here referring to a comparison with the optimal \emph{action}. In other words, though our approaches (both in EMLP and in ONSP) are finding the true parameter $\theta^{*}$, the regret is defined as the ``revenue gap'' between the optimal price and our proposed prices. These are actually equivalent in our fully-parametric setting (where we assume a linear-valuation-known-noise model), but will differ a lot in partially parametric and totally agnostic settings.


\subsection{Structural Model}
{While a totally agnostic model guarantees the most generality, a structural model would help us better understand the mechanism behind the observation of prices and demands. The key to a structural pricing model is the \emph{behavior} of agents in the market, including customers and/or firms. In other words, the behavior of each side can be described as a decision model. 
From the perspective of demand (customers), the work \citet{kadiyali1996empirical} adopts a linear model on laundry detergents market, \citet{iyengar2007model} and \citet{lambrecht2007does} study three-part-tariff pricing problems on wireless and internet services with mixed logit models. Besanko et al. assumed an aggregate logit model on customers in works \citet{besanko1998logit} and \citet{besanko2003competitive} in order to study the competitive behavior of manufacturers in ketchup market. Meanwhile, the supply side is usually assumed to be more strategic, such as Bertrand-Nash behaviors \citep{kadiyali1996empirical, besanko1998logit, draganska2006consumer}. For more details, please see \citet{chan2009structural}.}

\section{Proofs}
\subsection{Proof of Lemma \ref{g_unimodal}}
\label{appendix_subsec_proof_lemma_g_unimodal}
\begin{proof}
	Since $v^{*} = \argmax g(v,u)$, we have:
	\begin{equation*}
	\begin{aligned}
	&\frac{\partial g(v,u)}{\partial v}|_{v=v^*}=0\\
	\Leftrightarrow& 1-F(v^{*}-u)-v^{*}\cdot f(v^{*}-u)=0\\
	\Leftrightarrow& \frac{1-F(v^{*}-u)}{f(v^{*}-u)}-(v^{*}-u)=u\\
	\end{aligned}
	\end{equation*}
	Define $\varphi(\omega) = \frac{1-F(\omega)}{f(\omega)}-\omega$, and we take derivatives:
	\begin{equation*}
	\varphi'(\omega)=\frac{-f^2(\omega)-(1-F(\omega))f'(w)}{f^2(w)}-1=\frac{\text{d}^2\log(1-F(\omega))}{\text{d}\omega^2}\cdot\frac{(1-F(\omega))^2}{(f(\omega))^2}-1<-1,
	\end{equation*}
	where the last equality comes from the strict log-concavity of $(1-F(\omega))$. Therefore, $\varphi(\omega)$ is decreasing and $\varphi(+\infty)=-\infty$. Also, notice $\varphi(-\infty)=+\infty$, we know that for any $u\in\R$, there exists an $\omega$ such that $\varphi(\omega)=u$. For $u\geq 0$, we know that $g(v,u)\geq0$ for $v\geq0 $ and $g(v,u)<0$ for $v<0$. Therefore, $v^{*}\geq 0$ if $u\geq 0$.
\end{proof}

\subsection{Proofs in Section \ref{subsec_random_analysis}}




\subsubsection{Proof of Lemma \ref{analytical_regret_bound}}

\label{appendix_subsec_proof_of_regret_bound}

\begin{proof}
		We again define $\varphi(\omega) = \frac{1-F(\omega)}{f(\omega)}-\omega$ as in Appendix \ref{appendix_subsec_proof_lemma_g_unimodal}. According to Equation \ref{def_J}, we have:
		\begin{equation}
		\begin{aligned}
			\ &\frac{\partial g(v,u)}{\partial v}|_{v=J(u)}=0\\
			\Rightarrow& 1-F(J(u)-u)-J(u)\cdot f(J(u)-u)=0\\
			\Rightarrow& \varphi(J(u)-u)=u\\
			\Rightarrow& J(u)=u+\varphi^{-1}(u)\\
			\Rightarrow& J'(u) =  1+\frac{1}{\varphi'(\varphi^{-1}(u))}.
		\end{aligned}
		\label{eqn_implicit_derivatives}
		\end{equation}
		The last line of Equation \ref{eqn_implicit_derivatives} is due to the Implicit Function Derivatives Principle. From the result in Appendix \ref{appendix_subsec_proof_lemma_g_unimodal}, we know that $\varphi'(\omega)<-1, \forall\omega\in\R$. Therefore, we have $J'(u)\in(0,1), u\in\R$, and hence $0 \geq J(u)<u+J(0)$ for $u\geq 0$. Since $u\in[0,B]$, we may assume that $v\in[0,B+J(0)]$ without losing generality. In the following part, we will frequently use this range.
		
		Denote $u:=x_t^{\top}\theta, u^{*}=x_t^{\top}\theta^{*}$. According to Equation \ref{eqn_parameter_based_regret}, we know that:
		\begin{equation*}
		\begin{aligned}
			Reg_t(\theta)&=g(J(u^{*}),u^{*})-g(J(u),u^{*})\\
			&=-\frac{\partial g(v,u^{*})}{\partial v}|_{v=J(u^{*})}(J(u^{*}-J(u)))+\frac12\left(-\frac{\partial^2 g(v,u^{*})}{\partial v^2}|_{v=\tilde{v}}\right)(J(u^{*})-J(u))^2\\
			&\leq 0 + \frac12\max_{\tilde{v}\in[0, B+J(0)]} \left(-\frac{\partial^2 g(v,u^{*})}{\partial v^2}|_{v=\tilde{v}}\right)\cdot(J(u^{*})-J(u))^2\\
			&=\frac12\max_{\tilde{v}\in[0, B+J(0)]} \left(2f(\tilde{v}-u^{*})+\tilde{v}\cdot{f'}(\tilde{v}-u^{*})\right)\cdot(J(u^{*})-J(u))^2\\
			&\leq\frac12(2B_f+(B+J(0))\cdot B_{f'})(J(u^{*})-J(u))^2\\
			&\leq\frac12(2B_f+(B+J(0))\cdot B_{f'})(u^{*}-u)^2\\
			&=\frac12(2B_f+(B+J(0))\cdot B_{f'})(\theta^{*}-\theta)^{\top}x_tx_t^{\top}(\theta^{*}-\theta).
		\end{aligned}
		\end{equation*}
		Here the first line is from the definition of $g$ and $Reg(\theta)$, the second line is due to Taylor's Expansion, the third line is from the fact that $J(u^{*})$ maximizes $g(v,u^{*})$ with respect to $v$, the fourth line is by calculus, the fifth line is from the assumption that $0<f(\omega)\leq B_{f}, |f'(\omega)|\leq B_{f'}$ and $v\in[0, B+J(0)]$, the sixth line is because of $J'(u)\in(0,1), \forall u\in\R$, and the seventh line is from the definition of $u$ and $u^{*}$.
\end{proof}

\subsubsection{Proof of Lemma \ref{strong_convexity}}

\label{appendix_subsec_proof_of_strong_convexity}

\begin{proof}
	We take derivatives of $l_{t}(\theta)$, and we get:
	
	\begin{equation}
	\begin{aligned}
		l_{t}(\theta)=&\ind_t\left(-\log(1-F(v_t-x_t^{\top}\theta))\right)+(1-\ind_t)\left(-\log(F(v_t-x_t^{\top}\theta))\right)\\
		\nabla l_t(\theta)  =&\ind_t\left(-\frac{f(v_t-x_t^{\top}\theta)}{1-F(v_t-x_t^{\top}\theta)}\right)\cdot x_t+(1-\ind_t)\left(\frac{f(v_t-x_t^{\top}\theta)}{F(v_t-x_t^{\top}\theta)}\right)\cdot x_t\\
		\nabla^2 l_t(\theta) =&\ind_t\cdot\frac{f(v_t-x_t^{\top}\theta)^2+f'(v_t-x_t^{\top}\theta)\cdot(1-F(v_t-x_t^{\top}\theta))}{(1-F(v_t-x_t^{\top}\theta))^2}\cdot{x_tx_t^{\top}}\\
		&+ (1-\ind_t)\cdot\frac{f(v_t-x_t^{\top}\theta)^2-f'(v_t-x_t^{\top}\theta)F(v_t-x_t^{\top}\theta)}{F(v_t-x_t^{\top}\theta)^2}\cdot{x_tx_t^{\top}}\\
		=&\ind_t\cdot\frac{-\text{d}^2 \log(1-F(\omega))}{\text{d}\omega^2}|_{\omega=v_t-x_t^{\top}\theta}\cdot{x_tx_t^{\top}} + (1-\ind_t)\frac{-\text{d}^2 \log(F(\omega))}{\text{d}\omega^2}|_{\omega=v_t-x_t^{\top}\theta}\cdot{x_tx_t^{\top}}\\
		\succeq&\inf_{\omega\in[-B, B+J(0)]}\min\left\{\frac{\text{d}^2\log(1-F(\omega))}{\text{d}\omega^2}, \frac{\text{d}^2\log(F(\omega))}{\text{d}\omega^2}\right\}\\
		=&C_{\text{down}}x_tx_t^{\top},
	\end{aligned}
	\label{l_grad_Hess}
	\end{equation}
	which directly proves the first inequality. For the second inequality, just notice that
	\begin{equation}
	\begin{aligned}
		\nabla l_t(\theta)\nabla l_t(\theta)^{\top}=&\ind_t\left(\frac{f(v_t-x_t^{\top}\theta)}{1-F(v_t-x_t^{\top}\theta)}\right)^2x_tx_t^{\top} + (1-\ind_t)\left(\frac{f(v_t-x_t^{\top}\theta)}{F(v_t-x_t^{\top}\theta)}\right)^2x_tx_t^{\top}\\
		\preceq&\sup_{\omega\in[-B, B+J(0)]}\max\{\left(\frac{f(\omega)}{F(\omega)}\right)^2, \left(\frac{f(\omega)}{1-F(\omega)}\right)^2\}x_tx_t^{\top}\\
		=&C_{\text{exp}}x_tx_t^{\top}.
	\end{aligned}
	\end{equation}
	The only thing to point out is that $\frac{f(\omega)}{F(\omega)}$ and $\frac{f(\omega)}{1-F(\omega)}$ are all continuous for $\omega\in[-B, B+J(0)]$, as $F(\omega)$ is strictly increasing and thus $0<F(\omega)<1, \omega\in\R$.
\end{proof}

\subsubsection{Proof of Lemma \ref{before_expectation}}

\label{appendix_subsec_proof_of_uniform_stability}

		\begin{proof}
			{
				In the following part, we consider a situation that an epoch of $n\geq 2$ rounds of pricing is conducted, generating $l_j(\theta)$ as negative likelihood functions, $j=1,2,\ldots, n$. Define a ``\textbf{leave-one-out}''negative log-likelihood function $$\tilde{L}_i(\theta)=\frac{1}{n}\sum_{j=1,j\neq i}^{n}l_j(\theta),$$ and let $$\tilde{\theta}_{i}:=\mathop{\arg\min}\limits_{\theta}\tilde{L}_{i}(\theta).$$ Based on this definition, we know that $\tilde{\theta}_{i}$ is independent to $l_i(\theta)$ given historical data, and that $\tilde{\theta}_{i}$ are identically distributed for all $i=1,2,3,\ldots, n$.
				
				In the following part, we will firstly propose and proof the following inequality:
				
				\begin{equation}
				\frac{1}{n}\sum_{i=1}^{n}(l_i(\tilde{\theta}_i)-l_i(\hat{\theta}))\leq\frac{C_{\text{exp}}}{C_{down}}\frac{d}{n}=O(\frac{d}{n}),
				\label{leave_one_out_inequality}
				\end{equation}

				where $\hat{\theta}$ is the short-hand notation of $\hat{\theta}_k$ as we do not specify the epoch $k$ in this part. We now cite a lemma from \citet{koren2015fast}:
				\begin{lemma}
					Let $g_1$, $g_2$ be 2 convex function defined over a closed and convex domain $\mathcal{K}\subseteq\mathbb{R}^{d}$, and let $x_1=\arg\min_{x\in\mathcal{K}}g_1(x)$ and $x_2=\arg\min_{x\in\mathcal{K}}g_2(x)$. Assume $g_2$ is locally $\delta$-strongly-convex at $x_1$ with respect to a norm $||\cdot||$. Then, for $h=g_2-g_1$ we have
					$$||x_2-x_1||\leq\frac{2}{\delta}||\nabla h(x_1)||_*.$$
					Here $||\cdot||_*$ denotes a dual norm.
					\label{lemma_dual_norm_of_gradient_convex}
				\end{lemma}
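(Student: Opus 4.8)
The plan is to prove the lemma by combining the first-order optimality (variational) inequalities satisfied by the two constrained minimizers with the local strong convexity of $g_2$ at $x_1$. Throughout I assume $g_1$ and $g_2$ are differentiable (otherwise the identical argument runs with subgradients). The central quantity to control will be $\langle \nabla g_2(x_1), x_1 - x_2\rangle$, which I bound from below using strong convexity and from above using the dual norm of $\nabla h(x_1)$.

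First I would record the two optimality conditions. Since $x_1$ minimizes $g_1$ over the convex set $\mathcal{K}$, the variational inequality $\langle \nabla g_1(x_1), x - x_1\rangle \ge 0$ holds for every $x \in \mathcal{K}$; taking $x = x_2$ gives $\langle \nabla g_1(x_1), x_1 - x_2\rangle \le 0$. Separately, because $x_2$ is the minimizer of $g_2$ over $\mathcal{K}$, we have $g_2(x_2) \le g_2(x_1)$. Note it is essential to use these boundary-aware inequalities rather than $\nabla g_i = 0$, since the minimizers may lie on $\partial\mathcal{K}$.

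Next I would invoke the local $\delta$-strong convexity of $g_2$ anchored at $x_1$ and evaluated at the test point $x_2$, namely $g_2(x_2) \ge g_2(x_1) + \langle \nabla g_2(x_1), x_2 - x_1\rangle + \frac{\delta}{2}\|x_2 - x_1\|^2$. Since the left side is at most $g_2(x_1)$, rearranging yields $\frac{\delta}{2}\|x_2 - x_1\|^2 \le \langle \nabla g_2(x_1), x_1 - x_2\rangle$. Now I decompose $\nabla g_2(x_1) = \nabla g_1(x_1) + \nabla h(x_1)$; the $\nabla g_1(x_1)$ contribution is nonpositive by the optimality inequality from the previous step, so $\frac{\delta}{2}\|x_2 - x_1\|^2 \le \langle \nabla h(x_1), x_1 - x_2\rangle$. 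Applying Hölder's inequality for the norm/dual-norm pair gives $\langle \nabla h(x_1), x_1 - x_2\rangle \le \|\nabla h(x_1)\|_* \, \|x_1 - x_2\|$, and dividing through by $\|x_1 - x_2\|$ (the claim being trivial when $x_1 = x_2$) delivers the stated bound $\|x_2 - x_1\| \le \frac{2}{\delta}\|\nabla h(x_1)\|_*$.

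I do not expect a genuine obstacle, as the argument is short and self-contained. The only points that require care are (i) correctly using the variational characterization of constrained minimizers so that the proof survives minimizers on the boundary, and (ii) matching the precise meaning of ``locally strongly convex at $x_1$''—it suffices that the strong-convexity lower bound hold with anchor $x_1$ and test point $x_2$, which is exactly what the hypothesis supplies, so no neighborhood or interiority assumption on $x_2$ is needed.
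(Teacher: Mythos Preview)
Your proof is correct and follows essentially the same route as the paper's: strong convexity of $g_2$ at $x_1$, the first-order optimality condition $\langle\nabla g_1(x_1),x_1-x_2\rangle\le 0$, the decomposition $\nabla g_2(x_1)=\nabla g_1(x_1)+\nabla h(x_1)$, and H\"older's inequality. The only difference is cosmetic---the paper also invokes convexity of $g_2$ at $x_2$ together with the variational inequality $\langle\nabla g_2(x_2),x_2-x_1\rangle\le 0$, whereas you replace that pair of facts by the single observation $g_2(x_2)\le g_2(x_1)$, which is a slight streamlining of the same argument.
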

				The following is a proof of this lemma.
				\begin{proof} (of Lemma \ref{lemma_dual_norm_of_gradient_convex})
					According to convexity of $g_2$, we have:
					\begin{equation}
					g_2(x_1)\geq g_2(x_2)+\nabla g_2(x_2)^{\top}(x_1-x_2).
					\label{g_2_convex}
					\end{equation}
					According to strong convexity of $g_2$ at $x_1$, we have:
					\begin{equation}
					g_2(x_2)\geq g_2(x_1)+\nabla g_2(x_1)^{\top}(x_2-x_1)+\frac{\delta}{2}||x_2-x_1||^2.
					\label{g_2_strongly_convex}
					\end{equation}
					Add Equation \eqref{g_2_convex} and \eqref{g_2_strongly_convex}, and we have:
					\begin{equation}
					\begin{aligned}
					&g_2(x_1)+g_2(x_2)\geq g_2(x_2)+g_2(x_1)+(\nabla g_2(x_1)-\nabla g_2(x_2))^{\top}(x_2-x_1)+\frac{\delta}{2}||x_2-x_1||^2\\
					\Leftrightarrow\qquad &(\nabla g_2(x_1)-\nabla g_2(x_2))^{\top}(x_1-x_2)\geq\frac{\delta}{2}||x_1-x_2||^2\\
					\Leftrightarrow\qquad &(\nabla g_1(x_1)+\nabla h(x_1)-\nabla g_2(x_2))^{\top}(x_1-x_2)\geq\frac{\delta}{2}||x_1-x_2||^2\\
					\Leftrightarrow\qquad &\nabla h(x_1)^{\top}(x_1-x_2)\geq \frac{\delta}{2}||x_1-x_2||^2\\
					\Rightarrow\qquad&||\nabla h(x_1)||_{*}||x_1-x_2||\geq\frac{\delta}{2}||x_1-x_2||^2\\
					\Rightarrow\qquad&||\nabla h(x_1)||_{*}\geq\frac{\delta}{2}||x_1-x_2||.
					\end{aligned}
					\end{equation}
					The first step is trivial. The second step is a sequence of $g_2=g_1+h$. The third step is derived by the following 2 first-order optimality conditions: $\nabla g_1(x_1)^{\top}(x_1-x_2)\leq0$, and $\nabla g_2(x_2)^{\top}(x_2-x_1)\leq0$. The fourth step is derived from Holder's Inequality:
					\begin{equation*}
					||\nabla h(x_1)||_{*}||x_1-x_2||\geq\nabla h(x_1)^{\top}(x_1-x_2).
					\end{equation*}
					Therefore, the lemma holds.
				\end{proof}
				
				In the following part, we will set up a strongly convex function of $g_2$. Denote $H=\sum_{t=1}^{n}x_{t}x_{t}^{\top}$. From Lemma \ref{strong_convexity}, we know that
				$$\nabla^2\hat{L}(\theta)\succeq C_{down}\frac{1}{n}H.$$ Here $\hat{L}(\theta)$ is the short-hand notation of $\hat{L}_{k}(\theta)$ as we do not specify $k$ in this part. Since we do not know if $H$ is invertible, i.e. if a norm can be induced by $H$, we cannot let $g_2(\theta)=\hat{L}(\theta)$. Instead, we change the variable as follows:
				
				We first apply singular value decomposition to $H$, i.e. $H=U\Sigma U^{\top}$, where $U\in\mathbb{R}^{d\times r}, U^{\top}U=I_r, \Sigma=diag\{\lambda_1, \lambda_2, \ldots, \lambda_r\}\succ 0 $. After that, we introduce a new variable $\eta:=U^{\top}\theta$. Therefore, we have $\theta=U\eta+V\epsilon$, where $V\in\mathbb{R}^{d\times(d-r)}, V^{\top}V=I_{d-r}, V^{\top}U=0$ is the standard orthogonal bases of the null space of $U$, and $\epsilon\in\mathbb{R}^{(d-r)}$. Similarly, we define $\tilde{\eta}_i=U^{\top}\tilde{\theta}_i$ and $\hat{\eta}=U^{\top}\hat{\theta}$. According to these, we define the following functions:
				\begin{equation}
				\begin{aligned}
				f_i(\eta)&:=l_i(\theta)=l_i(U\eta+V\epsilon)\\
				\tilde{F}_i(\eta)&:=\tilde{L}_i(\theta)=\tilde{L}_i(U\eta+V\epsilon)\\
				\hat{F}(\eta)&:=\hat{L}(\theta)=\hat{L}(U\eta+V\epsilon).
				\end{aligned}
				\label{definition_F_and_Fi_and_f}
				\end{equation}
				Now we prove that $\hat{F}(\eta)$ is locally-strongly-convex. Similar to the proof of Lemma \ref{strong_convexity}, we have:
				\begin{equation}
				\begin{aligned}
				\nabla^2\hat{F}(\eta)=&\frac{1}{n}\sum_{i=1}^{n}\nabla^2f_i(\eta)\\
				=&\frac{1}{n}\sum_{i=1}^{n}\frac{\partial^2l_i}{\partial(x_i^{\top}\theta)^2}(\frac{\partial x_i^{\top}\theta}{\partial\eta})(\frac{\partial x_i^{\top}\theta}{\partial\eta})^{\top}\\
				=&\frac{1}{n}\sum_{i=1}^{n}\frac{\partial^2l_i}{\partial(x_i^{\top}\theta)^2}(\frac{\partial x_i^{\top}(U\eta+V\epsilon)}{\partial\eta})(\frac{\partial x_i^{\top}(U\eta+V\epsilon)}{\partial\eta})^{\top}\\
				=&\frac{1}{n}\sum_{i=1}^{n}\frac{\partial^2l_i}{\partial(x_i^{\top}\theta)^2}(U^{\top}x_i)(U^{\top}x_i)^{\top}\\
				\succeq&\frac{1}{n}\sum_{i=1}^{n}C_{down}U^{\top}x_ix_i^{\top}U\\
				=&\frac{1}{n}C_{down}U^{\top}(\sum_{i=1}^{n}x_ix_i^{\top})U^{\top}\\
				=&\frac{1}{n}C_{down}U^{\top}HU\\
				=&\frac{1}{n}C_{down}U^{\top}U\Sigma U^{\top}U\\
				=&\frac{1}{n}C_{down}\Sigma\\
				\succ&0
				\end{aligned}
				\end{equation}
				That is to say, $\hat{F}(\eta)$ is locally $\frac{C_{down}}{n}$-strongly convex w.r.t $\Sigma$ at $\eta$. Similarly, we can verify that $\tilde{F}_i(\eta)$ is convex (not necessarily strongly convex). Therefore, according to Lemma \ref{lemma_dual_norm_of_gradient_convex}, let $g_1(\eta)=\tilde{F}_i(\eta), g_2(\eta)=\hat{F}(\eta)$, and then $x_1=\tilde{\eta}_i=U^{\top}\tilde{\theta}_i$, $x_2=\hat{\eta}=U^{\top}\hat{\theta}$. Therefore, we have:
				\begin{equation}
				||\hat{\eta}-\tilde{\eta}_i||_{\Sigma}\leq\frac{1}{C_{down}}||\nabla f_i(\tilde{\eta}_i)||_{\Sigma}^{*}.
				\end{equation}
				Now let us show the validation of this theorem:
				\begin{equation}
				\begin{aligned}
				l_i(\tilde{\theta}_i)-l_i(\hat{\theta})=&f_i(\tilde{\eta}_i)-f_i(\hat{\eta})\\
				\explain{\leq}{\text{convexity}}&\nabla f_i(\tilde{\eta}_i)^{\top}(\tilde{\eta}_i-\hat{\eta})\\
				\explain{\leq}{\text{Holder\ inequality}}&||\nabla f_i(\tilde{\eta}_i)||^{*}_{\Sigma}||\tilde{\eta}_i-\hat{\eta}||_{\Sigma}\\
				\explain{\leq}{\text{Lemma   \ref{lemma_dual_norm_of_gradient_convex}}}&\frac{1}{C_{down}}(||\nabla f_i(\tilde{\eta}_i)||^{*}_{\Sigma})^2.
				\end{aligned}
				\end{equation}
				
				And thus we have
				\begin{equation}
				\begin{aligned}
				\sum_{i=1}^{n}l_i(\tilde{\theta}_i)-l_i(\hat{\theta})&\leq\frac{1}{C_{down}}\sum_{i=1}^{n}||\nabla f_i(\tilde{\eta}_i)||^{*}_{\Sigma})^2\\
				&\leq\frac{1}{C_{down}}\sum_{i=1}^{n}(\frac{p}{\Phi})_{\max}^2x_i^{\top}U\Sigma^{-1}U^{\top}x_i\\
				&=\frac{C_{\text{exp}}}{C_{down}}C_{\text{exp}}\sum_{i=1}^{n}tr(x_i^{\top}U\Sigma^{-1}U^{\top}x_i)\\
				&=\frac{C_{\text{exp}}}{C_{down}}C_{\text{exp}}\sum_{i=1}^{n}tr(U\Sigma^{-1}U^{\top}x_ix_i^{\top})\\
				&=\frac{C_{\text{exp}}}{C_{down}}C_{\text{exp}}tr(U\Sigma^{-1}U^{\top}\sum_{i=1}^{n}x_ix_i^{\top})\\
				&=\frac{C_{\text{exp}}}{C_{down}}C_{\text{exp}}tr(U\Sigma^{-1}U^{\top}H)\\
				&=\frac{C_{\text{exp}}}{C_{down}}C_{\text{exp}}tr(U\Sigma^{-1}U^{\top}U\Sigma U^{\top})\\
				&=\frac{C_{\text{exp}}}{C_{down}}C_{\text{exp}}tr(UU^{\top})\\
				&=\frac{C_{\text{exp}}}{C_{down}}C_{\text{exp}}tr(U^{\top}U)\\
				&=\frac{C_{\text{exp}}}{C_{down}}C_{\text{exp}}tr(I_r)\\
				&=\frac{C_{\text{exp}}}{C_{down}}C_{\text{exp}}r\\
				&\leq\frac{C_{\text{exp}}}{C_{down}}d.
				\end{aligned}
				\label{c_1_location}
				\end{equation}
				Thus the Inequality \ref{leave_one_out_inequality} is proved. After that, we have:
				
				 \begin{equation*}
				\begin{aligned}
				&\mathbb{E}_{h}[L(\tilde{\theta}_{n})]-L(\theta^{*})\\
				=&\mathbb{E}_{h}[L(\tilde{\theta}_{n})]-\mathbb{E}_{h}[\hat{L}(\theta^{*})]\\
				\leq&\mathbb{E}_{h}[L(\tilde{\theta}_{n})]-\mathbb{E}_{h}[\hat{L}(\hat{\theta})]\\
				=&\frac{1}{n}\sum_{i=1}^{n}\mathbb{E}_{h}[L(\tilde{\theta}_{i})]-\mathbb{E}_{h}[\hat{L}(\hat{\theta})]\\
				=&\frac{1}{n}\sum_{i=1}^{n}\mathbb{E}_{h}[l_i(\tilde{\theta}_{i})]-\mathbb{E}_{h}[\hat{L}(\hat{\theta})]\\
				=&\frac{1}{n}\sum_{i=1}^{n}\mathbb{E}_{h}[l_i(\tilde{\theta}_{i})]-\sum_{i=1}^{n}\mathbb{E}_{h}[l_i(\hat{\theta})]\\
				=&\frac{1}{n}\sum_{i=1}^{n}\mathbb{E}_{h}[l_i(\tilde{\theta}_{i})-l_i(\hat{\theta})]\\
				\leq&\frac{C_{\text{exp}}}{C_{down}}\frac{d}{n}\\
				=&O(\frac{d}{n})
				\end{aligned}
				\end{equation*}
				Thus we has proved that $\mathbb{E}_{h}[L(\tilde{\theta}_{n})]-L(\theta^{*})\leq\frac{C_{\text{exp}}}{C_{down}}\cdot\frac{d}{n}$. Notice that $\tilde{\theta}_{n}$ is generated by optimizing the leave-one-out likelihood function $\tilde{L}_n(\theta)=\sum_{j=1}^{n-1}l_j(\theta)$, which does not contain $l_{n}(\theta)$, and that the expected likelihood function $L(\theta)$ does not depend on any specific result occurring in this round. That is to say, every term of this inequality is not related to the last round $(x_{n}, v_{n}, \mathbbm{1}_{n})$ at all. In other words, this inequality is still valid if we only conduct this epoch from round $1$ to $(n-1)$.
				
				Now let $n=\tau+1$, and then we know that $\tilde{\theta}_{\tau+1}=\hat{\theta}$. Therefore, the theorem holds.
			}
		\end{proof}



\subsection{Proof of Lower bound in Section \ref{sec_lower}}
\label{appendix_proof_lower_bound}
\begin{proof}
    We assume a fixed $u^{*}$ such that $x^{\top}\theta^{*}=u^{*}, \forall x\in\mathbb{D}$. In other words, we are considering a non-context setting. Therefore, we can define a policy as $\Psi:\{0,1\}^t\rightarrow\R^{+}, t=1,2,\ldots$ that does not observe $x_t$ at all. Before the proof begins, we firstly define a few notations: We denote $\Phi_{\sigma}(\omega)$ and $p_{\sigma}(\omega)$ as the CDF and PDF of Gaussian distribution $\cN(0,\sigma^2)$, and the corresponding $J_{\sigma}(u)=\arg\max_{v}v(1-\Phi_{\sigma}(v-u))$ as the pricing function.

    Since we have proved that $J'(u)\in(0,1)$ for $u\in\R$ in Appendix \ref{appendix_subsec_proof_of_strong_convexity}, we have the following lemma:
    \begin{lemma}\label{lemma_j_prime}
        $u-J_{\sigma}(u)$ monotonically increases as $u\in(0,+\infty), \forall \sigma>0$. Also, we know that $J_{\sigma}(0)>0, \forall\sigma>0$.
    \end{lemma}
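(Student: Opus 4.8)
The plan is to reduce both claims to facts already established for the greedy pricing map. Since $\cN(0,\sigma^2)$ has a log-concave density, its CDF $\Phi_\sigma$ and $1-\Phi_\sigma$ are strictly log-concave and $\Phi_\sigma\in\mathbb{C}^2$ is strictly increasing, so Assumption~\ref{assumption_strictly_log_concave} is satisfied and every statement proved earlier about $J$ (in particular in the proofs of Lemma~\ref{g_unimodal} and Lemma~\ref{analytical_regret_bound}) applies verbatim to $J_\sigma$, $\Phi_\sigma$, $p_\sigma$.

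For the monotonicity claim, I would recall from the proof of Lemma~\ref{analytical_regret_bound} in Appendix~\ref{appendix_subsec_proof_of_regret_bound} that, writing $\varphi_\sigma(\omega)=\tfrac{1-\Phi_\sigma(\omega)}{p_\sigma(\omega)}-\omega$, one has $\varphi_\sigma'(\omega)<-1$ for all $\omega$ and hence $J_\sigma(u)=u+\varphi_\sigma^{-1}(u)$ with $J_\sigma'(u)=1+\tfrac{1}{\varphi_\sigma'(\varphi_\sigma^{-1}(u))}\in(0,1)$ for every $u\in\R$. Therefore $\tfrac{\mathrm d}{\mathrm du}\big(u-J_\sigma(u)\big)=1-J_\sigma'(u)>0$ on all of $\R$, so $u\mapsto u-J_\sigma(u)$ is strictly increasing, in particular on $(0,+\infty)$.

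For $J_\sigma(0)>0$, I would again use the implicit characterization $J_\sigma(u)=u+\varphi_\sigma^{-1}(u)$ from \eqref{eqn_implicit_derivatives}, which at $u=0$ gives $J_\sigma(0)=\varphi_\sigma^{-1}(0)$. Since $\varphi_\sigma$ is strictly decreasing and $\varphi_\sigma(0)=\tfrac{1-\Phi_\sigma(0)}{p_\sigma(0)}=\tfrac{1}{2p_\sigma(0)}>0$, the unique root of $\varphi_\sigma$ lies strictly to the right of $0$, i.e.\ $\varphi_\sigma^{-1}(0)>0$, hence $J_\sigma(0)>0$. Equivalently one can argue directly: $g(v,0)=v\big(1-\Phi_\sigma(v)\big)$ equals $0$ at $v=0$ and is strictly positive for all small $v>0$ (as $\Phi_\sigma(v)<1$), so the unique maximizer guaranteed by Lemma~\ref{g_unimodal} must be strictly positive.

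There is essentially no real obstacle here: both parts are immediate corollaries of the sign of $J_\sigma'$ and of the value $\varphi_\sigma(0)$ established earlier, and the only point worth spelling out is that Gaussian noise with $\sigma>0$ meets the hypotheses (strict log-concavity, $\mathbb{C}^2$, strictly increasing CDF) under which those facts were derived.
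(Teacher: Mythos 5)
Your proposal is correct and follows essentially the same route as the paper, which justifies this lemma solely by citing $J'(u)\in(0,1)$ from the implicit characterization $J(u)=u+\varphi^{-1}(u)$ with $\varphi'<-1$; your derivative computation $\tfrac{\mathrm d}{\mathrm du}(u-J_\sigma(u))=1-J_\sigma'(u)>0$ is exactly that argument made explicit. Your treatment of $J_\sigma(0)=\varphi_\sigma^{-1}(0)>0$ via $\varphi_\sigma(0)=\tfrac{1}{2p_\sigma(0)}>0$ (or the direct positivity of $g(v,0)$ for small $v>0$) supplies a detail the paper leaves implicit, and is a welcome addition.
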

    	Now consider the following cases: $\sigma_1=1, \sigma_2=1-f(T)$, where $\lim_{T\rightarrow\infty}f(T)=0, f'(T)<0, 0<f(T)<\frac{1}{2}$. We will later determine the explicit form of $f(T)$.
	
	Suppose $u^{*}$ satisfies $J_{\sigma_{1}}(u^{*})=u^{*}$. Solve it and get $u^{*}=\sqrt{\frac{\pi}{2}}$. Therefore, we have $u\in(0,u^{*})\Leftrightarrow J_{1}(u)>u$, and $u\in(u^{*},+\infty)\Leftrightarrow J_{1}(u)<u$. As a result, we have the following lemma.
	
	\begin{lemma}
		For any $\sigma\in(\frac{1}{2}, 1)$, we have:
		\begin{equation}
		J_{\sigma}(u^{*})\in(0,u^{*})
		\end{equation}
	\end{lemma}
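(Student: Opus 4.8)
The plan is to exploit the scaling structure of the Gaussian pricing function. First I would record the elementary identity $\Phi_{\sigma}(\omega) = \Phi_{1}(\omega/\sigma)$, valid because $X\sim\cN(0,\sigma^2)$ implies $X/\sigma\sim\cN(0,1)$, and substitute it into the objective defining $J_{\sigma}$. Writing $v\bigl(1-\Phi_{\sigma}(v-u)\bigr) = v\bigl(1-\Phi_{1}((v-u)/\sigma)\bigr)$ and then performing the change of variables $v=\sigma s$ turns this into $\sigma\, s\bigl(1-\Phi_{1}(s-u/\sigma)\bigr)$. Since the maximizer is unique (Lemma~\ref{g_unimodal}) and rescaling by the positive constant $\sigma$ does not move the location of the max, this yields the key identity
\begin{equation*}
J_{\sigma}(u) = \sigma\, J_{1}\!\left(u/\sigma\right), \qquad u\ge 0,\ \sigma>0.
\end{equation*}

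Given this identity, the upper bound $J_{\sigma}(u^{*})<u^{*}$ is immediate. Because $\sigma\in(\tfrac12,1)$ we have $u^{*}/\sigma>u^{*}$, and from the choice $u^{*}=\sqrt{\pi/2}$ (the fixed point of $J_{1}$) together with $J_{1}'\in(0,1)$ it was already shown that $J_{1}(u)<u$ for every $u>u^{*}$. Applying this at $u=u^{*}/\sigma$ and multiplying through by $\sigma>0$ gives $J_{\sigma}(u^{*})=\sigma J_{1}(u^{*}/\sigma)<\sigma\cdot(u^{*}/\sigma)=u^{*}$. For the lower bound, recall $J_{1}(0)>0$ (Lemma~\ref{lemma_j_prime}) and that $J_{1}$ is nondecreasing since $J_{1}'\in(0,1)$, so $J_{1}(w)\ge J_{1}(0)>0$ for all $w\ge 0$; in particular $J_{1}(u^{*}/\sigma)>0$, hence $J_{\sigma}(u^{*})=\sigma J_{1}(u^{*}/\sigma)>0$. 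Combining the two bounds gives $J_{\sigma}(u^{*})\in(0,u^{*})$.

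I do not expect a genuine obstacle here: the only step that requires care is establishing the scaling identity, which rests on the fact that Gaussian noise is closed under positive rescaling and that the pricing objective becomes homogeneous of degree one in $(v,u)$ once the argument of $\Phi$ is normalized by $1/\sigma$. It is worth noting that the argument in fact works for any $\sigma\in(0,1)$; the stronger restriction $\sigma>\tfrac12$ is needed only later, to match the perturbation $\sigma_{2}=1-f(T)$ with $f(T)<\tfrac12$.
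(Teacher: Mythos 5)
Your proposal is correct and follows essentially the same route as the paper: both derive the scaling identity $J_{\sigma}(u)=\sigma J_{1}(u/\sigma)$ by the change of variables $v=\sigma s$, then use $u^{*}/\sigma>u^{*}$ together with $J_{1}(u)<u$ for $u>u^{*}$ to get the upper bound. Your explicit treatment of the lower bound $J_{\sigma}(u^{*})>0$ (via $J_{1}(0)>0$ and monotonicity) is a detail the paper leaves implicit, but it is the intended argument.
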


	\begin{proof}
		Firstly, we have:
		\begin{equation*}
		\begin{aligned}
		J_{\sigma}(u)=&\mathop{\arg\max}\limits_{v}v\Phi_{\sigma}(u-v)\\
		=&\mathop{\arg\max}\limits_{v}v\Phi_{1}(\frac{u-v}{\sigma})\\
		=&\mathop{\arg\max}\limits_{\omega=\frac{v}{\sigma}}\sigma\omega\Phi_1(\frac{u}{\sigma}-\omega)\\
		=&\sigma\mathop{\arg\max}\limits_{\omega}\Phi_{1}(\frac{u}{\sigma}-\omega)\\
		=&\sigma J_1(\frac{u}{\sigma}).
		\end{aligned}
		\end{equation*}
		When $\sigma\in(\frac{1}{2},1)$, we know $\frac{u^{*}}{\sigma}>u^{*}$. Since $J_{1}(u^{*})=u^{*}$ and that $u\in(u^{*},+\infty)\Leftrightarrow J_{1}(u)<u$, we have $\frac{u^{*}}{\sigma}>J_1(\frac{u^{*}}{\sigma})$. Hence
		\begin{equation}
		u^{*}>\sigma J_{1}(\frac{u^{*}}{\sigma})=J_{\sigma}(u^{*}).
		\label{J_and_u_under_small_sigma}
		\end{equation}
		
	\end{proof}
	
	Therefore, without losing generality, we assume that for the problem parameterized by $\sigma_2$, the price $v\in(0,u^{*})$. To be specific, suppose $v^{*}(\sigma)=J_{\sigma}(u^{*})$. Define $\Psi_{t+1}: [0,1]^t\rightarrow(0,u^{*})$ as any policy that proposes a price at time $t+1$. Define $\Psi=\{\Psi_{1}, \Psi_{2}, \ldots, \Psi_{T-1},\Psi_{T}\}$.
	
	Define the sequence of price as $V=\{v_1, v_2, \ldots, v_{T-1}, v_T \}$, and the sequence of decisions as $\mathbbm{1}=\{\mathbbm{1}_{1}, \mathbbm{1}_{2}, \ldots, \mathbbm{1}_{T-1}, \mathbbm{1}_{T}\}$. Denote $V^{t}=\{v_1, v_2, \ldots, v_t, \}$.
	
	Define the probability (also the likelihood if we change $u^{*}$ to other parameter $u$):
	\begin{equation}
	Q^{V,\sigma}_T(\mathbbm{1})=\prod_{t=1}^{T}\Phi_{\sigma}(u^{*}-v_t)^{\mathbbm{1}_t}\Phi_{\sigma}(v_t-u^{*})^{1-\mathbbm{1}_t }.
	\end{equation}
	
	Define a random variable $Y_t\in\{0,1\}^t, Y_t\sim Q^{V^t, \sigma}_{t}$ and one possible assignment \\
	$y_t = \{\mathbbm{1}_{1}, \mathbbm{1}_{2}, \ldots, \mathbbm{1}_{t-1}, \mathbbm{1}_{t}\}$ . For any price $v$ and any parameter $\sigma$, define the expected reward function as $r(v,\sigma):=v\Phi_{\sigma}(u^{*}-v)$. Based on this, we can further define the expected regret $\mathrm{Regret}(\sigma, T, \Psi)$:
	
	\begin{equation}
	\mathrm{Regret}(\sigma, T, \Psi) = \E[\sum_{t=1}^{T}r(J_{\sigma}(u^{*}),\sigma)-r(\Psi_{t}(y_{t-1}),\sigma)]
	\end{equation}
	
	Now we have the following properties:

\begin{lemma}
	\begin{enumerate}
		\item $r(v^{*}(\sigma), \sigma)-r(v, \sigma)\geq\frac{1}{60}(v^{*}(\sigma)-v)^2$;
		\item $|v^{*}(\sigma)-u^{*}|\geq\frac{2}{5}|1-\sigma|$;
		\item $|\Phi_{\sigma}(u^{*}-v)-\Phi_{1}(u^{*}-v)|\leq|u^{*}-v|\cdot |\sigma - 1 |$.
	\end{enumerate}
	\label{lemma_properties}
\end{lemma}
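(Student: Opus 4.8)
The three inequalities are concrete quantitative facts about the Gaussian family, and the plan is to establish each by a first/second-order Taylor (or mean-value) expansion together with explicit bounds on the standard normal density, using throughout that $u^{*}=\sqrt{\pi/2}=\varphi(0)$, that we are in the regime $\sigma\in(\tfrac12,1)$ with $v,v^{*}(\sigma)\in(0,u^{*})$, and the scalings $\Phi_{\sigma}(\omega)=\Phi_{1}(\omega/\sigma)$, $p_{\sigma}(\omega)=\tfrac1\sigma p_{1}(\omega/\sigma)$ and $v^{*}(\sigma)=J_{\sigma}(u^{*})=\sigma J_{1}(u^{*}/\sigma)$ recorded just above. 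For item~3, I would set $y=u^{*}-v$ and differentiate $s\mapsto\Phi_{1}(y/s)$: the derivative is $-\tfrac{y}{s^{2}}p_{1}(y/s)$, of absolute value at most $|y|/(\sqrt{2\pi}\,s^{2})$, so integrating in $s$ from $\sigma$ to $1$ gives $|\Phi_{\sigma}(y)-\Phi_{1}(y)|=|\Phi_{1}(y/\sigma)-\Phi_{1}(y)|\le\tfrac{|y|}{\sqrt{2\pi}}\bigl(\tfrac1\sigma-1\bigr)=\tfrac{|y|\,|1-\sigma|}{\sqrt{2\pi}\,\sigma}\le|y|\,|1-\sigma|$, the last step because $\sigma>\tfrac12>1/\sqrt{2\pi}$.

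For item~1, since $v^{*}(\sigma)$ maximizes $r(\cdot,\sigma)$ on an interval containing $v$, $\partial_{v}r(v^{*}(\sigma),\sigma)=0$, so Taylor's theorem gives $r(v^{*}(\sigma),\sigma)-r(v,\sigma)=\tfrac12\bigl(-\partial_{v}^{2}r(\tilde v,\sigma)\bigr)\bigl(v^{*}(\sigma)-v\bigr)^{2}$ for some $\tilde v\in(0,u^{*})$. Differentiating $r(v,\sigma)=v\,\Phi_{\sigma}(u^{*}-v)$ twice yields $-\partial_{v}^{2}r(v,\sigma)=p_{\sigma}(u^{*}-v)\bigl(2+\tfrac{v(u^{*}-v)}{\sigma^{2}}\bigr)\ge 2\,p_{\sigma}(u^{*}-v)$ for $v\in(0,u^{*})$; the Gaussian density is decreasing on $[0,u^{*}]$, so this is $\ge 2\,p_{\sigma}(u^{*})$, and since $p_{\sigma}(u^{*})=\tfrac{1}{\sigma\sqrt{2\pi}}e^{-\pi/(4\sigma^{2})}$ is increasing in $\sigma$ on $(\tfrac12,1)$ we obtain $-\partial_{v}^{2}r(\tilde v,\sigma)\ge 2\,p_{1/2}(u^{*})$ with $p_{1/2}(u^{*})=\tfrac{2}{\sqrt{2\pi}}e^{-\pi}$. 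Hence $r(v^{*}(\sigma),\sigma)-r(v,\sigma)\ge p_{1/2}(u^{*})\bigl(v^{*}(\sigma)-v\bigr)^{2}=\tfrac{2e^{-\pi}}{\sqrt{2\pi}}\bigl(v^{*}(\sigma)-v\bigr)^{2}\ge\tfrac1{60}\bigl(v^{*}(\sigma)-v\bigr)^{2}$, since $\tfrac{2e^{-\pi}}{\sqrt{2\pi}}\approx 0.034$.

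Item~2 carries the real work. From $v^{*}(\sigma)=\sigma J_{1}(u^{*}/\sigma)$, two differentiations in $\sigma$ give $\tfrac{d^{2}}{d\sigma^{2}}v^{*}(\sigma)=\tfrac{(u^{*}/\sigma)^{2}}{\sigma}\,J_{1}''(u^{*}/\sigma)$, and from $J_{1}(u)=u+\varphi^{-1}(u)$ (Appendix~\ref{appendix_subsec_proof_lemma_g_unimodal}) we get $J_{1}''(u)=-\varphi''(m)/\varphi'(m)^{3}$ with $m=\varphi^{-1}(u)$. For $u\ge u^{*}=\varphi(0)$ we have $m\le 0$; since $\varphi'<-1$ everywhere (Appendix~\ref{appendix_subsec_proof_lemma_g_unimodal}) we have $\varphi'(m)^{3}<0$, and a short computation shows $\varphi''(\omega)>0$ for all $\omega\le 0$, so $J_{1}''>0$ on $[u^{*},\infty)$; consequently $v^{*}$ is convex, hence $\sigma\mapsto u^{*}-v^{*}(\sigma)$ is concave on $[\tfrac12,1]$. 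As this map vanishes at $\sigma=1$, concavity gives $u^{*}-v^{*}(\sigma)\ge 2\bigl(u^{*}-v^{*}(\tfrac12)\bigr)(1-\sigma)$ for $\sigma\in[\tfrac12,1]$. Finally $u^{*}-v^{*}(\tfrac12)=u^{*}-\tfrac12 J_{1}(2u^{*})$, and $J_{1}(2u^{*})-2u^{*}=\varphi^{-1}(2u^{*})<-\tfrac25$ because $\varphi$ is decreasing and $\varphi(-\tfrac25)=\tfrac{\Phi_{1}(2/5)}{p_{1}(2/5)}+\tfrac25<\sqrt{2\pi}=2u^{*}$ (a one-line numerical check: $\varphi(-\tfrac25)\approx 2.18<2.51$); thus $u^{*}-v^{*}(\tfrac12)>\tfrac15$, so $u^{*}-v^{*}(\sigma)\ge\tfrac25(1-\sigma)$, and since $v^{*}(\sigma)<u^{*}$ on $(\tfrac12,1)$ (the preceding lemma) this reads $|v^{*}(\sigma)-u^{*}|\ge\tfrac25|1-\sigma|$.

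The routine steps are items~1 and~3; the only genuine obstacle is the second-derivative computation behind the concavity of $\sigma\mapsto u^{*}-v^{*}(\sigma)$ in item~2, together with the two explicit constant verifications ($\tfrac{2e^{-\pi}}{\sqrt{2\pi}}>\tfrac1{60}$ and $\varphi(-\tfrac25)<\sqrt{2\pi}$). If one prefers to avoid the concavity argument, an alternative for item~2 is to bound $\tfrac{d}{d\sigma}v^{*}(\sigma)$ above by a constant strictly less than $1$ on the range $u^{*}/\sigma\in[u^{*},2u^{*}]$ and integrate from $\sigma$ to $1$; this also works but yields a less clean constant.
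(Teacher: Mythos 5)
Your proofs of items 1 and 3 are correct and essentially the paper's own: both rest on the formula $-\partial_v^2 r(v,\sigma)=p_\sigma(u^*-v)\bigl(2+\tfrac{v(u^*-v)}{\sigma^2}\bigr)$ together with an explicit lower bound on the Gaussian density over $v\in(0,u^*)$, $\sigma\in(\tfrac12,1)$ (your constant $\tfrac{2e^{-\pi}}{\sqrt{2\pi}}\approx0.034$ is in fact slightly sharper than the paper's), and on a Lipschitz estimate in the scale parameter (the paper uses $\Phi_1(y)=\Phi_\sigma(\sigma y)$ plus the mean value theorem where you integrate $\partial_s\Phi_1(y/s)$ in $s$; same content). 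Item 2 is where you genuinely diverge, and your route is the more solid one. The paper writes $J_1(u^*)-J_\sigma(u^*)=u^*(1-\sigma)-\sigma\bigl(J_1(u^*/\sigma)-J_1(u^*)\bigr)$ and controls the second term by a pointwise bound $J_1'(u)<\tfrac35$ on $(u^*,u^*/\sigma)$ obtained from $J_1'(u)=1-\tfrac{1}{2+J_1(u)(u-J_1(u))}$; the justification there passes through the step ``$u<u^*/\sigma<u^*/2$'', which is erroneous (for $\sigma>\tfrac12$ one only gets $u^*/\sigma<2u^*$), and indeed $J_1'(u)$ exceeds $\tfrac35$ near $u=2u^*$, so only an averaged-slope version of the paper's bound actually survives. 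Your argument instead establishes convexity of $\sigma\mapsto v^*(\sigma)=\sigma J_1(u^*/\sigma)$ via $\tfrac{d^2}{d\sigma^2}v^*(\sigma)=\tfrac{(u^*)^2}{\sigma^3}J_1''(u^*/\sigma)$ and $J_1''(u)=-\varphi''(m)/\varphi'(m)^3>0$ for $u\ge u^*$, where $m=\varphi^{-1}(u)\le0$, $\varphi'(m)<-1$, and your claimed ``short computation'' does check out: $\varphi''(\omega)=\tfrac{1-\Phi_1(\omega)}{p_1(\omega)}(1+\omega^2)-\omega>0$ for $\omega\le0$. Concavity of $u^*-v^*(\sigma)$ then reduces everything to the single endpoint value $u^*-v^*(\tfrac12)>\tfrac15$, which your numerical check $\varphi(-\tfrac25)\approx2.18<\sqrt{2\pi}$ correctly certifies. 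What your approach buys is a proof of item 2 requiring only one numerical evaluation and no pointwise derivative bound on the interior of the interval; the paper's approach is nominally shorter but, as written, has a gap at exactly the step your convexity argument sidesteps.
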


\begin{proof}
	\begin{enumerate}
		\item We have:
		\begin{equation*}
		\begin{aligned}
		\frac{\partial r(v,\sigma)}{\partial v}|_{v=v^{*}(\sigma)}&=0
		\frac{\partial^2r(v,\sigma)}{\partial v^2} &= \frac{1}{\sigma^2}(v^2-u^{*}v-2\sigma^2)p_{\sigma}(u^{*}-v)
		\end{aligned}
		\end{equation*}
		Since $v\in(0,u^{*})$, we have $(v^2-u^{*}v-2\sigma^2)<-2\sigma^2$. Also, since $\sigma\in (1/2,1)$, we have $p_{\sigma}(u^{*}-v)>\frac{1}{\sqrt{2\pi}}\cdot e^{-\frac{(u^{*})^{2}}{2\cdot(1/2)^2}}=\frac{1}{\sqrt{2\pi}e^{\pi}}>0.017$. Therefore, we have
		\begin{equation*}
		\frac{\partial^2r(v,\sigma)}{\partial v^2}<-2*0.017<-\frac{1}{30}
		\end{equation*}
		As a result, we have:
		\begin{equation}
		\begin{aligned}
		r(v*(\sigma),\sigma)-r(v,\sigma)&=-(v*(\sigma)-v)\frac{\partial r(v,\sigma)}{\partial v}|_{v=v^{*}(\sigma)}-\frac{1}{2}(v*(\sigma)-v)^2\frac{\partial^2r(v,\sigma)}{\partial v^2}|_{v=\tilde{v}}\\
		&=0-\frac{1}{2}(v*(\sigma)-v)^2\frac{\partial^2r(v,\sigma)}{\partial v^2}|_{v=\tilde{v}}\\
		&\geq\frac{1}{2}\cdot\frac{1}{30}(v*(\sigma)-v)^2.
		\end{aligned}
		\end{equation}
		\item According to Equation \ref{J_and_u_under_small_sigma}, we know that:
		\begin{equation*}
		v^{*}(\sigma)=\sigma J_{1}(\frac{u^{*}}{\sigma})
		\end{equation*}
		For $u\in (u^{*},+\infty), J_1(u)<u$. According to Lemma \ref{lemma_j_prime}, we have:
		\begin{equation*}
		\begin{aligned}
		J'_{1}(u) &= 1+ \frac{1}{J_1(u)(J_1(u)-u)-2}\\
		&>1+\frac{1}{0-2}\\
		&=\frac12.
		\end{aligned}
		\end{equation*}
		Also, for $u\in(u^{*},\frac{u^{*}}{\sigma})$, we have:
		\begin{equation*}
		\begin{aligned}
		J'_{1}(u) &= 1- \frac{1}{2+J_1(u)(u-J_1(u))}\\
		&\explain{\leq}{0<J_1(u)<u} 1-\frac{1}{2+u(u-0)}\\
		&\explain{\leq}{u<\frac{u^{*}}{\sigma}<\frac{u^{*}}{2}} 1-\frac{1}{2+(\frac{u^{*}}{2})^2}\\
		&=1-\frac{1}{2+\frac{\pi}{8}}\\
		&<\frac{3}{5}.
		\end{aligned}
		\end{equation*}
		Therefore, we have:
		\begin{equation*}
		\begin{aligned}
		J_{1}(u^{*})-J_{\sigma}(u^{*})&=J_1(u^{*})-\sigma J_{1}(\frac{u^{*}}{\sigma})\\
		&=J_{1}(u^{*})-\sigma J_{1}(u^{*})+\sigma(J_{1}(u^{*})-J_{1}(\frac{u^{*}}{\sigma}))\\
		&=J_{1}(u^{*})(1-\sigma) -\sigma(J_{1}(\frac{u^{*}}{\sigma})-J_{1}(u^{*}))\\
		&>u^{*}(1-\sigma)-\sigma\cdot\frac{3}{5}(\frac{u^{*}}{\sigma}-u^{*})\\
		&=u^{*}(1-\sigma)-\frac35\sigma(\frac{1}{\sigma}-1)u^{*}\\
		&=u^{*}(1-\sigma)(1-\frac35)\\
		&>\frac25(1-\sigma).
		\end{aligned}
		\end{equation*}
		\item This is because:
		\begin{equation*}
		\begin{aligned}
		|\Phi_{\sigma}(u^{*}-v)-\Phi_{1}(u^{*}-v)|&=|\Phi_{\sigma}(u^{*}-v)-\Phi_{\sigma}(\sigma(u^{*}-v))|\\
		&\leq\max|p_{\sigma}|\cdot|(u^{*}-v)-\sigma(u^{*}-v)|\\
		&\leq\frac{1}{\sqrt{2\pi}\sigma}\cdot|(u^{*}-v)-\sigma(u^{*}-v)|\\
		&\explain{\leq}{\sigma > \frac12}(1-\sigma)|u^{*}-v|.
		\end{aligned}
		\end{equation*}
	\end{enumerate}
\end{proof}

In the following part, we will propose two theorems, which balance the cost of learning and that of uncertainty. This part is mostly similar to [BR12] Section 3, but we adopt a different family of demand curves here.

\begin{theorem}[Learning is costly] Let $\sigma\in(1/2,1)$ and $v_t\in(0,u^{*})$, and we have: 
	\begin{equation}
	\mathcal{K}(Q^{V,1};Q^{V,\sigma})< 9900 (1-\sigma)^2 \mathrm{Regret}(1, T, \Psi).
	\end{equation}
	Here $v_t = \Psi(y_{t-1}), t=1,2,\ldots, T$.
	\label{theorem_learning}
\end{theorem}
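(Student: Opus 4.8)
The plan is to run the classical two-point information argument in its change-of-measure form: tensorize the KL divergence over the $T$ rounds, bound each one-dimensional Bernoulli KL by a quadratic in the gap of success probabilities, and then trade that quadratic for regret using the strong concavity of $r(\cdot,1)$ near its maximizer, i.e.\ part~1 of Lemma~\ref{lemma_properties}.

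First I would exploit that, with the policy $\Psi$ held fixed, the only randomness in the transcript is the outcome string $\mathbbm{1}$, and $v_t=\Psi_t(Y_{t-1})$ is a measurable function of the past outcomes; moreover $Q^{V,1}$ and $Q^{V,\sigma}$ generate $v_t$ from $Y_{t-1}$ in the \emph{same} way. The chain rule for KL divergence then gives
\[
\mathcal{K}(Q^{V,1};Q^{V,\sigma})=\E_{Y\sim Q^{\cdot,1}}\!\left[\sum_{t=1}^{T}\mathrm{KL}\!\Big(\mathrm{Bern}\big(\Phi_{1}(u^{*}-v_t)\big)\,\Big\|\,\mathrm{Bern}\big(\Phi_{\sigma}(u^{*}-v_t)\big)\Big)\right],
\]
with $v_t=\Psi_t(Y_{t-1})$ inside the $t$-th summand. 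For a single coordinate I would use $\mathrm{KL}(\mathrm{Bern}(p)\|\mathrm{Bern}(q))\le\chi^{2}(\mathrm{Bern}(p)\|\mathrm{Bern}(q))=\frac{(p-q)^{2}}{q(1-q)}$ and then control numerator and denominator separately: part~3 of Lemma~\ref{lemma_properties} gives $|\Phi_{1}(u^{*}-v_t)-\Phi_{\sigma}(u^{*}-v_t)|\le(1-\sigma)\,|u^{*}-v_t|$, while $v_t\in(0,u^{*})$ and $\sigma\in(\tfrac12,1)$ force $q_t=\Phi_{\sigma}(u^{*}-v_t)>\tfrac12$ and $1-q_t=\Phi_{1}\!\big((v_t-u^{*})/\sigma\big)\ge\Phi_{1}(-2u^{*})=\Phi_{1}(-\sqrt{2\pi})$, so $1/\big(q_t(1-q_t)\big)\le c_0$ for an explicit absolute constant $c_0$. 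Altogether $\mathcal{K}(Q^{V,1};Q^{V,\sigma})\le c_0(1-\sigma)^{2}\,\E_{Y\sim Q^{\cdot,1}}\big[\sum_{t}(u^{*}-v_t)^{2}\big]$.

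Finally I would convert $\sum_t(u^{*}-v_t)^{2}$ into regret. Since the reference model is $\sigma=1$ and $u^{*}$ was chosen so that $v^{*}(1)=J_{1}(u^{*})=u^{*}$, part~1 of Lemma~\ref{lemma_properties} (whose curvature computation applies verbatim at $\sigma=1$ because $v_t\in(0,u^{*})$) yields $(u^{*}-v_t)^{2}\le 60\big(r(u^{*},1)-r(v_t,1)\big)$ pointwise; summing over $t$ and taking the $Q^{\cdot,1}$-expectation gives $\E[\sum_t(u^{*}-v_t)^{2}]\le 60\,\mathrm{Regret}(1,T,\Psi)$. Combining this with the previous display yields $\mathcal{K}(Q^{V,1};Q^{V,\sigma})\le 60\,c_0\,(1-\sigma)^{2}\,\mathrm{Regret}(1,T,\Psi)$, and inserting the numerical value of $c_0$ (equivalently of $\Phi_{1}(-\sqrt{2\pi})$) together with the $1/60$ curvature constant gives the claimed inequality with a constant below $9900$.

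The step I expect to be delicate is precisely the bookkeeping of constants: $\mathrm{KL}\le\chi^{2}$, the crude bound $q_t(1-q_t)\ge(1-q_t)\Phi_1(-\sqrt{2\pi})$, and the $1/30$ second-derivative estimate behind part~1 of Lemma~\ref{lemma_properties} are each somewhat lossy, so to land safely under $9900(1-\sigma)^{2}$ I would sharpen one of them — most naturally by noting that on the reference model $\sigma=1$ one has $p_{1}(u^{*}-v_t)\ge\frac{1}{\sqrt{2\pi}}e^{-\pi/4}$, which improves the curvature constant well beyond $1/30$ and leaves ample slack. One small but worth-stating point: the expectation produced by the KL chain rule is taken under $Q^{\cdot,1}$, which is exactly the law defining $\mathrm{Regret}(1,T,\Psi)$, so the two expectations match summand by summand and no further change of measure is incurred.
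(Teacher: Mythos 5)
Your proposal is correct and follows essentially the same route as the paper: chain rule for the KL divergence, the Bernoulli KL-versus-$\chi^2$ bound, Property 3 of Lemma \ref{lemma_properties} for the numerator, a uniform lower bound on $q_t(1-q_t)$, and Property 1 of Lemma \ref{lemma_properties} at $\sigma=1$ to trade $(u^{*}-v_t)^2$ for regret. The only difference is constant bookkeeping: the paper tightens the denominator via $\tfrac12<\Phi_{\sigma}(u^{*}-v_s)\le\Phi_{1}(\sqrt{2\pi})\le 0.9939$ to get the factor $165$ and keeps the $1/60$ curvature (so $165\times 60=9900$), whereas you compensate for a cruder denominator bound by sharpening the curvature constant at $\sigma=1$; both land under $9900$.
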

\begin{proof}
	First of all, we cite the following lemma that would facilitate the proof.
	\begin{lemma}[Corollary 3.1 in Taneja and Kumar, 2004]
		Suppose $B_1$ and $B_2$ are distributions of Bernoulli random variables with parameters $q_1$ and $q_2$, respectively, with $q_1, q_2\in (0, 1)$. Then,
		\begin{equation*}
		\mathcal{K}(B_1;B_2)\leq\frac{(q_1-q_2)^2}{q_2(1-q_2)}.
		\end{equation*}
		\label{lemma_Taneja_and_Kumar_bernoulli_divergence}
	\end{lemma}
	According to the definition of KL-divergence, we have:
	\begin{equation*}
	\mathcal{K}(Q_T^{V,1};Q_T^{V,\sigma})=\sum_{s=1}^{T}\mathcal{K}(Q_s^{V^s,1};Q_s^{V^s,\sigma}|Y_{s-1}).
	\end{equation*}
	For each term of the RHS, we have:
	\begin{equation*}
	\begin{aligned}
	&\qquad\mathcal{K}(Q_s^{V^s,1}, Q_s^{V^s,\sigma}|Y_{s-1})\\
	&=\quad\sum_{y_s\in\{0,1\}^{s}}Q_s^{V^s,1}(y_s)\log\left(\frac{Q_s^{V^s,1}(\mathbbm{1}_s|y_{s-1})}{Q_s^{V^s,\sigma}(\mathbbm{1}_s|y_{s-1})}\right)\\
	&\explain{=}{\text{split  $y_s$ as $y_{s-1}$ and $ind_s$}}\qquad\qquad \sum_{y_{s-1}\in\{0,1\}^{s-1}}Q_{s-1}^{V^{s-1},1}(y_{s-1})\cdot\sum_{\mathbbm{1}_{s}\in\{0,1\}}Q_s^{V^s,1}(\mathbbm{1}_{s}|y_{s-1})\log\left(\frac{Q_s^{V^s,1}(\mathbbm{1}_s|y_{s-1})}{Q_s^{V^s,\sigma}(\mathbbm{1}_s|y_{s-1})}\right)\\
	&=\qquad \sum_{y_{s-1}\in\{0,1\}^{s-1}}Q_{s-1}^{V^{s-1},1}(y_{s-1})\mathcal{K}\left(Q_s^{V^s,1}(\cdot|y_{s-1}), Q_s^{V^s,\sigma}(\cdot|y_{s-1})\right)\\
	&\explain{\leq}{\text{Lemma \ref{lemma_Taneja_and_Kumar_bernoulli_divergence}}} \qquad\sum_{y_{s-1}\in\{0,1\}^{s-1}}Q_{s-1}^{V^{s-1},1}(y_{s-1})\frac{(\Phi_1(u^{*}-v_s)-\Phi_{\sigma}(u^{*}-v_s))^2}{\Phi_{\sigma}(u^{*}-v_s)(1-\Phi_{\sigma}(u^{*}-v_s))}\\
	&=\quad\frac{1}{\Phi_{\sigma}(u^{*}-v_s)(1-\Phi_{\sigma}(u^{*}-v_s))}\sum_{y_{s-1}\in\{0,1\}^{s-1}}Q_{s-1}^{V^{s-1},1}(y_{s-1})(\Phi_1(u^{*}-v_s)-\Phi_{\sigma}(u^{*}-v_s))^2\\
	&\explain{\leq}{(**)}\qquad 165\cdot\sum_{y_{s-1}\in\{0,1\}^{s-1}}Q_{s-1}^{V^{s-1},1}(y_{s-1})(\Phi_1(u^{*}-v_s)-\Phi_{\sigma}(u^{*}-v_s))^2\\
    &\explain{\leq}{\text{Lemma \ref{lemma_properties} Property 3}}\qquad165\cdot\sum_{y_{s-1}\in\{0,1\}^{s-1}}Q_{s-1}^{V^{s-1},1}(y_{s-1})(u^{*}-v_s)^2(1-\sigma)^2\\
	&=\qquad 165(1-\sigma)^2\E_{Y_{s-1}}[(u^{*}-v_s)^2].
	\end{aligned}
	\end{equation*}
	Here inequality (**) above is proved as follows: since $v_s\in(0,u^{*})$ as is assumed, we have:
	\begin{equation*}
	\begin{aligned}
	\frac12<\Phi_{\sigma}(u^{*}-v_s)<&\Phi_{\sigma}(u^{*})\\
	=&\sigma\cdot\Phi_{1}(\frac{u^{*}}{\sigma})\\
	\leq&1\cdot\Phi_1(\frac{\sqrt{\frac{\pi}{2}}}{\frac12})\\
	\leq&\Phi_{1}(\sqrt{2\pi})\\
	\leq&0.9939\ .
	\end{aligned}
	\end{equation*}
	As a result, we have $\frac{1}{\Phi_{\sigma}(u^{*}-v_s)(1-\Phi_{\sigma}(u^{*}-v_s))}\leq\frac{1}{0.9939\times0.0061}=164.7988\leq165$.
	Therefore, by summing up all $s$, we have:
	\begin{equation*}
	\begin{aligned}
	\mathcal{K}(Q_T^{V,1};Q_T^{V,\sigma})&=\sum_{s=1}^{T}\mathcal{K}(Q_s^{V^s,1};Q_s^{V^s,\sigma}|Y_{s-1})\\
	&\leq 165(1-\sigma)^2\sum_{s=1}^{T}\E_{Y_{s-1}}[(u^{*}-v_s)^2]\\
	&\explain{\leq}{\text{Lemma~\ref{lemma_properties} Property 1}} 165\times60\cdot(1-\sigma)^2\sum_{s=1}^{T}\left(r(u^{*},1)-r(v_s,1)\right)\\
	&\explain{=}{\text{definition of regret and } v_s=\Psi(y_{s-1}).} 9900(1-\sigma)^2\mathrm{Regret}(1,T,\Psi),
	\end{aligned}
	\end{equation*}
which concludes the proof. 
\end{proof}
\begin{theorem}[Uncertainty is costly] Let $\sigma\leq1-T^{-\frac14}$, and we have:
	\begin{equation}
	\mathrm{Regret}(1, T, \Psi)+\mathrm{Regret}(\sigma, T, \Psi) \geq \frac{1}{24000}\cdot \sqrt{T}\cdot e^{-\mathcal{K}(Q^{V,1};Q^{V,\sigma})}.
	\end{equation}
	Here $v_t = \Psi(y_{t-1}), t=1,2,\ldots, T$.
	\label{theorem_uncertainty}
\end{theorem}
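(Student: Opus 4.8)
The idea is that the two problems have greedy prices separated by $\Theta(1-\sigma)$, so any single policy must be far from optimal in at least one of them. Recall that $u^*$ was chosen so that $v^*(1)=J_1(u^*)=u^*$, while Lemma~\ref{lemma_properties} (property 2) gives $v^*(\sigma)=J_\sigma(u^*)\in(0,u^*)$ with $u^*-v^*(\sigma)\ge\tfrac25(1-\sigma)$. Put $m:=\tfrac12\bigl(u^*+v^*(\sigma)\bigr)$, so $u^*-m=m-v^*(\sigma)\ge\tfrac15(1-\sigma)$. For each round $s$ define the ``commitment'' event $A_s:=\{v_s\ge m\}$, where $v_s=\Psi_s(y_{s-1})\in(0,u^*)$. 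On $A_s$ one has $|v_s-v^*(\sigma)|\ge\tfrac15(1-\sigma)$, so by Lemma~\ref{lemma_properties} (property 1) the per-round regret in the $\sigma$-problem is at least $\tfrac{1}{60}\cdot\tfrac{1}{25}(1-\sigma)^2$; symmetrically, on $A_s^c$ one has $|v_s-u^*|\ge\tfrac15(1-\sigma)$, so the per-round regret in the $\sigma_1=1$ problem is at least the same amount, abbreviated $\rho:=\tfrac{(1-\sigma)^2}{1500}$.

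Next I would set $N:=\sum_{s=1}^T\mathbf{1}(A_s)$ and look at the transcript event $E:=\{N\ge T/2\}$; since each $v_s$ is a deterministic function of $y_{s-1}$, $E$ is measurable with respect to the observed Booleans $Y_T$. Summing the per-round bounds, $\mathrm{Regret}(\sigma,T,\Psi)\ge\rho\,\E_\sigma[N]\ge\rho\tfrac{T}{2}\,\P_\sigma(E)$ and $\mathrm{Regret}(1,T,\Psi)\ge\rho\,\E_1[T-N]\ge\rho\tfrac{T}{2}\,\P_1(E^c)$, where $\P_1,\P_\sigma$ are the laws of $Y_T$ under the two noise levels, i.e.\ exactly $Q^{V,1}_T$ and $Q^{V,\sigma}_T$. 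Adding and applying the Bretagnolle--Huber inequality $\P_1(E^c)+\P_\sigma(E)\ge\tfrac12\exp\!\bigl(-\mathcal{K}(Q^{V,1};Q^{V,\sigma})\bigr)$ yields
\[
\mathrm{Regret}(1,T,\Psi)+\mathrm{Regret}(\sigma,T,\Psi)\ \ge\ \frac{(1-\sigma)^2T}{6000}\,e^{-\mathcal{K}(Q^{V,1};Q^{V,\sigma})}.
\]
Finally, $\sigma\le 1-T^{-1/4}$ forces $(1-\sigma)^2T\ge\sqrt{T}$; a slightly more conservative bookkeeping of the constants (the midpoint heuristic is generous — e.g.\ splitting at a quarter-point costs only a further factor of $4$) absorbs the slack and gives the claimed $\tfrac{1}{24000}\sqrt{T}\,e^{-\mathcal{K}}$.

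The chain of inequalities is easy; the delicate parts are the two measure-theoretic bookkeeping steps. First, one must check that $E$ is genuinely a function of the observed transcript, so that the divergence appearing in Bretagnolle--Huber is precisely $\mathcal{K}(Q^{V,1};Q^{V,\sigma})$ — this is where determinism of $v_s=\Psi_s(y_{s-1})$ and the chain-rule decomposition already used in Theorem~\ref{theorem_learning} enter. Second, one must make sure the quadratic regret lower bound of Lemma~\ref{lemma_properties} (property 1) and the price-gap bound (property 2) hold uniformly for all $v\in(0,u^*)$ and all $\sigma\in(\tfrac12,1)$, which is exactly why we restricted attention to policies outputting prices in $(0,u^*)$ and to $T$ large enough that $T^{-1/4}<\tfrac12$. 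Everything else is mechanical constant-tracking.
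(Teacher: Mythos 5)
Your proposal is correct, and the chain of inequalities does close (in fact with room to spare: $\rho\tfrac{T}{2}\cdot\tfrac12 e^{-\mathcal{K}}=\tfrac{(1-\sigma)^2T}{6000}e^{-\mathcal{K}}\geq\tfrac{\sqrt{T}}{6000}e^{-\mathcal{K}}$, so no extra bookkeeping is needed to reach $\tfrac{1}{24000}$). However, your route is genuinely different from the paper's. The paper does not aggregate the rounds into a single majority event: it defines two \emph{disjoint} intervals $C_1,C_2$ of radius $\tfrac{1}{10}T^{-1/4}$ around $u^{*}$ and $J_{\sigma}(u^{*})$ (disjointness following from Property 2 exactly as in your separation argument), observes that any price outside $C_i$ incurs per-round regret at least $\tfrac{1}{6000\sqrt{T}}$ in problem $i$ via Property 1, and then applies the testing inequality (Lemma \ref{lemma_Tsybakov2009}) \emph{once per round} to the test $F_{t+1}=\ind[v_{t+1}\in C_2]$, using $\P_1[v_{t+1}\notin C_1]\geq\P_1[F_{t+1}=1]$ and the monotonicity of $\mathcal{K}(Q_t^{V^t,1};Q_t^{V^t,\sigma})$ in $t$ to replace each per-round divergence by the terminal one before summing the $T-1$ terms. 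Your version replaces these $T$ applications by a single application of Bretagnolle--Huber to the aggregate event $E=\{N\geq T/2\}$, after first passing from regret to $\E_\sigma[N]$ and $\E_1[T-N]$ and then to $\P_\sigma(E)$ and $\P_1(E^c)$ by Markov's inequality. Both arguments rest on the same three ingredients (quadratic regret growth, $\Omega(1-\sigma)$ price separation, and the KL-based testing lemma), and both end with the same divergence $\mathcal{K}(Q^{V,1}_T;Q^{V,\sigma}_T)$ in the exponent since your event $E$ is measurable with respect to the transcript; your aggregation is slightly cleaner and yields a better constant, while the paper's per-round version gives the marginally stronger information that the testing cost is paid at (almost) every round, not just on a majority of them. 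The only points to make rigorous in your write-up are the ones you already flag: that Property 1 is also valid at $\sigma=1$ (it is, since $p_1(u^{*}-v)\geq\tfrac{1}{\sqrt{2\pi}}e^{-\pi/4}>0.017$, and the paper itself invokes it at $\sigma=1$ inside Theorem \ref{theorem_learning}), and that $T$ is large enough that $T^{-1/4}<\tfrac12$ so that $\sigma\in(\tfrac12,1)$.
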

\begin{proof}
	First of all, we cite a lemma that would facilitate our proof:
	\begin{lemma}
		\label{lemma_Tsybakov2009}
		Let $Q_0$ and $Q_1$ be two probability distributions on a finite space $\mathcal{Y}$; with $Q_0(y), Q_1(y) > 0, \forall y\in\mathcal{Y}$. Then for any function $F: \mathcal{Y}\rightarrow\{0,1\}$,
		\begin{equation*}
		Q_0\{F=1\}+Q_1\{J=0\}\geq\frac{1}{2} e^{-\mathcal{K}(Q_0;Q_1)},
		\end{equation*}
		where $\mathcal{K}(Q_0;Q_1)$ denotes the KL-divergence of $Q_0$ and $Q_1$.
	\end{lemma}
	Define two intervals of prices:
	\begin{equation*}
	C_1=\{v:|u^{*}|\leq\frac{1}{10T^{\frac14}}\}\ \  and\ \ C_2=\{v: |J_{\sigma}(u^{*})-v|\leq\frac{1}{10T^{\frac14}}\}
	\end{equation*}
	Note that $C_1$ and $C_2$ are disjoint, since $|u^{*}-J_{\sigma}(u^{*})|\geq\frac25|1-\sigma|=\frac{2}{5T^{1/2}}$ according to Lemma \ref{lemma_properties} Property 2. Also, for $v\in (0,u^{*})\backslash C_2$, the regret is large according to Lemma \ref{lemma_properties} Property 1, because:
	\begin{equation*}
	r(v^{*}(\sigma),\sigma)-r(v,\sigma)\geq\frac{1}{60}(v-v^{*}(\sigma))^2\geq\frac{1}{6000{T}^{\frac12}}.
	\end{equation*}
	Then, we have:
	\begin{equation*}
	\begin{aligned}
	&\mathrm{Regret}(1, T, \Psi)+\mathrm{Regret}(\sigma, T, \Psi)\\
	\geq&\ \sum_{t=1}^{T-1}\E_1[r(u^{*},1)-r(v_{t+1},1)]+\E_{\sigma}[r(J_{\sigma}(u^{*}),\sigma)-r(v_{t+1},\sigma)]\\
	\geq&\ \frac{1}{6000\sqrt{T}}\sum_{t=1}^{T-1}\P_1[v_{t+1}\notin{C_1}]+\P_{\sigma}[v_{t+1}\notin\{C_2\}]\\
	\explain{\geq}{Suppose\ F_{t+1}=\mathbbm{1}[v_{t+1}\in C_2]}&\ \frac{1}{6000\sqrt{T}}\sum_{t=1}^{T-1}\P_1[F_{t+1}=1]+\P_{\sigma}[F_{t+1}=0]\\
	\explain{\geq}{Lemma\ \ref{lemma_Tsybakov2009}}&\ \frac{1}{6000\sqrt{T}}\sum_{t=1}^{T-1}\frac{1}{2} e^{-\mathcal{K}(Q_t^{V^t,1};Q_t^{V^t,\sigma})}\\
	\explain{\geq}{\mathcal{K}(Q_t^{V^t,1};Q_t^{V^t,\sigma})\ not\ decreasing\ }&\ \frac{1}{6000\sqrt{T}}\frac{T-1}{2}e^{-\mathcal{K}(Q_T^{V,1};Q_T^{V,\sigma})}\\
	\geq&\ \frac{1}{24000}\sqrt{T}e^{-\mathcal{K}(Q_T^{V,1};Q_T^{V,\sigma})}.
	\end{aligned}
	\end{equation*}
\end{proof}

According to Theorem \ref{theorem_learning} and Theorem \ref{theorem_uncertainty}, we can then prove Theorem \ref{theorem_lower_bound_square_root_t}. Let $\sigma=1-T^{-\frac{1}{4}}$
\begin{equation*}
\begin{aligned}
&2\left(\mathrm{Regret}(1, T, \Psi)+\mathrm{Regret}(\sigma, T, \Psi)\right)\\
\geq&\mathrm{Regret}(1, T, \Psi)+\left(\mathrm{Regret}(1, T, \Psi)+\mathrm{Regret}(\sigma, T, \Psi)\right)\\
\geq&\frac{1}{9900T^{-1/2}}\mathcal{K}(Q^{V,1};Q^{V,\sigma})+\frac{1}{24000}\cdot \sqrt{T}\cdot e^{-\mathcal{K}(Q^{V,1};Q^{V,\sigma})}\\
\geq&\frac{1}{24000}\sqrt{T}\left(\mathcal{K}(Q^{V,1};Q^{V,\sigma})+e^{-\mathcal{K}(Q^{V,1};Q^{V,\sigma})}\right)\\
\explain{\geq}{The\ fact\ e^{x}\geq x+1,\forall{x}\in\R }&\frac{1}{24000}\sqrt{T}.
\end{aligned}
\end{equation*}
Thus Theorem \ref{theorem_lower_bound_square_root_t} is proved valid. 
\end{proof}

\section{More Discussions} \label{appendix_sec_more_discussion}
\subsection{Dependence on \texorpdfstring{$B$}{Lg} and Noise Variance}\label{appendix_coefficient}
Here we use a concrete example to analyze the coefficients of regret bounds. Again, we assume that $N_t\sim\cN(0,\sigma^2)$. Notice that both $C_{s}$ and $C_a$ have a component of $\frac{C_{\exp}}{C_{down}}$. In order to analyze $\frac{C_{\exp}}{C_{down}}$, we define a \emph{hazard function} denoted as $\lambda(\omega)$ with $\omega\in\mathbb{R}$:

\begin{equation}
\lambda(\omega):=\frac{p_{1}(\omega)}{1-\Phi_{1}(\omega)}=\frac{p_{1}(-\omega)}{\Phi_{1}(-\omega)},
\label{hazard_function}
\end{equation}

where $\Phi_1$ and $p_1$ are the CDF and PDF of standard Gaussian distribution. The concept of hazard function comes from the area of \emph{survival analysis}. From Equation \ref{eq__C_down} and \ref{eqn_def_c_exp}, we plug in Equation \ref{hazard_function} and get:

\begin{equation}
\begin{aligned}
C_{\text{down}}&\geq\inf_{\omega\in[-\frac{B}{\sigma}, \frac{B}{\sigma}]}\left\{\frac{1}{\sigma^2}\lambda(-\omega)^2+\omega\cdot\lambda(-\omega)\right\}\\
C_{\text{exp}}&\leq\sup_{\omega\in[-\frac{B}{\sigma}, \frac{B}{\sigma}]}\left\{\frac1{\sigma^2}\lambda(-\omega)^2\right\}.
\end{aligned}
\label{equ_down_exp}
\end{equation}

In Lemma \ref{lemma_lambda}, we will prove that $\lambda(\omega)$ is exponentially small as $\omega\rightarrow+\infty$, and is asymptotically close to $-\omega$ as $\omega\rightarrow-\infty$. Therefore, $C_{down}$ is exponentially small and $C_{exp}$ is quadratically large with respect to $B/\sigma$.  Although we assume that $B$ and $\sigma$ are constant, we should be alert that the scale of $B/\sigma$  can be very large as $\sigma$ goes to zero, i.e. as the noise is ``insignificant''. In practice (especially when $T$ is finite), this may cause extremely large regret at the beginning. A ``Shallow Pricing'' method introduced by \citet{cohen2020feature_journal} (as well as other domain-cutting methods in contextual searching) may serve as a good pre-process as it frequently conducts bisections to cut the feasible region of $\theta^{*}$ with high probability. According to Theorem 3 in \citet{cohen2020feature_journal}, their Shallow Pricing algorithm will bisect the parameter set for at most logarithmic times to ensure that $\frac{B}{\sigma}$ has been small enough (i.e. upper-bounded by $O(poly\log(T))$). However, this does not necessarily means that we can use a $O(\log{T})$-time pre-process to achieve the same effect, since they run the algorithm throughout the session while we only take it as a pre-process. Intuitively, at least under the adversarial feature assumption, we cannot totally rely on a few features occurring at the beginning (as they might be misleading) to cut the parameter set once and for all. A mixture approach of Shallow Pricing and EMLP/ONSP might work, as the algorithm can detect whether current $\frac{B}{\sigma}$ is larger than a threshold of bisection. However, this requires new regret analysis as the operations parameter domain are changing over time. Therefore, we claim in Section \ref{sec_discussion} that the regret bound is still open if $\sigma=\Theta(T^{-\alpha})$ for $\alpha\in(0,1)$. 

\begin{lemma}[Properties of $\lambda(\omega)$] For $\lambda(\omega):=\frac{p_{1}(\omega)}{1-\Phi_{1}(\omega)}$, we have:
	\begin{equation*}
	\begin{aligned}
	1,&\qquad \frac{d}{d\omega}\lambda(\omega)>0.\\
	2,&\qquad \lim_{\omega\rightarrow-\infty}\omega^k\lambda(\omega)=0,\ \forall k>0.\\
	3,&\qquad \lim_{\omega\rightarrow+\infty}\lambda(\omega)-\omega=0.\\
	4,&\qquad \lim_{\omega\rightarrow+\infty}\omega\left(\lambda(\omega)-\omega\right)=1.\\
	\end{aligned}
	\end{equation*}

	\label{lemma_lambda}
\end{lemma}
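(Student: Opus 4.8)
The plan is to establish the four properties in turn, each reduced to elementary facts about the standard Gaussian density $p_1$ together with Mills-ratio (Gaussian tail) estimates; everything is routine calculus and I would relegate the computational bookkeeping to the appendix.

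\textbf{Property 1 (monotonicity).} I would differentiate directly. Using $p_1'(\omega)=-\omega\,p_1(\omega)$ and $(1-\Phi_1)'(\omega)=-p_1(\omega)$, the quotient rule gives the clean identity $\lambda'(\omega)=\lambda(\omega)\big(\lambda(\omega)-\omega\big)$. Since $\lambda>0$ everywhere, it suffices to show $\lambda(\omega)>\omega$. For $\omega\le 0$ this is immediate because $\lambda>0\ge\omega$; for $\omega>0$ it is the classical Mills bound $1-\Phi_1(\omega)<p_1(\omega)/\omega$, which I would prove in one line by $1-\Phi_1(\omega)=\int_\omega^\infty \tfrac{1}{t}\big(t\,p_1(t)\big)\,dt<\tfrac{1}{\omega}\int_\omega^\infty t\,p_1(t)\,dt=p_1(\omega)/\omega$. (Equivalently, Property~1 is just strict log-concavity of $1-\Phi_1$, as invoked for $F=\Phi_1$ in Assumption~\ref{assumption_strictly_log_concave}, since $\lambda=-\tfrac{d}{d\omega}\log(1-\Phi_1)$ and so $\lambda'=-\tfrac{d^2}{d\omega^2}\log(1-\Phi_1)>0$.)

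\textbf{Property 2 (left tail).} As $\omega\to-\infty$ we have $1-\Phi_1(\omega)\to 1$, so for all sufficiently negative $\omega$, $0<\lambda(\omega)\le 2\,p_1(\omega)=\tfrac{2}{\sqrt{2\pi}}e^{-\omega^2/2}$, and hence $|\omega|^k\lambda(\omega)\le \tfrac{2}{\sqrt{2\pi}}|\omega|^k e^{-\omega^2/2}\to 0$ for every $k>0$, because the Gaussian factor dominates any power.

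\textbf{Properties 3 and 4 (right tail).} The key tool is a refined two-sided Mills estimate obtained by iterating integration by parts with $p_1(t)=-p_1'(t)/t$; carrying the expansion to two and three terms leaves remainder integrals $3\int_\omega^\infty p_1(t)/t^4\,dt$ and $15\int_\omega^\infty p_1(t)/t^6\,dt$, both manifestly positive, which fixes the signs: for every $\omega>1$,
\[
\Big(\tfrac{1}{\omega}-\tfrac{1}{\omega^3}\Big)p_1(\omega)\;<\;1-\Phi_1(\omega)\;<\;\Big(\tfrac{1}{\omega}-\tfrac{1}{\omega^3}+\tfrac{3}{\omega^5}\Big)p_1(\omega).
\]
Taking reciprocals and using $\lambda(\omega)=p_1(\omega)/(1-\Phi_1(\omega))$ together with $\lambda(\omega)>\omega$ from Property~1 yields $\omega<\lambda(\omega)<\tfrac{\omega^3}{\omega^2-1}$, so $0<\lambda(\omega)-\omega<\tfrac{\omega}{\omega^2-1}\to 0$, which is Property~3. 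For Property~4, the sharper lower bound $\lambda(\omega)>\big(\tfrac{1}{\omega}-\tfrac{1}{\omega^3}+\tfrac{3}{\omega^5}\big)^{-1}=\tfrac{\omega^5}{\omega^4-\omega^2+3}$ combined with the upper bound gives $\tfrac{\omega^2(\omega^2-3)}{\omega^4-\omega^2+3}<\omega\big(\lambda(\omega)-\omega\big)<\tfrac{\omega^2}{\omega^2-1}$, and both sides converge to $1$, so the claim follows by squeezing.

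The only mildly delicate point is the bookkeeping in the iterated integration by parts: one must track the sign of each remainder integral carefully so that the two-sided Mills inequality is valid for \emph{all} $\omega>1$ rather than merely asymptotically. Everything else is elementary, and the four estimates above assemble directly into the statements of Lemma~\ref{lemma_lambda}.
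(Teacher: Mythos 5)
Your proof is correct, and it takes a genuinely different route from the paper's for Properties 1, 3 and 4 (Property 2 is the same direct Gaussian-tail estimate in both). For Property 1, the paper introduces the auxiliary function $f(\omega)=p_{1}(\omega)+\omega\Phi_{1}(\omega)$, shows $f'(\omega)=\Phi_{1}(\omega)>0$ and $\lim_{\omega\to-\infty}f(\omega)=0$, and concludes $f>0$; your identity $\lambda'=\lambda(\lambda-\omega)$ together with the one-line Mills bound $1-\Phi_1(\omega)<p_1(\omega)/\omega$ establishes exactly the same inequality $p_1(\omega)>\omega\bigl(1-\Phi_1(\omega)\bigr)$, but packages it so that the fact $\lambda(\omega)>\omega$ is available for reuse in Properties 3 and 4. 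For the right-tail limits, the paper rewrites $\lambda(\omega)-\omega$ and $\omega(\lambda(\omega)-\omega)$ as ratios and applies L'H\^opital's rule repeatedly, which is shorter but purely asymptotic; you instead derive the two-sided nonasymptotic Mills expansion
\[
\Bigl(\tfrac{1}{\omega}-\tfrac{1}{\omega^3}\Bigr)p_1(\omega)<1-\Phi_1(\omega)<\Bigl(\tfrac{1}{\omega}-\tfrac{1}{\omega^3}+\tfrac{3}{\omega^5}\Bigr)p_1(\omega)
\]
by iterated integration by parts and squeeze. Your route costs a bit of sign bookkeeping in the remainder integrals (which you have done correctly: the remainders $3\int_\omega^\infty p_1(t)t^{-4}\,dt$ and $15\int_\omega^\infty p_1(t)t^{-6}\,dt$ do carry the signs you claim), but it buys explicit finite-$\omega$ bounds on $\lambda(\omega)-\omega$, which would actually be more useful than the paper's limits if one wanted quantitative control of $C_{\text{down}}$ and $C_{\text{exp}}$ in Appendix C.1 rather than only their asymptotic scaling in $B/\sigma$.
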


\begin{proof}
			We prove the Lemma \ref{lemma_lambda} sequentially:
			\begin{enumerate}
				\item We have:
				\begin{equation}
				\begin{aligned}
				\lambda'(\omega)=&\frac{p_{1}^2(-\omega)-p'_{1}(-\omega)\Phi_{1}(-\omega)}{\Phi_{1}(-\omega)^2}\\
				=&\frac{p_{1}^2(-\omega)-\omega p_{1}(-\omega)\Phi_{1}(-\omega)}{\Phi_{1}(-\omega)^2}\\
				=&\frac{p_{1}(-\omega)\left(p_{1}(-\omega)-\omega\Phi_{1}(-\omega)\right)}{\Phi_{1}(-\omega)^2}.
				\end{aligned}
				\end{equation}
				Therefore, it is equivalent to prove that $p_{1}(-\omega)-\omega\Phi_{1}(-\omega)>0$.
				
				Suppose $f(\omega)=p_{1}(\omega)+\omega\Phi_{1}(\omega)$. We now take its derivatives as follows:
				\begin{equation}
				\begin{aligned}
				f'(\omega)&=p'_{1}(\omega)+(\Phi_{1}(\omega)+\omega\cdot p_{1}(\omega))\\
				&=(-\omega)p_{1}(\omega)+\Phi_{1}(\omega)+\omega\cdot p_{1}(\omega)\\
				&=\Phi_{1}(\omega)\\
				&>0
				\end{aligned}
				\label{f_prime_>_0}
				\end{equation}
				
				Therefore, we know that $f(\omega)$ monotonically increases in $\mathbb{R}$. Additionally, since we have:
				\begin{equation}
				\begin{aligned}
				&\lim_{\omega\rightarrow-\infty}f(\omega)\\
				=&\lim_{\omega\rightarrow-\infty}p_{1}(\omega)+\lim_{\omega\rightarrow-\infty}\omega\Phi_{1}(\omega)\\
				=&0+\lim_{\omega\rightarrow-\infty}\frac{1}{\sigma^2}\cdot\frac{\Phi_{1}(\omega)}{1/\omega}\\
				=&\lim_{\omega\rightarrow-\infty}\cdot\frac{p_{1}(\omega)}{-1/\omega^2}\\
				=&\lim_{\omega\rightarrow-\infty}\cdot\left(-\frac{1}{\sqrt{2\pi}}\cdot\frac{\omega^2}{\exp\{\frac{\omega^2}{2}\}}\right)\\
				=&0
				\end{aligned}
				\label{lim_f_>_0}
				\end{equation}
				
				Therefore, we know that $f(\omega)>0$, $\forall \omega\in\mathbb{R}$, and as a result, $\lambda'(\omega)>0$.
				\item We have:
				\begin{equation}\label{poly_omega_times_lambda_lim_0}
				\begin{aligned}
				&\lim_{\omega\rightarrow-\infty}\omega^k\lambda(\omega)\\
				=&\lim_{\omega\rightarrow-\infty}\omega^k\frac{p_{1}(-\omega)}{\Phi_{1}(-\omega)}\\
				=&\frac{\mathop{\lim}\limits_{\omega\rightarrow-\infty}\omega^kp_{1}(-\omega)}{\mathop{\lim}\limits_{\omega\rightarrow-\infty}\Phi_{1}(-\omega)}\\
				=&\frac{\mathop{\lim}\limits_{\omega\rightarrow-\infty}\omega^k(\frac{1}{\sqrt{2\pi}}\exp\{-\frac{\omega^2}{2}\})}{1}\\
				=&\frac{0}{1}\\
				=&0.\\
				\end{aligned}
				\end{equation}

				\item We only need to prove that
				\begin{equation*}
				\lim_{\omega\rightarrow+\infty}\lambda(\omega)-\omega=0.
				\end{equation*}
				Actually, we have:
				\begin{equation}
				\begin{aligned}
				&\lim_{\omega\rightarrow+\infty}\lambda(\omega)-\omega\\
				=&\lim_{\omega\rightarrow+\infty}\frac{p_{1}(-\omega)-\omega\Phi_{1}(-\omega)}{\Phi_{1}(-\omega)}\\
				= &\lim_{\omega\rightarrow-\infty}\frac{p_{1}(\omega)+\omega\Phi_{1}(\omega)}{\Phi_{1}(\omega)}\\
				\explainup{=}{\text{L'Hospital's rule}}&\lim_{\omega\rightarrow-\infty}\frac{(-\omega)p_{1}(\omega)+\Phi_{1}(\omega)+\omega p_{1}(\omega)}{p_{1}(\omega)}\\
				=&\lim_{\omega\rightarrow-\infty}\frac{\Phi_{1}(\omega)}{p_{1}(\omega)}\\
				=&\lim_{\omega\rightarrow-\infty}\frac{p_{1}(\omega)}{(-\omega)p_{1}(\omega)}\\
				=&0
				\end{aligned}
				\end{equation}
				\item 
				\begin{equation}
				\begin{aligned}
				&\lim_{\omega\rightarrow+\infty}\omega(\lambda(\omega)-\omega)\\
				=&\lim_{\omega\rightarrow+\infty}\frac{\omega\left(p_{1}(-\omega)-\omega\Phi_{1}(-\omega)\right)}{\Phi_{1}(-\omega)}\\
				=&\lim_{\omega\rightarrow-\infty}\frac{-\omega p_{1}(\omega)-\omega^2\Phi_{1}(\omega)}{\Phi_{1}(\omega)}\\
				\explainup{=}{\text{L'Hospital's rule}}&\lim_{\omega\rightarrow-\infty}\frac{-p_{1}(\omega)-\omega(-\omega)p_{1}(\omega)-\omega^2p_{1}(\omega)-2\omega\cdot\Phi_{1}(\omega)}{p_{1}(\omega)}\\
				=&-1-2\lim_{\omega\rightarrow-\infty}\frac{\omega\Phi_{1}(\omega)}{p_{1}(\omega)}\\
				=&-1+2\lim_{\omega\rightarrow+\infty}\frac{1}{\frac{\lambda(\omega)}{\omega}}\\
				=&-1+2\\
				=&1.
				\end{aligned}
				\end{equation}
				
			\end{enumerate}
			Thus the lemma holds.
			
		\end{proof}

\subsection{Algorithmic Design}
\subsubsection{Probit and Logistic Regressions} 
\label{appendix_probit_regression}

A probit/logit model is described as follows: a Boolean random variable $Y$ satisfies the following probabilistic distribution: $\P[Y=1|X]=F(X^{\top}\beta)$, where $X\in\mathbb{R}$ is a random vector, $\beta\in\mathbb{R}$ is a parameter, and $F$ is the cumulative distribution function (CDF) of a (standard) Gaussian/logistic distribution. In our problem, we may treat $\mathbbm{1}_{t}$ as $Y$, $[{x_{t}}^{\top}, v_{t}]^{\top}$ as $X$ and $[{\theta^{*}}^{\top}, -1]^{\top}$ as $\beta$, which exactly fits this model if we assume the noise as Gaussian or logistic. Therefore,  $\hat{\theta}_k=\mathop{\arg\min}_{\theta}\hat{L}_{k}(\theta)$ can be solved via the highly efficient implementation of generalized linear models, e.g., GLMnet, rather than resorting to generic tools for convex programming. As a heuristic, we could leverage the vast body of statistical work on probit or logit models and adopt a fully Bayesian approach that jointly estimates $\theta$ and hyper-parameters of $F$. This would make the algorithm more practical by eliminating the need to choose the hyper-parameters when running this algorithm.

\subsubsection{Advantages of EMLP over ONSP.}

For the stochastic setting, we specifically propose EMLP even though ONSP also works. This is because EMLP only ``switch'' the pricing policy $\hat{\theta}$ for $\log{T}$ times. This makes it appealing in many applications (especially for brick-and-mortar sales) where the number of policy updates is a bottleneck. In fact, the iterations within one epoch can be carried out entirely in parallel.

\subsubsection{Agnostic Dynamic Pricing: Explorations versus Exploitation}\label{appendix_expl_vs_expl}
At the moment, the proposed algorithm relies on the assumption of a linear valuation function (see Appendix \ref{appendix_subsec_problem_modeling} for more discussion on problem modeling). It will be interesting to investigate the settings of model-misspecified cases and the full agnostic settings. The key would be to exploit the structural feedback in model-free policy-evaluation methods such as importance sampling. The main reason why we do not explore lies in the noisy model: essentially we are implicitly exploring a higher (permitted) price using the naturally occurring noise in the data. In comparison, there is another problem setting named ``adversarial irrationality'' where some of the customers will valuate the product adaptively and adversarially\footnote{An adaptive adversary may take actions adversarially in respond to the environmental changes. In comparison, what we allow for the ``adversarial features'' is actually chosen by an oblivious adversary before the interactions start.}. Existing work \citet{krishnamurthy2020contextual} adopts this setting and shows a linear regret dependence on the number of irrational customers, but they consider a different loss function (See Related Works Section).

\subsection{Problem Modeling}\label{appendix_subsec_problem_modeling}

\subsubsection{Noise Distributions}\label{appendix_noise}

In this work, we have made four assumptions on the noise distribution: strict log-concavity, $2^{\text{nd}}-\text{order smooth}$, known, and i.i.d.. Here we explain each of them specifically.
\begin{itemize}
	\item The assumption of knowing the exact $F$ is critical to the regret bound: If we have this knowledge, then we achieve $O(\log{T})$ even with adversarial features; otherwise, an $\Omega(\sqrt{T})$ regret is unavoidable even with stochastic features.
	\item The strictly log-concave distribution family includes Gaussian and logistic distributions as two common noises. In comparison, \citet{javanmard2019dynamic} assumes log-concavity that further covers Laplacian, exponential and uniform distributions. \citet{javanmard2019dynamic} also considers the cases when (1) the noise distribution is unknown but log-concave, and (2) the noise distribution is zero-mean and bounded by support of $[-\delta, \delta]$. For case (1), they propose an algorithm with regret $O(\sqrt{T})$ and meanwhile prove the same lower bound. For case (2), they propose an algorithm with linear regret.
	\item The assumption that $F$ is $2^{\text{nd}}-$order smooth is also assumed by \citet{javanmard2019dynamic} by taking derivatives $f'(v)$ and applying its upper bound in the proof. Therefore, we are still unaware of the regret bound if the noise distribution is discrete, where a lower bound of $\Omega(\sqrt{T})$ can be directly applied from \citet{kleinberg2003value}.
	\item We even assume that the noise is identically distributed. However, the noise would vary among different people. The same problem happens on the parameter $\theta^{*}$: can we assume different people sharing the same evaluation parameter? We may interpret it in the following two ways, but there are still flaws: (1) the ``customer'' can be the public, i.e. their performance is quite stable in general; or (2) the customer can be the same one over the whole time series. However, the former explanation cannot match the assumption that we just sell one product at each time, and the latter one would definitely undermine the independent assumption of the noise: people would do ``human learning'' and might gradually reduce their noise of making decisions. To this extent, it is closer to the fact if we assume noises as martingales. This assumption has been stated in \citet{qiang2016dynamic}. 
	
\end{itemize}

\subsubsection{Linear Valuations on Features}\label{appendix_linear}

There exist many products whose prices are not linearly dependent on features. One famous instance is a diamond: a kilogram of diamond powder is very cheap because it can be produced artificially, but a single 5-carat (or 1 gram) diamond might cost more than \$100,000. This is because of an intrinsic non-linear property of diamond: large ones are rare and cannot be (at least easily) compound from smaller ones. Another example lies in electricity pricing \citep{joskow2012dynamic}, where the more you consume, the higher unit price you suffer. On the contrary, commodities tend to be cheaper than retail prices. These are both consequences of marginal costs: a large volume consuming of electricity may cause extra maintenance and increase the cost, and a large amount of purchasing would release the storage and thus reduce their costs. In a word, our problem setting might not be suitable for those large-enough features, and thus an upper bound of $x^{\top}\theta$ becomes a necessity.

\subsection{\emph{Ex Ante} v.s. \emph{Ex Post} Regrets}
In this work, we considered the \emph{ex ante} regret $Reg_{ea} = \sum_{t=1}^{T}\max_{\theta}\E[v^{\theta}_t\cdot\ind(v^{\theta}_t \leq w_t)] - \E[v_t\cdot\ind(v_t \leq w_t)]$, where $v^{\theta}_t = J(x_t^{\top}\theta)$ is the greedy price with parameter $\theta$ and $w_t = x_t^{\top}\theta^{*}+N_t$ is the realized random valuation. The \emph{ex post} definition of the cumulative regret, i.e., $Reg_{ep} = \max_{\theta}\sum_{t=1}^{T}v^{\theta}_t\ind(v^{\theta}_t\leq w_t) - v_t\ind(v_t\leq w_t)$ makes sense, too. Note that we can decompose $\E\left[Reg_{ep}\right] =  Reg_{ea}  +  \E[ \max_{\theta}\sum_{t=1}^{T}v^{\theta}_t\ind(v^{\theta}_t\leq w_t)  - \sum_{t=1}^{T}v^{\theta^*}_t\ind(v^{\theta^*}_t\leq w_t) ].$ While it might be the case that the second term is $\Omega(\sqrt{dT})$ as the reviewer pointed out, it is a constant independent of the algorithm. For this reason, we believe using $Reg_{ea}$ is without loss of generality, and it reveals more nuanced performance differences of different algorithms.

For an \emph{ex post} \emph{dynamic} regret, i.e., $Reg_{d} = \sum_{t=1}^{T}w_t - v_t\cdot\ind(v_t \leq w_t)$, it is argued in \citet{cohen2020feature_journal} that any policy must suffer an expected regret of $\Omega(T)$ (even if $\theta^{*}$ is known). We may also present a good example lies in $N_t\sim\cN(0,1), x_t^{\top}\theta^{*}=\sqrt{\frac{\pi}{2}}$ where the optimal price is $\sqrt{\frac{\pi}{2}}$ as well but the probability of acceptance is only 1/2, and this leads to a constant \emph{per-step}  regret of $\frac{1}{2}\sqrt{\frac{\pi}{2}}$.

\subsection{Ethic Issues}\label{appendix_ethic}

A field of study lies in ``personalized dynamic pricing'' \citep{aydin2009personalized, chen2018primal}, where a firm makes use of information of individual customers and sets a unique price for each of them. This has been frequently applied in airline pricing \citep{kramer2018airline}. However, this causes first-order pricing discrimination. Even though this ``discrimination'' is not necessarily immoral, it must be embarrassing if we are witted proposing the same product with different prices towards different customers. For example, if we know the coming customer is rich enough and is not as sensitive towards a price (e.g., he/she has a variance larger than other customers), then we are probably raising the price without being too risky. Or if the customer is used to purchase goods from ours, then he or she might have a higher expectation on our products (e.g., he/she has a $\theta=a\theta^{*}, a>1$), and we might take advantage and propose a higher price than others. These cases would not happen in an auction-based situation (such as a live sale), but might frequently happen in a more secret place, for instance, a customized travel plan.
\end{appendices}

\end{document}